\newcounter{ass_counter}
\newcounter{thm_counter}
\newcounter{lem_counter}
\newcounter{remark_counter}
\newtheorem{theorem}[thm_counter]{Theorem}
\newtheorem{lemma}[lem_counter]{Lemma}
\newtheorem{assumption}[ass_counter]{Assumption}
\newtheorem{remark}[remark_counter]{Remark}
\newtheorem{definition}{Definition}
\title{STL-SGD: Speeding Up Local SGD with Stagewise Communication Period}
\author{
    Shuheng Shen\textsuperscript{\rm 1, 3}\footnote{Equal contribution.},
    Yifei Cheng\textsuperscript{\rm 2}\footnotemark[1],
    Jingchang Liu\textsuperscript{\rm 5},
    Linli Xu\textsuperscript{\rm 1, 4}\thanks{The corresponding author.}
    \\
}
\begin{document}

\maketitle

\begin{abstract}
    Distributed parallel stochastic gradient descent algorithms are workhorses for large scale machine learning tasks. 
    Among them, local stochastic gradient descent (Local SGD) has attracted significant attention due to its low communication complexity. 
    Previous studies prove that the communication complexity of Local SGD with a fixed or an adaptive communication period  is in the order of $O (N^{\frac{3}{2}} T^{\frac{1}{2}})$ and $O (N^{\frac{3}{4}} T^{\frac{3}{4}})$ when the data distributions on clients are identical (IID) or otherwise (Non-IID), where $N$ is the number of clients and $T$ is the number of iterations.
    In this paper, to accelerate the convergence by reducing the communication complexity,
    we propose \textit{ST}agewise \textit{L}ocal \textit{SGD} (STL-SGD), which increases the communication period gradually along with decreasing learning rate.
    We prove that STL-SGD can keep the same convergence rate and linear speedup as mini-batch SGD.
    In addition, as the benefit of increasing the communication period, when the objective is strongly convex or satisfies the Polyak-\L ojasiewicz condition, the communication complexity of STL-SGD is $O (N \log{T})$ and $O (N^{\frac{1}{2}} T^{\frac{1}{2}})$ for the IID case and the Non-IID case respectively, achieving significant improvements over Local SGD. 
    Experiments on both convex and non-convex problems demonstrate the superior performance of STL-SGD.
  \end{abstract}

  \section{Introduction}
  \label{introduction}
  We consider the task of distributed stochastic optimization, which employs $N$ clients to solve the following empirical risk minimization problem:
  \begin{equation} \label{basic_object}
  \min_{x \in R^d} f(x) := \frac{1}{N} \sum_{i=1}^N f_i(x),
  \end{equation}
  where $f_i(x) := \frac{1}{|\mathcal{D}_i|} \sum_{\xi \in \mathcal{D}_i} f(x, \xi)$ is the local objective of client $i$. 
  $\mathcal{D}_i$'s denote the data distributions among clients, which can be possibly different.
  Specifically, the scenario where $\mathcal{D}_i$'s are identical corresponds to a central problem of traditional distributed optimization. 
  When they are not identical, 
  (\ref{basic_object}) captures the federated learning setting~\cite{mcmahan2017communication,kairouz2019advances,lyu2020threats}, where the local data in each mobile client is independent and private, resulting in high variance of the data distributions.
  
  As representatives of distributed stochastic optimization methods, traditional Synchronous SGD (SyncSGD)~\cite{dekel2012optimal,ghadimi2013stochastic} and Asynchronous SGD (AsyncSGD)~\cite{agarwal2011distributed,lian2015asynchronous} achieve linear speedup theoretically with respect to the number of clients.  
  Nevertheless, for both SyncSGD and AsyncSGD, communication needs to be conducted at each iteration and $O (d)$ parameters are communicated each time, incurring significant communication cost which restricts the performance in terms of time speedup.
  To address this dilemma, distributed algorithms with low communication cost, either by decreasing the communication frequency~\cite{wang2018cooperative,stich2018local,yu2019parallel,shen2019faster} 
  or by reducing the communication bits in each round~\cite{alistarh2017qsgd,stich2018sparsified,tang2019doublesqueeze}, become widely applied for large scale training.
  
  Among them, Local SGD~\cite{stich2018local} (also called FedAvg~\cite{mcmahan2017communication}), which conducts communication every $k$ iterations, enjoys excellent theoretical and practical performance~\cite{lin2018don,stich2018local}.  
  In the IID case and the Non-IID case, the communication complexity of Local SGD is respectively proved to be $O(N^{\frac{3}{2}} T^{\frac{1}{2}})$~\cite{wang2018cooperative,stich2018local} and $O(N^{\frac{3}{4}} T^{\frac{3}{4}})$~\cite{yu2019parallel,shen2019faster}, while the linear speedup is maintained.
  When the objective satisfies the Polyak-\L ojasiewicz condition~\cite{karimi2016linear}, 
  \cite{haddadpour2019local} provides
  a tighter theoretical analysis which shows that the communication complexity of Local SGD is $O(N^{\frac{1}{3}} T^{\frac{1}{3}})$. 
  In terms of the communication period $k$, most previous studies of Local SGD 
  choose to fix it through the iterations. 
  In contrast, 
  \cite{wang2018adaptive} suggests
  using an 
  adaptively decreasing $k$ when the learning rate is fixed, and 
  \cite{haddadpour2019local} proposes
  an adaptively increasing $k$ as the iterations go on. 
  Nevertheless, none of them achieve a communication complexity lower than $O(N^{\frac{1}{3}} T^{\frac{1}{3}})$. 
  For strongly convex objectives, if a fixed learning rate is adopted, Local SGD with fixed communication period is proved to achieve $O(N\log{(NT)})$~\cite{stich2019errorfeedback,bayoumi2020tighter} communication complexity. However, the fixed learning rate results in suboptimal convergence rate $O(\frac{\log{T}}{NT})$.
  It remains an open problem as to whether the communication complexity can be further reduced with a varying $k$ 
  when the optimal convergence rate $O(\frac{1}{NT})$ is maintained,
  to which this paper provides an affirmative solution.

  \textbf{Main Contributions.} We propose Stagewise Local SGD (STL-SGD), which adopts a stagewisely increasing communication period 
  , and make the following contributions:
  \begin{itemize}
  \item We first prove that Local SGD achieves $O(\frac{1}{\sqrt{NT}})$ convergence when the objective is general convex. A novel insight is that, the convergence rate $O(\frac{1}{\sqrt{NT}})$ can be attained when setting $k$ to be $O(\frac{1}{\eta N})$ and $O(\frac{1}{\sqrt{\eta N}})$ in the IID case and the Non-IID case respectively, where $\eta$ is the learning rate. This indicates that the communication period is negatively relevant to the learning rate.
  \item Taking Local SGD as a subalgorithm and tuning its parameters stagewisely, we propose $\text{STL-SGD}^{sc}$ for 
  strongly
  convex problems, which geometrically increases the communication period along with decreasing learning rate. 
  We prove that $\text{STL-SGD}^{sc}$ achieves 
  $O(\frac{1}{NT})$ convergence rate
  with communication complexities $O(N \log{T})$ and $O (N^{\frac{1}{2}} T^{\frac{1}{2}})$ for the IID case and the Non-IID case, respectively.
  \item For non-convex problems, we propose the $\text{STL-SGD}^{nc}$ algorithm, which uses Local SGD to optimize a regularized objective $f_{x_s}^{\gamma}(\cdot)$ inexactly at each stage. When the Polyak-\L ojasiewicz condition holds, the same communication complexity as in 
  strongly
  convex problems is achieved. For general non-convex problems, we prove that $\text{STL-SGD}^{nc}$ achieves the linear speedup with communication complexities $O(N^{\frac{3}{2}} T^{\frac{1}{2}})$ and $O (N^{\frac{3}{4}} T^{\frac{3}{4}})$ for the IID case and the Non-IID case, respectively.
  \end{itemize} 
  
  \begin{table*}[!h]
    \caption{A comparison of the results in this paper and previous state-of-the-art results of Local SGD and its variants. 
    Regarding orders of convergence rate and communication complexity, we highlight the dependency on $T$ (the number of iterations), $N$ (the number of clients) and $k$ (communication period). Previous results may depend on some extra assumptions, which include: (1) an upper bound for gradient, (2) an upper bound for the gradient variance among clients and (3) an upper bound for the gradient diversity, which are shown in the last column.}
    \label{comparison}
    \vskip 0.05in
    \begin{center}
    {\small
    \begin{tabular}{l l l l l l}
    \toprule
    \makecell[c]{Algorithms} & \makecell[c]{Objectives} & \makecell[c]{Convergence \\Rate} & \makecell[c]{Communication \\Complexity} & \makecell[c]{Data \\ Distributions} & \makecell[c]{Extra \\Assumptions}\\ 
    \midrule
    \makecell[c]{Local SGD~\cite{stich2018local}} & 
    \makecell[c]{Strongly Convex} 
    & \makecell[c]{$O(\frac{1}{NT})$} & \makecell[c]{$O(N^{\frac{1}{2}} T^{\frac{1}{2}})$} & \makecell[c]{IID} & \makecell[c]{(1)} \\
    \makecell[c]{Local SGD~\cite{stich2019errorfeedback}}\footnotemark[1] & 
    \makecell[c]{Strongly Convex} 
    & \makecell[c]{$O(\frac{\log{T}}{NT})$} & \makecell[c]{$O(N\log{(NT)})$} & \makecell[c]{IID} & \makecell[c]{No} \\
    \makecell[c]{\textbf{STL-SGD}} & 
    \makecell[c]{\textbf{Strongly Convex}} 
    & \makecell[c]{\bm{$O(\frac{1}{NT})$}} & \makecell[c]{\bm{$O(N \log{T})$}} & \makecell[c]{\textbf{IID}} & \makecell[c]{\textbf{No}} \\ 
    \midrule
    \makecell[c]{Local SGD~\cite{li2020on}}& 
    \makecell[c]{Strongly Convex} 
    & \makecell[c]{$O(\frac{k^2}{NT})$} & \makecell[c]{$O(T)$} & \makecell[c]{Non-IID} & \makecell[c]{(1)} \\
    \makecell[c]{Local SGD~\cite{karimireddy2019scaffold}}\footnotemark[1] & 
    \makecell[c]{Strongly Convex} 
    & \makecell[c]{$O(\frac{\log{T}}{NT})$} & \makecell[c]{$O(N^\frac{1}{2} T^\frac{1}{2})$} & \makecell[c]{Non-IID} & \makecell[c]{No} \\
    \makecell[c]{SCAFFOLD~\cite{karimireddy2019scaffold}}\footnotemark[1]& 
    \makecell[c]{Strongly Convex} 
    & \makecell[c]{$O(\frac{\log{T}}{NT})$} & \makecell[c]{$O(\log{(NT)})$} & \makecell[c]{Non-IID} & \makecell[c]{No} \\
    \makecell[c]{\textbf{STL-SGD}} &
    \makecell[c]{\textbf{Strongly Convex}} 
    & \makecell[c]{\bm{$O(\frac{1}{NT})$}} & \makecell[c]{\bm{$O(N^{\frac{1}{2}} T^{\frac{1}{2}})$}} & \makecell[c]{\textbf{Non-IID}} & \makecell[c]{\textbf{No}}\\
    \midrule
    \makecell[c]{Local SGD~\cite{haddadpour2019local}}\footnotemark[2]& 
    \makecell[c]{Non-Convex+PL} 
    & \makecell[c]{$O(\frac{1}{NT})$} & \makecell[c]{$O(N^{\frac{1}{3}} T^{\frac{1}{3}})$} & \makecell[c]{IID} & \makecell[c]{No} \\
    \makecell[c]{\textbf{STL-SGD}} & 
    \makecell[c]{\textbf{Non-Convex+PL}} 
    & \makecell[c]{\bm{$O(\frac{1}{NT})$}} & \makecell[c]{\bm{$O(N \log{T})$}} & \makecell[c]{\textbf{IID}} & \makecell[c]{\textbf{No}} \\
    \midrule
    \makecell[c]{\textbf{STL-SGD}} & 
    \makecell[c]{\textbf{Non-Convex+PL}} 
    & \makecell[c]{\bm{$O(\frac{1}{NT})$}} & \makecell[c]{\bm{$O(N^{\frac{1}{2}} T^{\frac{1}{2}})$}} & \makecell[c]{\textbf{Non-IID}} & \makecell[c]{\textbf{No}} \\
    \midrule
    \makecell[c]{Local SGD~\cite{wang2018cooperative}} & 
    \makecell[c]{Non-Convex} 
    & \makecell[c]{$O(\frac{1}{\sqrt{NT}})$} & \makecell[c]{$O(N^{\frac{3}{2}} T^{\frac{1}{2}})$} & \makecell[c]{IID} & \makecell[c]{(1)} \\
    \makecell[c]{\textbf{STL-SGD}} & 
    \makecell[c]{\textbf{Non-Convex}} 
    & \makecell[c]{\bm{$O(\frac{1}{\sqrt{NT}})$}} & \makecell[c]{\bm{$O(N^{\frac{3}{2}} T^{\frac{1}{2}})$}} & \makecell[c]{\textbf{IID}} & \makecell[c]{\textbf{No}} \\
    \midrule
    \makecell[c]{Local SGD~\cite{shen2019faster}} & 
    \makecell[c]{Non-Convex} 
    & \makecell[c]{$O(\frac{1}{\sqrt{NT}})$} & \makecell[c]{$O(N^{\frac{3}{4}} T^{\frac{3}{4}})$} & \makecell[c]{Non-IID} & \makecell[c]{(2)} \\
    \makecell[c]{Local SGD~\cite{haddadpour2019convergence}} & 
    \makecell[c]{Non-Convex} 
    & \makecell[c]{$O(\frac{1}{\sqrt{NT}})$} & \makecell[c]{$O(N^{\frac{3}{2}} T^{\frac{1}{2}})$} & \makecell[c]{Non-IID} & \makecell[c]{(3)} \\
    \makecell[c]{SCAFFOLD~\cite{karimireddy2019scaffold}} & 
    \makecell[c]{Non-Convex} 
    & \makecell[c]{$O(\frac{1}{\sqrt{NT}})$} & \makecell[c]{$O(N^{\frac{1}{2}} T^{\frac{1}{2}})$} & \makecell[c]{Non-IID} & \makecell[c]{No} \\
    \makecell[c]{\textbf{STL-SGD}} & 
    \makecell[c]{\textbf{Non-Convex}} 
    & \makecell[c]{\bm{$O(\frac{1}{\sqrt{NT}})$}} & \makecell[c]{\bm{$O(N^{\frac{3}{4}} T^{\frac{3}{4}})$}} & \makecell[c]{\textbf{Non-IID}} & \makecell[c]{\textbf{No}} \\
    \bottomrule
    \end{tabular}
    }
    \end{center}
    \vskip -0.1in
    \end{table*}

  \section{Related Works}
  \label{relatedwork}
  \paragraph{
  Local SGD.}
  When the data distributions on clients are identical, 
  Local SGD is proved to achieve $O(\frac{1}{NT})$ convergence for strongly convex objectives~\cite{stich2018local} and $O(\frac{1}{\sqrt{NT}})$ convergence for non-convex objectives~\cite{wang2018cooperative} when the communication period $k$ satisfies $k \leq O(T^{\frac{1}{2}} / N^{\frac{3}{2}})$. 
  As demonstrated in these results, Local SGD achieves a linear speedup with the communication complexity $O(N^{\frac{3}{2}} T^{\frac{1}{2}})$ for both strongly convex and non-convex objectives in the IID case. 
  In addition, \cite{haddadpour2019local} justifies that $O(N^{\frac{1}{3}} T^{\frac{1}{3}})$ rounds of communication are sufficient to achieve 
  $O(\frac{1}{NT})$ convergence
  for objectives which satisfy the Polyak-\L ojasiewicz condition.
  On the other hand, for the Non-IID case, 
  Local SGD is proved with a
  $O(1/\sqrt{NT})$ convergence rate under 
  a communication complexity of $O(N^{\frac{3}{4}} T^{\frac{3}{4}})$ for non-convex objectives~\cite{yu2019parallel,shen2019faster}.
  Meanwhile, for strongly convex objectives, 
  a 
  suboptimal
  convergence rate of $O(\frac{k^2}{\mu NT})$~\cite{li2020on} 
  is
  obtained.
  Beyond that, when a small fixed learning rate is adopted, \cite{bayoumi2020tighter} and \cite{karimireddy2019scaffold} prove that the communication complexity of Local SGD is $O(N \log(NT))$ and $O(N^\frac{1}{2} T^\frac{1}{2})$ for the IID case and the Non-IID case respectively, at the cost of a suboptimal convergence rate $O(\frac{\log{T}}{NT})$. 
  For general non-convex objectives, \cite{haddadpour2019convergence} proves a lower communication complexity of $O(N^{\frac{3}{2}} T^{\frac{1}{2}})$ for the Non-IID case under the assumption of bounded gradient diversity.
  From the practical view, 
  \cite{zhang2016parallel} suggests
  to communicate more frequently in the beginning of the training, and \cite{haddadpour2019local} verifies
  that using a geometrically increasing period does not harm the convergence notably. 
  
  \paragraph{Stagewise Training.} For training both strongly convex and non-convex objectives, stagewisely decreasing the learning rate is widely adopted.
  Epoch-SGD~\cite{hazan2014beyond} and ASSG~\cite{xu2017stochastic} use SGD as their subalgorithm and geometrically decrease the learning rate stage by stage. 
  They are proved to achieve the optimal $O (1/T)$ convergence for stochastic strongly convex optimization. 
  For training neural networks, stagewisely decreasing the learning rate~\cite{krizhevsky2012imagenet,he2016deep} is a very important trick.
  From a theoretical aspect, stagewise SGD is proved with $O (1/\sqrt{T})$ convergence for both general and composite non-convex objectives~\cite{allen2018make,chen2018universal,davis2019proximally}, by adopting SGD to optimize a regularized objective at each stage and decreasing the learning rate linearly stage by stage.
  Stagewise training is also verified to achieve better testing error than general SGD~\cite{yuan2019stagewise}.

  \paragraph{Large Batch SGD (LB-SGD).} 
  SyncSGD with extremely large batch is proved to achieve a linear speedup with respect to the batch size~\cite{stich2019errorfeedback}. Nevertheless, \cite{jain2016parallelizing} shows that increasing the batch size does not help when the bias dominates the variance. It is also observed from practice that LB-SGD leads to a poor generalization~\cite{keskar2016largebatch,golmant2018computational,yin2017gradient}.
  \cite{yu2019computation} proposes CR-PSGD which increases the batch size geometrically step by step and proves that CR-PSGD achieves a linear speedup with $O(\log{T})$ communication complexity. However, after a large number of iterations, CR-PSGD essentially becomes GD and loses the benefit of SGD.
  
  \paragraph{Local SGD with Variance Reduction. } Recently, several techniques are proposed to reduce the communication complexity of Local SGD in the Non-IID case. 
  \cite{haddadpour2019trading} shows
  that using redundant data among clients yields lower communication complexity. One variant of Local SGD called VRL-SGD~\cite{liang2019variance} incorporates the variance reduction technique and is proved to achieve a $O(N^{\frac{3}{2}} T^{\frac{1}{2}})$ communication complexity for non-convex objectives. 
  SCAFFOLD~\cite{karimireddy2019scaffold} extends VRL-SGD by involving 
  two separate learning rates, and is proved to achieve  $O(\log{(NT)})$ and $O(N^{\frac{1}{2}} T^{\frac{1}{2}})$ communication complexities for strongly convex objectives and non-convex objectives respectively. 
  As SCAFFOLD adopts a small fixed learning rate, its convergence rate for strongly convex objectives is $O(\frac{\log{T}}{NT})$.
  Nevertheless, these methods are orthogonal to our study. Combining STL-SGD and variance reduction to get better performance for the Non-IID case exceeds the scope 
  of this paper.  
  
  Table~\ref{comparison} summarizes the comparison of Local SGD and its state-of-the-art extensions with STL-SGD. 
  For both strongly convex objectives and non-convex objectives which satisfy the PL condition, STL-SGD achieves the state-of-the-art communication complexity 
  while attaining the optimal convergence rate of $O(\frac{1}{NT})$.
  It is worth mentioning that Local SGD with momentum~\cite{yu2019linear1} or adaptive learning rate~\cite{reddi2020adaptive} are orthogonal to our study. 

\footnotetext[1]{Although these studies prove lower communication complexity, a suboptimal $O(\frac{\log{T}}{NT})$ convergence rate is proved due to the small fixed learning rate.}
\footnotetext[2]{The adaptive variant of Local SGD proposed in \cite{haddadpour2019local} has the same order of communication complexity as Local SGD.}

  \section{Preliminaries}
  \label{preliminary}
  \subsection{Notations and Definitions}
  Throughout the paper, we let $\| \cdot \|$ indicate the $\ell_2$ norm of a vector and $\langle \cdot, \cdot \rangle$ indicate the inner product of two vectors. The set $\{1,2, \cdots, n\}$ is denoted as $[n]$. We use $x^*$ to represent the optimal solution of (\ref{basic_object}).
  $\nabla f$ represents the gradient of $f$. 
  $\mathbb{E}$ indicates a full expectation with respect to all the randomness in the algorithm (the stochastic gradients sampled in all iterations and the randomness in return). 
  
  The data distributions on different clients may not be identical. 
  To quantify the difference of distributions, we define 
  $\zeta_f^* := \frac{1}{N} \sum_{i=1}^N \|\nabla f_i(x^*)\|^2 = \frac{1}{N} \sum_{i=1}^N \|\nabla f_i(x^*) - \nabla f(x^*)\|^2$, which represents the variance of gradients among clients 
  at 
  $x^*$.
  Some literatures assume that the variance of gradients among clients is bounded by a constant $\zeta^2$~\cite{shen2019faster} or the norm of stochastic gradients is bounded by a constant $G^2$~\cite{yu2019parallel,li2020on}. 
  Note that both $\zeta^2$ and $G^2$ are larger than $\zeta_f^*$. When the data distributions are identical, we have $ \| \nabla f_i(x^*) \|^2 = 0$, thus it holds that $\zeta_f^* = 0$.
  
  All proofs are deffered to the appendix.
  To state the convergence of algorithms for solving (\ref{basic_object}), we introduce some commonly used definitions~\cite{chen2018universal,haddadpour2019local}.
  
  \begin{definition}[$\rho$-weakly convex] \label{def:3}
    A non-convex function $f(x)$ is $\rho$-weakly convex ($\rho > 0$) 
    if
    \begin{equation}
    f(x) \geq f(y) + \langle \nabla f(y), x - y \rangle - \frac{\rho}{2} \| x - y \|^2, \forall x, y \in R^d.
    \nonumber
    \end{equation}
  \end{definition}
  \begin{definition}[$\mu$-Polyak-\L ojasiewicz (PL)]\label{def:4}
    A function $f(x)$ satisfies the $\mu$-PL condition ($\mu > 0$) 
    if
    \begin{equation}
      2\mu (f(x) - f(x^*)) \leq \| \nabla f(x) \|^2, \forall x \in R^d.
      \nonumber
    \end{equation}
  \end{definition}

  \subsection{Assumptions}
  Throughout this paper, we 
  make the following assumptions, all of which are commonly used and basic assumptions~\cite{stich2018local,yu2019parallel,li2020on,chen2018universal,allen2018make}.
  \begin{assumption}\label{assu1}
  $f_i(x)$ is $L$-smooth in terms of $i \in [N]$ for every $x \in R^d$:
  \begin{equation*}
    \| \nabla f_i(x) - \nabla f_i(y) \| \leq L \|x - y \|, \forall x, y \in R^d, i \in [N].
  \end{equation*}
  \end{assumption}
  
  \begin{assumption}\label{assu2}
    There exists a constant $\sigma$ such that 
    \begin{equation*}
        \mathbb{E}_{\xi \sim \mathcal{D}_i} \| \nabla f(x, \xi) - \nabla f_i (x) \|^2 \leq \sigma^2, \forall x \in R^d, \forall i \in [N].
    \end{equation*}
  \end{assumption}
  
  \begin{assumption}\label{assu3}
   If the objective function is non-convex, we assume it is $\rho$-weakly convex.
  \end{assumption}
  \begin{remark}
  Note that if $f(x)$ is $L$-smooth, it is $L$-weakly convex. This is because Assumption~\ref{assu1} implies $-\frac{L}{2}\| x - y \|^2 \leq f(x) - f(y) - \langle \nabla f(y), x-y \rangle \leq \frac{L}{2}\| x-y \|^2$~\cite{nesterov2018lectures}. Therefore, for an $L$-smooth function, we can immediately get that the weakly-convex parameter $\rho$ satisfies $0<\rho \leq L$.
  \end{remark}

  \subsection{Review: Synchronous SGD with Periodically Averaging (Local SGD)}
  To alleviate the high communication cost in SyncSGD, the periodically averaging technique is proposed~\cite{stich2018local,yu2019parallel}. Instead of averaging models in all clients at every iteration, Local SGD lets clients update their models locally for $k$ iterations, then one communication is conducted to average the local models to make them consistent. Specifically, the update rule of Local SGD is
  {\small
  \begin{equation}
  x_{t}^i = 
    \begin{cases}
        \frac{1}{N} \sum_{j=1}^N ( x_{t-1}^j -  \eta \nabla f(x_{t-1}^j, \xi_{t-1}^j)), & {\textrm{if}~t~\%~k~=~0},\\  
      x_{t-1}^i - \eta \nabla f(x_{t-1}^i, \xi_{t-1}^i),  &{\textrm{else}},
    \end{cases}
    \nonumber
  \end{equation}
  }where $x_t^i$ is the local model in client $i$ at iteration $t$.
  Therefore, when each client conducts $T$ iterations, the total number of communications is $T/k$. The complete procedure of Local SGD is summarized in Algorithm~\ref{Local_SGD}.
  Different from previous studies~\cite{mcmahan2017communication,stich2018local,yu2019parallel}, Algorithm~\ref{Local_SGD} returns $\tilde{x} = \frac{1}{N} \sum_{i=1}^N  x_t^i$ for a randomly chosen $t \in \{ 0, 1, \cdots, T-1\}$.
  In practice, we can determine $t$ at first to avoid redundant iterations.
  
  \begin{algorithm}[!tb]
    \caption{Local-SGD($f$, $x_0$, $\eta$, $T$, $k$)}
    \label{Local_SGD}
    \hspace*{0.02in}{\bf Initialize:} $x_0^i = x_0, \forall i \in [N].$
    \begin{algorithmic}[1]
    \FOR{$t = 1, ..., T$}
      \STATE \underline{Client $C_i$ does}:
      \STATE Uniformly sample a mini-batch $\xi_{t-1}^i \in \mathcal{D}_i$ and calculate a stochastic gradient $\nabla f_i(x_{t-1}^i, \xi_{t-1}^i)$.
      \IF{$t$ divides $k$}
          \STATE Communicate with other clients and update: $x_{t}^i = \sum_{j=1}^N \frac{1}{N} (x_{t-1}^j - \eta \nabla f(x_{t-1}^j, \xi_{t-1}^j))$.
      \ELSE
          \STATE Update locally: $x_{t}^i = x_{t-1}^i - \eta \nabla f_i(x_{t-1}^i, \xi_{t-1}^i)$.
      \ENDIF
    \ENDFOR
    \STATE \textbf{return} $\tilde{x} = \frac{1}{N} \sum_{i=1}^N  x_t^i$ for the randomly chosen $t \in \{ 0, 1, \cdots, T-1\}$.
    \end{algorithmic}
  \end{algorithm}
  
  Although several studies have analysed the convergence of Local SGD, they assume that the objective $f(x)$ is $\mu$-strongly convex or non-convex. 
  \cite{khaled2019first} focuses
   on general convex objectives while they use the full gradient descent.
  Besides, most of the existing analysis 
  relies on some stronger assumptions, including bounded gradient norm (i.e., $\| \nabla f_i(x, \xi) \|^2 \leq G^2$)~\cite{stich2018local,li2020on} or bounded variance of gradients among clients~\cite{shen2019faster}. Here, we give a basic convergence result of Local SGD for the general convex objectives without these assumptions.
  
  \begin{theorem} \label{theorem1}
  Suppose Assumptions~\ref{assu1} and \ref{assu2} hold, $f(x)$ is convex and $\eta \leq \frac{1}{6L}$.
  If we set $k \leq  \min\{ \frac{1}{6 \eta L N}, \frac{1}{9 \eta L}\} $ and $k \leq  \min\{ \frac{\sigma}{\sqrt{6 \eta L N(\sigma^2 + 4\zeta_f^*)}}, \frac{1}{9 \eta L} \} $ for the IID case and the Non-IID case respectively, we have
  \begin{equation}\label{theo1:0}
    \mathbb{E} f(\tilde{x}) - f(x^*) \leq \frac{3 \| x_0 - x^*\|^2}{4 \eta T} + \frac{\eta \sigma^2}{N}.
  \end{equation}
  \end{theorem}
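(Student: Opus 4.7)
The argument will follow the now-standard ``virtual averaged iterate'' template. Define $\bar{x}_t := \frac{1}{N}\sum_i x_t^i$. The key observation is that on both communication and non-communication steps one has $\bar{x}_{t+1} = \bar{x}_t - \eta g_t$ with $g_t := \frac{1}{N}\sum_i \nabla f_i(x_t^i,\xi_t^i)$ (the sync step averages exactly what the local updates would also average), so the analysis reduces to a perturbed-SGD recursion on the single sequence $\bar{x}_t$, with $V_t := \frac{1}{N}\sum_i \mathbb{E}\|x_t^i - \bar{x}_t\|^2$ playing the role of the perturbation. Because the $N$ stochastic gradients are conditionally independent, the noise of $g_t$ around its mean $\bar{g}_t := \frac{1}{N}\sum_i \nabla f_i(x_t^i)$ is at most $\sigma^2/N$.

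\textbf{Step 1: Descent inequality.} Expand $\|\bar{x}_{t+1}-x^*\|^2$ and take conditional expectation. For the cross term, write $\bar{x}_t - x^* = (x_t^i - x^*) + (\bar{x}_t - x_t^i)$ and apply convexity of $f_i$ to the first piece and $L$-smoothness $f_i(\bar{x}_t) \leq f_i(x_t^i) + \langle\nabla f_i(x_t^i),\bar{x}_t-x_t^i\rangle + \tfrac{L}{2}\|\bar{x}_t - x_t^i\|^2$ to the second; averaging over $i$ yields $\langle \bar{x}_t - x^*, \bar{g}_t\rangle \geq f(\bar{x}_t) - f(x^*) - \tfrac{L}{2}V_t$. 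For the quadratic term use Jensen $\|\bar{g}_t\|^2 \leq \frac{1}{N}\sum_i\|\nabla f_i(x_t^i)\|^2$ followed by the smooth-convex co-coercivity $\|\nabla f_i(x)-\nabla f_i(x^*)\|^2 \leq 2L\bigl(f_i(x)-f_i(x^*)-\langle\nabla f_i(x^*),x-x^*\rangle\bigr)$; after averaging the linear-in-$\nabla f_i(x^*)$ piece collapses because $\sum_i\nabla f_i(x^*)=0$, leaving a term of order $L\bigl(f(\bar{x}_t)-f(x^*)\bigr)$, a $\zeta_f^*$ term, and another $V_t$ contribution. With $\eta L \leq 1/6$ one obtains a clean inequality of the form $\mathbb{E}_t\|\bar{x}_{t+1}-x^*\|^2 \leq \|\bar{x}_t-x^*\|^2 - c_1\eta\bigl(f(\bar{x}_t)-f(x^*)\bigr) + c_2\eta^2 L\, V_t + \eta^2\sigma^2/N + c_3\eta^2\zeta_f^*$ for small numerical constants $c_j$.

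\textbf{Step 2: Bounding the drift $V_t$.} Inside a local epoch of length at most $k$ starting at $t_0$, use the identity $x_t^i - \bar{x}_t = -\eta\sum_{\tau=t_0}^{t-1}\bigl(\nabla f_i(x_\tau^i,\xi_\tau^i) - \tfrac{1}{N}\sum_j \nabla f_j(x_\tau^j,\xi_\tau^j)\bigr)$. Squaring and using independence of noise across both $\tau$ and clients, the SGD-noise contribution collapses to $O(\eta^2 k\sigma^2)$ (cross-client cross terms cancel); the bias contribution requires bounding $\frac{1}{N}\sum_i \|\nabla f_i(x_\tau^i)\|^2$, which in the IID case vanishes in expectation (all $f_i\equiv f$) and in the Non-IID case is routed through the same co-coercivity expansion as in Step~1 to produce a bound $O\bigl(\eta^2 k^2(\sigma^2+\zeta_f^*) + \eta^2 k^2 L(f(\bar{x}_\tau)-f(x^*)) + L^2 V_\tau\bigr)$. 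The conditions $k\eta L \leq 1/9$ and the stated IID / Non-IID caps on $k$ are exactly what closes this recursion: the self-referential $V_\tau$ feedback contracts, and $\frac{1}{T}\sum_t \mathbb{E} V_t$ ends up bounded by a quantity absorbable into the progress term of Step~1.

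\textbf{Step 3: Assembly.} Plug the $V_t$-bound into the descent inequality. The $k$-constraints are tuned so that every $V_t$ and $\zeta_f^*$ contribution is absorbed into $-c_1\eta(f(\bar{x}_t)-f(x^*))$, leaving a surviving progress coefficient of at least $4\eta/3$. Summing $t=0,\ldots,T-1$ telescopes $\|\bar{x}_t-x^*\|^2$ to just $\|x_0-x^*\|^2$, the noise adds up to $T\eta^2\sigma^2/N$, and dividing by $4\eta T/3$ yields $\tfrac{3\|x_0-x^*\|^2}{4\eta T} + \tfrac{\eta\sigma^2}{N}$ once we use that $\tilde{x}$ is a uniformly random draw from $\{\bar{x}_0,\ldots,\bar{x}_{T-1}\}$ so $\mathbb{E}(f(\tilde{x})-f(x^*)) = \tfrac{1}{T}\sum_t \mathbb{E}(f(\bar{x}_t)-f(x^*))$. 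The main obstacle I anticipate is the Non-IID drift bound in Step~2: prior analyses control $\|\nabla f_i\|^2$ via a uniform bound $G^2$ or a global dissimilarity $\zeta^2$, whereas here only $\zeta_f^*$ (the dissimilarity \emph{at} $x^*$) is available, so every gradient-norm bound must be expanded about $x^*$ through co-coercivity, and the resulting coupling between $V_t$, $f(\bar{x}_t)-f(x^*)$, and $\zeta_f^*$ must be kept contractive---this is precisely what forces the tighter $k \leq \sigma/\sqrt{6\eta L N(\sigma^2+4\zeta_f^*)}$ cap rather than the looser IID cap $k \leq 1/(6\eta L N)$.
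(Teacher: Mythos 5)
Your plan follows essentially the same route as the paper: the virtual averaged iterate $\bar{x}_t$, a descent inequality whose cross term is handled by convexity plus $L$-smoothness, a drift lemma separating the $O(\eta^2 k\sigma^2)$ noise from the deterministic-gradient bias with a self-referential contraction closed by $k\eta L \leq 1/9$, co-coercivity expansions about $x^*$ so that only $\zeta_f^*$ (not a uniform $G^2$ or $\zeta^2$) appears, absorption of the drift into the progress term via the stated caps on $k$, and telescoping with a uniformly random output iterate. One claim in Step~2 is wrong, though it does not sink the argument: in the IID case the bias contribution $\frac{1}{N}\sum_i\|\nabla f_i(x_\tau^i)\|^2$ does \emph{not} vanish in expectation, since the local iterates $x_\tau^i$ still diverge across clients through their independent noise realizations even when $f_i \equiv f$; the paper's Lemma~\ref{lemma4} accordingly retains the $8k\eta^2 L\,\mathbb{E}\mathcal{D}_f(\hat{x}_\tau,x^*)$ feedback term in \emph{both} cases (absorbed by $k\eta L \leq 1/9$), and what the IID assumption actually buys is only $\zeta_f^* = 0$, which removes the quadratic-in-$k$ term $4k^2\zeta_f^*$ and thereby relaxes the cap from $k \leq \sigma/\sqrt{6\eta L N(\sigma^2+4\zeta_f^*)}$ to $k \leq 1/(6\eta L N)$.
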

  \begin{remark}
    If we set $\eta = \sqrt{\frac{N}{T}}$, we have $\mathbb{E}f(\tilde{x}) - f(x^*) \leq \frac{\| x_0 - x^* \|^2 + \sigma^2}{\sqrt{NT}}$, which is consistent with the result of mini-batch SGD~\cite{dekel2012optimal}. 
  \end{remark}

  \section{Local SGD with Stagewise Communication Period}
  \label{method}
  To further reduce the communication complexity, we propose \textit{ST}agewise \textit{L}ocal \textit{SGD} (STL-SGD) in this section with the following features. 
  \begin{itemize}
    \item At the beginning, STL-SGD employs Algorithm~\ref{Local_SGD} as a subalgorithm in each stage.
    \item Instead of using a small fixed learning rate or a gradually decreasing learning rate (e.g. $\frac{\eta_1}{1+\alpha t}$), STL-SGD adopts a stagewisely adaptive scheme. The learning rate is fixed at first, 
    and decreased stage by stage.
    \item The communication periods are increased stagewisely.
  \end{itemize}
  We propose two variants of STL-SGD for strongly convex and non-convex problems, respectively.
   
  \subsection{STL-SGD for Strongly Convex Problems} \label{STL-SGD_convex_sec}
  \begin{algorithm}[!tb]
    \caption{$\text{STL-SGD}^{sc}$($f$, $x_1$, $\eta_1$, $T_1$, $k_1$)}
    \label{STL-SGD_sc}
      \begin{algorithmic}[1]
      \FOR{$s = 1, 2, ..., S$}
        \STATE $x_{s+1}$ = Local-SGD($f$, 
              $x_s$, $\eta_s$, $T_s$, $\max\{\lfloor k_s \rfloor, 1\}$).
        \STATE Set $\eta_{s+1} = \frac{ \eta_{s} }{2}$, $T_{s+1} = 2 T_s$ and $$k_{s+1} = 
            \begin{cases} \sqrt{2} k_s, ~~~{\rm Non\text{-}IID~case},\\
            2 k_s,~~~~~~{\rm IID~case}.
            \end{cases} $$
      \ENDFOR
      \STATE \textbf{return} $x_{S+1}$.
    \end{algorithmic}
  \end{algorithm}
  In this subsection, we propose the STL-SGD algorithm for strongly convex problems, which is denoted as $\text{STL-SGD}^{sc}$ and summarized in Algorithm~\ref{STL-SGD_sc}. At each stage, the learning rate is decreased exponentially. In the meantime, the number of iterations and the communication period are increased exponentially. Specifically, at the $s$-th stage, we set $\eta_s = \frac{\eta_{s-1}}{2}$ and $T_{s} = 2 T_{s-1}$. The communication period $k_s$ is set as $k_s = 2 k_{s-1}$ and $k_s = \sqrt{2} k_{s-1} $ for the IID case and the Non-IID case respectively.   
  
  Below, let $x_{s}$ denote the initial point of the $s$-th stage. Theorem~\ref{theorem2} establishes the convergence rate of $\text{STL-SGD}^{sc}$.
  
  \begin{theorem}\label{theorem2}
    Suppose $f(x)$ is $\mu$-strongly convex. 
    Let $\eta_1 \leq \frac{1}{6L}$ and $T_1 \eta_1 = \frac{6}{\mu}$. We set $k_1 = \min\{  \frac{1}{6 \eta_1 L N}, \frac{1}{9 \eta_1 L} \} $ and $k_1 = \min\{\frac{\sigma}{\sqrt{6 \eta_1 L N(\sigma^2 + 4\zeta_f)}}, \frac{1}{9 \eta_1 L} \}$ for the IID case and the Non-IID case respectively. Under Assumptions~\ref{assu1} and \ref{assu2}, when the number of stages satisfies $S \geq \log(\frac{N(f(x_0) - f(x^*))}{\eta_1 \sigma^2}) + 2$, we have the following result for Algorithm~\ref{STL-SGD_sc}:
    \begin{equation}\label{theo2:0}
      \mathbb{E} f(x_{S+1}) - f(x^*) \leq \frac{9 \eta_1 \sigma^2}{2^S N}.
    \end{equation}
    Defining $T := T_1 + T_2 + \cdots + T_S$, we have 
    \begin{equation}
      \mathbb{E} f(x_{S+1}) - f(x^*) \leq 
      O\left(\frac{1}{NT}\right).
    \end{equation}
  \end{theorem}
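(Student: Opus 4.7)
The plan is to treat Theorem~\ref{theorem2} as a stagewise induction that repeatedly invokes Theorem~\ref{theorem1}. At each stage $s$ we run Local SGD starting from $x_s$ with parameters $(\eta_s, T_s, k_s)$, so Theorem~\ref{theorem1} gives
\begin{equation*}
\mathbb{E} f(x_{s+1}) - f(x^*) \leq \frac{3\|x_s - x^*\|^2}{4\eta_s T_s} + \frac{\eta_s \sigma^2}{N}.
\end{equation*}
Strong convexity lets us convert the distance term via $\|x_s - x^*\|^2 \leq \frac{2}{\mu}(f(x_s) - f(x^*))$, turning this bound into a scalar recurrence on $\Delta_s := \mathbb{E} f(x_s) - f(x^*)$.

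Before writing the recurrence, I would verify that the stagewise parameter schedule is compatible with Theorem~\ref{theorem1}'s hypotheses at every stage. In the IID case the schedule $\eta_s = \eta_1/2^{s-1}$, $k_s = 2^{s-1} k_1$ keeps $\eta_s k_s$ constant, so $k_s \leq \frac{1}{6\eta_s L N}$ and $k_s \leq \frac{1}{9\eta_s L}$ both persist from $s=1$ to all $s$. In the Non-IID case $k_s = (\sqrt{2})^{s-1} k_1$ keeps $\sqrt{\eta_s}\, k_s$ constant (so the first bound $k_s \leq \sigma/\sqrt{6\eta_s L N(\sigma^2 + 4\zeta_f^*)}$ persists) and $k_s \eta_s = \eta_1 k_1/(\sqrt{2})^{s-1}$ is decreasing (so $k_s \leq \frac{1}{9\eta_s L}$ persists). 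The other crucial algebraic identity is $\eta_s T_s = \eta_1 T_1 = 6/\mu$, which is preserved because $\eta$ halves while $T$ doubles. With this, the coefficient in front of $\Delta_s$ in the recurrence becomes $\frac{3}{4\eta_s T_s}\cdot \frac{2}{\mu} = \frac{1}{4}$, giving the clean recursion
\begin{equation*}
\Delta_{s+1} \leq \tfrac{1}{4}\Delta_s + \tfrac{\eta_s \sigma^2}{N} = \tfrac{1}{4}\Delta_s + \tfrac{\eta_1 \sigma^2}{2^{s-1} N}.
\end{equation*}

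Unrolling this linear recurrence yields
\begin{equation*}
\Delta_{S+1} \leq \frac{\Delta_1}{4^S} + \frac{\eta_1 \sigma^2}{N}\sum_{j=1}^{S} \frac{1}{4^{S-j}\, 2^{j-1}}
= \frac{\Delta_1}{4^S} + O\!\Bigl(\frac{\eta_1 \sigma^2}{2^S N}\Bigr),
\end{equation*}
where the geometric sum collapses because $4^{-(S-j)} 2^{-(j-1)} = 2^{-(S-1)} 2^{-(S-j)}$. The hypothesis $S \geq \log(N\Delta_1/(\eta_1 \sigma^2)) + 2$ forces $4^S \geq 4\cdot 2^S \cdot N\Delta_1/(\eta_1\sigma^2)$, so the transient term $\Delta_1/4^S$ is absorbed into $O(\eta_1 \sigma^2/(2^S N))$. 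Tightening the constants (careful tracking of the geometric sum and the threshold) should yield exactly the stated $\tfrac{9\eta_1 \sigma^2}{2^S N}$.

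For the second conclusion, I would sum the stage lengths: $T = T_1(1 + 2 + \cdots + 2^{S-1}) = T_1(2^S - 1)$, hence $2^S = \Theta(T/T_1)$. Substituting into the previous bound and using $\eta_1 T_1 = 6/\mu$ gives
\begin{equation*}
\Delta_{S+1} = O\!\Bigl(\frac{\eta_1 \sigma^2}{2^S N}\Bigr) = O\!\Bigl(\frac{\eta_1 T_1 \sigma^2}{N T}\Bigr) = O\!\Bigl(\frac{\sigma^2}{\mu N T}\Bigr).
\end{equation*}
The main obstacle I anticipate is bookkeeping: confirming the Theorem~\ref{theorem1} preconditions on $k_s$ under both IID and Non-IID schedules, and extracting the precise constants $3/4$ and $9$ in the final inequalities. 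The structural argument itself—constant $\eta_s T_s$ producing a contraction factor of $1/4$, combined with geometrically shrinking noise—is straightforward once strong convexity is invoked.
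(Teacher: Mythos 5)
Your proposal is correct and follows essentially the same route as the paper's proof: invoke Theorem~\ref{theorem1} stagewise after checking that the schedule preserves its preconditions (constant $\eta_s T_s$, the $k_s$ bounds persisting under the IID/Non-IID doubling rules), use strong convexity to obtain the contraction $\Delta_{s+1}\leq \tfrac{1}{4}\Delta_s + \tfrac{\eta_1\sigma^2}{2^{s-1}N}$, and resolve the recurrence together with $2^S = T/T_1 + 1$. The only cosmetic difference is that you unroll the recurrence and bound the geometric sum directly, whereas the paper subtracts the term $\tfrac{8\eta_1\sigma^2}{2^{s}N}$ to turn the recursion into an exact geometric progression; both yield the stated constant $9$.
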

  \begin{remark}\label{remark3}
  Theorem~\ref{theorem2} claims the following properties of $\text{STL-SGD}^{sc}$:
  \begin{itemize}
    \item \textbf{ Linear Speedup. } 
    To reach a solution $x_{S+1}$ with $ \mathbb{E} f(x_{S+1}) - f(x^*) \leq \epsilon $, the number of iterations is $ O (\frac{1}{N\epsilon} )$, which indicates a linear speedup.
    \item \textbf{Communication Complexity for the Non-IID Case.}
    For the Non-IID case, we set $k_{s+1} = \sqrt{2} k_s$ for Algorithm~\ref{STL-SGD_sc}. Therefore, the total communication complexity is $\frac{T_1}{k_1} + \cdots + \frac{T_S}{k_S} = \frac{T_1}{k_1}(1 + 2^{\frac{1}{2}} + \cdots + 2^{\frac{s-1}{2}}) = O( \frac{T_1}{k_1} \cdot (\frac{T}{T_1})^{\frac{1}{2}}) = O (N^{\frac{1}{2}} T^{\frac{1}{2}})$, where the last equality holds because $\frac{T_1^{\frac{1}{2}}}{k_1} = O (\sqrt{T_1 \eta_1 N}) = O (N^{\frac{1}{2}})$.
    \item \textbf{Communication Complexity for the IID Case.} If the data distributions on different clients are identical, we set $k_{s+1} = 2 k_s$ for Algorithm~\ref{STL-SGD_sc}. Thus, the total communication complexity is $\frac{T_1}{k_1} + \cdots + \frac{T_S}{k_S} = S \frac{T_1}{k_1} = O (N \log{T})$. 
  \end{itemize}
  \end{remark}
  
  \subsection{STL-SGD for Non-Convex Problems}\label{STL-SGD_non_convex_sec}
  \begin{algorithm}[!tb]
    \caption{$\text{STL-SGD}^{nc}$($f$, $x_1$, $\eta_1$, $T_1$, $k_1$)}
    \label{STL-SGD_nc}
      \begin{algorithmic}[1]
      \FOR{$s = 1, 2, ..., S$}
        \STATE Let $f_{x_{s}}^{\gamma}(x) = f(x) + 
              \frac{1}{2\gamma} \|x - x_s \|^2$.
        \STATE $x_{s+1}$ = Local-SGD($f_{x_{s}}^{\gamma}$, 
              $x_s$, $\eta_s$, $T_s$, $\max\{\lfloor k_s \rfloor, 1\}$).
        \STATE \textbf{Option~1:} Set $\eta_{s+1} = \frac{ \eta_{s} }{2}$, 
                $T_{s+1} = 2 T_s$  and $$k_{s+1} = \begin{cases}
                \sqrt{2} k_s, ~~~~~~~{\rm Non\text{-}IID~case},\\
                2 k_s,~~~~~~~~~~{\rm IID~~case}.
                \end{cases} $$
        \STATE \textbf{Option~2:} Set $\eta_{s+1} = \frac{\eta_1}{s+1}$, $T_{s+1}=(s+1)T_1$ and $$k_{s+1} = \begin{cases}
          \sqrt{s+1} k_1, ~~~~~~~{\rm Non\text{-}IID~case},\\
          (s+1) k_1,~~~~~~~{\rm IID~case}.
          \end{cases} $$
      \ENDFOR
      \STATE \textbf{return} $x_{S+1}$.
    \end{algorithmic}
    \end{algorithm}
  In this subsection, we proceed to propose the variant of STL-SGD algorithm for non-convex problems ($\text{STL-SGD}^{nc}$). Different from Algorithm~\ref{STL-SGD_sc}, which optimizes a fixed objective during all stages, $\text{STL-SGD}^{nc}$ changes the objective once a stage is finished. Specifically, in the $s$-th stage, the objective is a regularized problem $f_{x_{s}}^{\gamma} = f(x) + \frac{1}{2\gamma} \|x - x_s \|^2$, where $x_s$ is the initial point of the $s$-th stage and $\gamma$ is a constant that satisfies $\gamma < \rho^{-1}$. $f_{x_{s}}^{\gamma}(x)$ is guaranteed to be convex due to the $\rho$-weak convexity of $f(x)$. 
  In this way, the theoretical property of Algorithm~\ref{Local_SGD} under convex settings still holds in each stage of $\text{STL-SGD}^{nc}$.
  Other parameters are set in two different ways (\textbf{Option~1} and \textbf{Option~2}) for non-convex objectives satisfying the PL condition and otherwise, which are detailed in Algorithm~\ref{STL-SGD_nc}.

  In \textbf{Option~1}, we set $\eta_s$, $T_s$ and $k_s$ in the same way as in Algorithm~\ref{STL-SGD_sc}. Here we analyse the theoretical property of $\text{STL-SGD}^{nc}$ with \textbf{Option~1} for non-convex objectives that satisfy the PL condition. 
  \begin{theorem}\label{theorem4}
    Assume $f(x)$ satisfies the PL condition defined in 
    Definition~\ref{def:4} with constant $\mu$. 
    Suppose Assumptions~\ref{assu1}, \ref{assu2} and \ref{assu3} hold and $f(x)$ is weakly convex with constant $\rho \leq \frac{\mu}{16}$. Let $\eta_1 \leq \frac{1}{12L_{\gamma}}$, $T_1 \eta_1 = \frac{6}{\rho}$. Set $k_1 = \min\{  \frac{1}{6 \eta_1 L_\gamma N}, \frac{1}{9 \eta_1 L_\gamma} \} $ and $k_1 = \min\{\frac{\sigma}{\sqrt{6 \eta_1 L_\gamma N(\sigma^2 + 4\zeta_f)}}, \frac{1}{9 \eta_1 L_\gamma} \} $ for the IID case and the Non-IID case respectively.
    When the number of stages satisfies $S \geq \log{\frac{N\left( f(x_0) - f(x^*) \right)}{\eta_1 \sigma^2}} + 2$,
    Algorithm~\ref{STL-SGD_nc} with \textbf{Option~1} returns a solution $x_{S+1}$ such that
    \begin{equation}
      \mathbb{E} f(x_{S+1}) - f(x^*) \leq 
      O\left(\frac{1}{NT}\right),
    \end{equation}
    where $T = T_1 + T_2 + \cdots + T_S$.
  \end{theorem}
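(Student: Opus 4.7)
The plan is to reduce Theorem~\ref{theorem4} to the strongly convex analysis of Theorem~\ref{theorem2}, by exploiting the fact that each stage's regularized subproblem $f_{x_s}^{\gamma}$ is $(\gamma^{-1} - \rho)$-strongly convex (since $\gamma < \rho^{-1}$) and $L_\gamma := L + \gamma^{-1}$-smooth. Under Option~1 the stage parameters $(\eta_s, T_s, k_s)$ are set exactly as in Algorithm~\ref{STL-SGD_sc} but with the role of the strong-convexity constant played by $\rho$; therefore at each stage I invoke Theorem~\ref{theorem2} (applied to $f_{x_s}^{\gamma}$) to obtain
\[
\mathbb{E}\bigl[f_{x_s}^{\gamma}(x_{s+1}) - f_{x_s}^{\gamma}(\hat x_s)\bigr] \leq O\!\left(\frac{\eta_s \sigma^2}{N}\right),
\]
where $\hat x_s := \arg\min_x f_{x_s}^{\gamma}(x)$ is the exact proximal step from $x_s$.

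Next I would translate this inexact proximal step into a geometric contraction of $f(x_s) - f^*$. The optimality condition for $\hat x_s$ gives $\nabla f(\hat x_s) = (x_s - \hat x_s)/\gamma$, and comparing values at $\hat x_s$ and $x_s$ in $f_{x_s}^{\gamma}$ yields $\tfrac{1}{2\gamma}\|\hat x_s - x_s\|^2 \leq f(x_s) - f(\hat x_s)$. Combined with the $\mu$-PL inequality applied at $\hat x_s$,
\[
2\mu \bigl(f(\hat x_s) - f^*\bigr) \leq \|\nabla f(\hat x_s)\|^2 = \frac{\|\hat x_s - x_s\|^2}{\gamma^2} \leq \frac{2\bigl(f(x_s) - f(\hat x_s)\bigr)}{\gamma},
\]
which rearranges to $f(\hat x_s) - f^* \leq \frac{1}{1+\gamma\mu}\bigl(f(x_s) - f^*\bigr)$. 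The hypothesis $\rho \leq \mu/16$ combined with a choice $\gamma = \Theta(1/\rho) < 1/\rho$ forces $\gamma\mu \geq 16$, so this exact-proximal contraction factor is well below $\tfrac{1}{2}$.

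It then remains to pass from $\hat x_s$ to the actual inexact iterate $x_{s+1}$. Using the $(\gamma^{-1}-\rho)$-strong convexity of $f_{x_s}^{\gamma}$ to bound $\|x_{s+1} - \hat x_s\|^2 \leq 2\epsilon_s/(\gamma^{-1}-\rho)$, and then $L$-smoothness of $f$ together with Young's inequality, one obtains a one-stage recursion of the form
\[
\mathbb{E}\bigl[f(x_{s+1}) - f^*\bigr] \leq \tfrac{1}{2}\,\mathbb{E}\bigl[f(x_s) - f^*\bigr] + c\,\frac{\eta_s \sigma^2}{N}
\]
for some absolute constant $c$. Unrolling, using $\eta_s = \eta_1/2^{s-1}$, yields $\mathbb{E}[f(x_{S+1}) - f^*] = O(\eta_1 \sigma^2/(2^S N))$, and since $T = T_1 + \cdots + T_S = T_1(2^S-1) = O(2^S/(\rho\eta_1))$, this delivers the target $O(1/(NT))$ rate. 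The communication-complexity bounds promised in the theorem then follow verbatim from Remark~\ref{remark3}, since the schedule for $k_s$ matches that of Algorithm~\ref{STL-SGD_sc}.

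The main obstacle I expect is the last step: converting the inexact minimization guarantee for $f_{x_s}^{\gamma}$ into a clean contraction for $f - f^*$ with the right constants. The tension is that the stage-level error $\epsilon_s$ from Theorem~\ref{theorem2} is only $O(\eta_s\sigma^2/N)$, and propagating it through the $\|x_{s+1}-\hat x_s\|$ bound and then through $L$-smoothness of $f$ must still leave a contraction strictly smaller than $1$. This is precisely why $\rho \leq \mu/16$ is imposed: the exact-proximal contraction $1/(1+\gamma\mu)$ must be small enough to absorb the $L$-smoothness amplification introduced in the inexact-to-exact transfer, and the numerical margin provided by $\gamma\mu\geq 16$ is what closes the loop.
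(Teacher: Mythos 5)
Your route is structurally different from the paper's: you treat each stage as an inexact proximal-point step (exact prox contraction under PL, then an inexactness transfer via the strong convexity of $f_{x_s}^{\gamma}$ and the smoothness of $f$), whereas the paper proves a dedicated single-stage lemma that runs the entire Local-SGD analysis on $f_{x_s}^{\gamma}$ with the \emph{fixed} reference point $x^*$ (the minimizer of $f$, not of $f_{x_s}^{\gamma}$), obtains $\mathbb{E} f(x_{s+1}) - f(x^*) \leq (\tfrac{3}{4\eta_sT_s} + c\rho)\|x_s-x^*\|^2 + \tfrac{3\eta_s\sigma^2}{4N} + \cdots$ directly, and closes the recursion with the quadratic-growth consequence of PL. Your modular reduction is attractive, but as written it has two genuine gaps.

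First, the invocation of the single-stage guarantee (you cite Theorem~2, but you mean Theorem~1) for $f_{x_s}^{\gamma}$ around its own minimizer $\hat{x}_s$ does not verify the hypotheses of that theorem. In the Non-IID case the admissible communication period depends on the gradient diversity of the local objectives \emph{at the minimizer of the function being optimized}; for $f_{x_s}^{\gamma}$ this is $\frac{1}{N}\sum_i\|\nabla f_i(\hat{x}_s)-\nabla f(\hat{x}_s)\|^2$, a stage-dependent quantity that is not bounded by $\zeta_f^*$ and therefore not controlled by the stated choice of $k_1$. The paper sidesteps this precisely by keeping $x^*$ as the reference point throughout, computing $\zeta_{f_{x_s}^{\gamma}}^{*} = \zeta_f^* + \gamma^{-2}\|x^*-x_s\|^2$ explicitly, and absorbing the extra $\gamma^{-2}\|x^*-x_s\|^2$ term into the contraction coefficient; your reduction would need an analogous argument (e.g., $\zeta^{\hat{x}_s}\leq 2\zeta_f^*+8L^2\|\hat{x}_s-x^*\|^2$ plus absorption), which is exactly the technical content you have skipped. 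Second, the stage error is not $\epsilon_s = O(\eta_s\sigma^2/N)$: Theorem~1 gives $\epsilon_s \leq \tfrac{3\|x_s-\hat{x}_s\|^2}{4\eta_sT_s} + \tfrac{\eta_s\sigma^2}{N}$, and the first term is of order $\gamma\rho\,(f(x_s)-f(\hat{x}_s))$, i.e.\ a constant fraction of the suboptimality, not a vanishing noise term; it must be carried through the recursion as part of the contraction factor rather than discarded. Both issues are repairable (the prox term is absorbable since $\gamma\rho = 1/2$, and your exact-prox contraction $1/(1+\gamma\mu)\leq 1/9$ leaves enough slack even after the $L/\rho$ amplification in the inexact-to-exact transfer), but the proposal as stated does not close either one. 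Also note $\gamma=1/(2\rho)$ gives $\gamma\mu\geq 8$, not $16$.
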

  \begin{remark}
    As the result of Theorem~\ref{theorem4} is the same as that of Theorem~\ref{theorem2}, properties stated in Remark~\ref{remark3} all hold here.
  \end{remark}
  
  \textbf{Option~2} is employed for the non-convex objectives which do not satisfy the PL condition. Instead of increasing the communication period geometrically as in \textbf{Option~1} of Algorithm~\ref{STL-SGD_nc}, we let it increase in a linear manner, i.e., $k_{s} = s k_1$. Meanwhile, we increase the stage length linearly, that is $T_s = s T_1$, while keeping $T_s \eta_s$ a constant.
  \begin{theorem}\label{theorem5}
    Suppose Assumptions~\ref{assu1}, \ref{assu2} and \ref{assu3} hold. Let $\eta_1 \leq \frac{1}{6L_{\gamma}}$ and $T_1 \eta_1 = \frac{3}{\rho}$. Set $k_1 = \min\{  \frac{1}{6 \eta_1 L N}, \frac{1}{9 \eta_1 L} \} $ and $k_1 = \min\{  \frac{\sigma}{\sqrt{6 \eta_1 L N(\sigma^2 + 4\zeta_f)}}, \frac{1}{9 \eta_1 L} \} $ for the IID case and the Non-IID case respectively. Algorithm~\ref{STL-SGD_nc} with \textbf{Option~2} guarantees that 
    \begin{eqnarray}
      \mathbb{E} \|\nabla f(x_s) \|^2 \leq O \left( \frac{1}{\sqrt{N T}} \right),
    \end{eqnarray}
    where $s$ is randomly sampled from $\{1,2,\cdots,S\}$ with probability $p_s = \frac{s}{1+2+\cdots+S}$.
  \end{theorem}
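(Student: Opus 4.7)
The approach is to analyze $\text{STL-SGD}^{nc}$ stage by stage through the Moreau envelope $f^\gamma(x) := \min_y \{f(y) + \frac{1}{2\gamma}\|y-x\|^2\}$, whose minimizer is $\hat{x}_s := \mathrm{prox}_{\gamma f}(x_s)$. Because $f$ is $\rho$-weakly convex and $\gamma < \rho^{-1}$, each subproblem $f_{x_s}^\gamma$ is $L_\gamma$-smooth (with $L_\gamma = L + 1/\gamma$) and $(\gamma^{-1}-\rho)$-strongly convex; in particular Theorem~\ref{theorem1} applies to Local-SGD run on it. I would use the two classical identities $\nabla f^\gamma(x_s) = \gamma^{-1}(x_s - \hat{x}_s)$ and $\nabla f(\hat{x}_s) = \nabla f^\gamma(x_s)$ to later convert Moreau-envelope progress into a bound on $\|\nabla f(x_s)\|$.

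First I apply Theorem~\ref{theorem1} on stage $s$ with initial point $x_s$, minimizer $\hat{x}_s$, step size $\eta_s$, and $T_s$ inner iterations; the choice of $k_1$ in the theorem statement (after replacing $L$ by $L_\gamma$) is exactly what the theorem requires, which yields
\begin{equation*}
    \mathbb{E}[f_{x_s}^\gamma(x_{s+1}) - f_{x_s}^\gamma(\hat{x}_s)] \leq \frac{3\|x_s - \hat{x}_s\|^2}{4\eta_s T_s} + \frac{\eta_s\sigma^2}{N}.
\end{equation*}
Invoking strong convexity on the left, together with the design choice $\eta_s T_s = \eta_1 T_1 = 3/\rho$, I would obtain the one-step contraction
\begin{equation*}
    \mathbb{E}\|x_{s+1} - \hat{x}_s\|^2 \leq \tfrac{1}{2}\|x_s - \hat{x}_s\|^2 + \frac{2\eta_s\sigma^2}{(\gamma^{-1}-\rho)N},
\end{equation*}
which I then feed into the envelope identities $f^\gamma(x_{s+1}) \le f(\hat{x}_s) + \frac{1}{2\gamma}\|x_{s+1}-\hat{x}_s\|^2$ and $f^\gamma(x_s) = f(\hat{x}_s) + \frac{1}{2\gamma}\|x_s-\hat{x}_s\|^2$ to produce the stochastic descent
\begin{equation*}
    \mathbb{E}[f^\gamma(x_{s+1})] - f^\gamma(x_s) \leq -\frac{\gamma}{4}\|\nabla f^\gamma(x_s)\|^2 + \frac{\eta_s\sigma^2}{\gamma(\gamma^{-1}-\rho)N}.
\end{equation*}

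To exploit the sampling weights $p_s = s/\sum_t t$, I multiply the descent inequality by $s$, use the crucial identity $s\eta_s = \eta_1$ so the per-stage noise is uniform in $s$, and sum $s = 1, \ldots, S$; Abel summation collapses $\sum_s s(f^\gamma(x_s) - \mathbb{E} f^\gamma(x_{s+1}))$ into $\sum_s \mathbb{E}[f^\gamma(x_s) - f^*] - S\mathbb{E}[f^\gamma(x_{S+1}) - f^*]$, and dropping the nonnegative trailing term and iterating the noisy descent to control $\sum_s \mathbb{E}[f^\gamma(x_s) - f^*]$ gives a bound of the form $O(S\Delta_1 + S\eta_1\sigma^2\log S/N)$. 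Dividing by $\sum_s s = \Theta(S^2)$ together with $T = \Theta(S^2 T_1)$ yields the advertised $\mathbb{E}_{p}\|\nabla f^\gamma(x_s)\|^2 = O(1/\sqrt{NT})$, and the final conversion $\|\nabla f(x_s)\|^2 \le 2\|\nabla f(\hat{x}_s)\|^2 + 2L^2\|x_s - \hat{x}_s\|^2 = (2 + 2L^2\gamma^2)\|\nabla f^\gamma(x_s)\|^2$ uses only the smoothness of $f$ and the envelope identities. The main obstacle is the bookkeeping in the weighted telescope: the schedule $T_s = sT_1$ paired with $\eta_s = \eta_1/s$ must simultaneously keep $s\eta_s$ constant (so the weighted noise grows only linearly in $S$ rather than quadratically) and keep the contraction factor $\rho/(2(\gamma^{-1}-\rho)) \le 1/2$, which forces $\gamma$ strictly below $\rho^{-1}$ and couples the weak-convexity constant with the admissible $\eta_1$ through $L_\gamma = L + 1/\gamma$.
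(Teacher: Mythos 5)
Your proposal follows essentially the same skeleton as the paper's proof: apply Theorem~\ref{theorem1} to the strongly convex prox subproblem $f_{x_s}^\gamma$ at each stage, convert the resulting bound into control of $\|x_s - \mathrm{prox}_{\gamma f}(x_s)\|^2$, multiply by the weight $w_s = s$ so that $s\eta_s = \eta_1$ makes the noise term uniform, telescope, and divide by $\sum_s s = \Theta(S^2)$ with $T = \Theta(S^2 T_1)$. The only structural difference is cosmetic: you telescope the Moreau envelope $f^\gamma(x_s)$ and measure progress by $\|\nabla f^\gamma(x_s)\|^2 = \gamma^{-2}\|x_s - \hat{x}_s\|^2$, whereas the paper telescopes $f(x_s)$ directly (using $f_{x_s}^\gamma(x_s) = f(x_s)$ and strong convexity of the subproblem) and converts $\|x_s - x_s^*\|^2$ to $\|\nabla f(x_s)\|^2$ via $L_\gamma$-smoothness of $f_{x_s}^\gamma$; these are the same estimate up to constants, and your observation that the stated $k_1$ should use $L_\gamma$ rather than $L$ matches what the paper's proof actually assumes. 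One point to tighten: your bound $\sum_s \mathbb{E}[f^\gamma(x_s) - f^*] = O(S\Delta_1 + S\eta_1\sigma^2\log S/N)$ carries a $\log S = \Theta(\log T)$ factor in the noise term, so after dividing by $\Theta(S^2)$ your argument as written yields $O(\log T/\sqrt{NT})$, not the advertised $O(1/\sqrt{NT})$; you drop the logarithm without comment. The paper sidesteps this by bounding $\sum_s f(x_s) - S f(x_{S+1}) \le S(f(\bar{x}) - f(x^*))$ with $\bar{x}$ the worst iterate and treating $f(\bar{x}) - f(x^*)$ as a constant (itself a slightly informal step, since $\bar{x}$ is random); you should either adopt that device or state the logarithmic factor explicitly.
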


  \begin{figure*}[t]
    \centering
    \subfigure{
    \begin{minipage}[t]{0.225\linewidth}
    \centering
    \includegraphics[width=1.1\textwidth] {./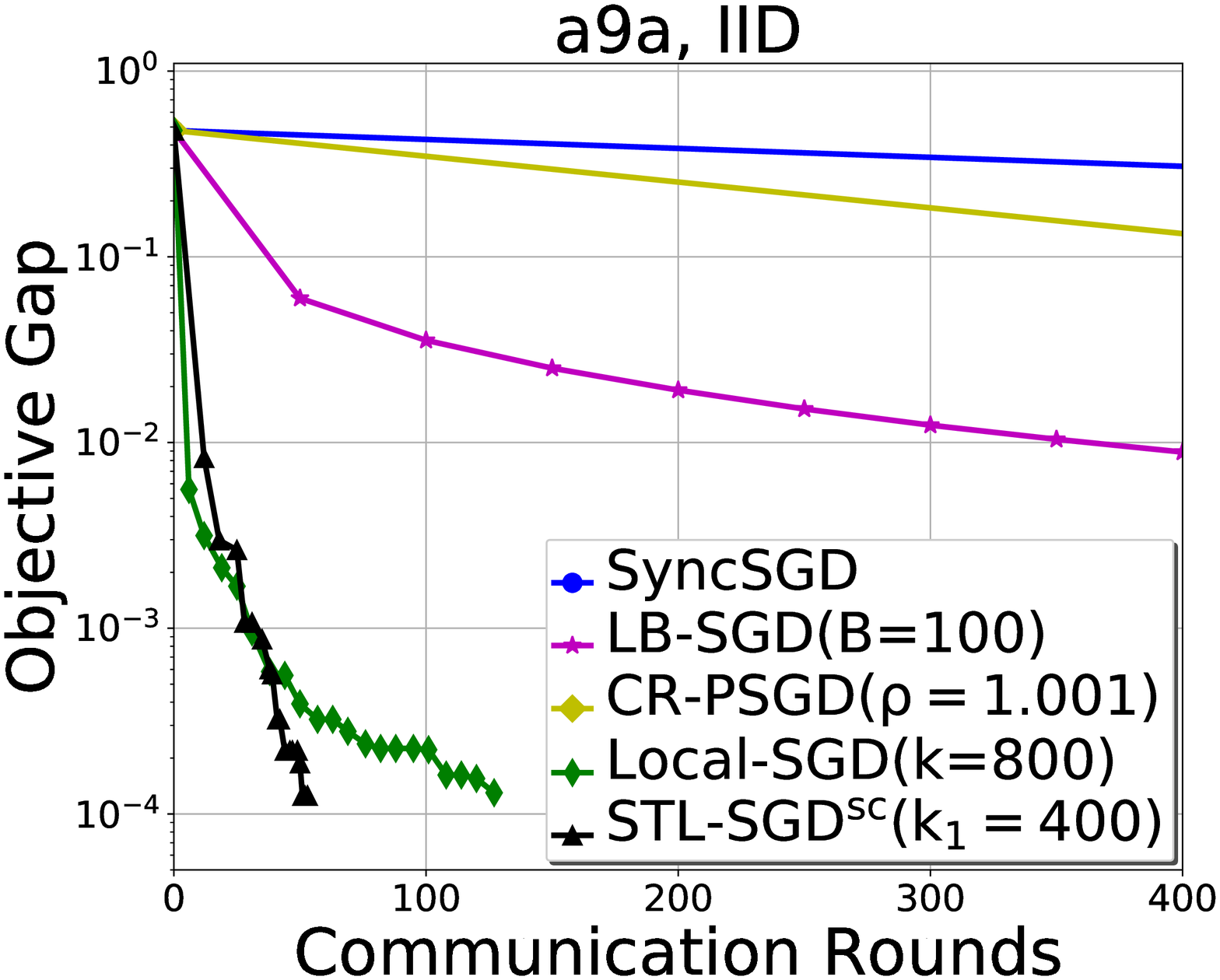}
    \end{minipage}
    }
    \subfigure{
    \begin{minipage}[t]{0.225\linewidth}
    \centering
    \includegraphics[width=1.1\textwidth] {./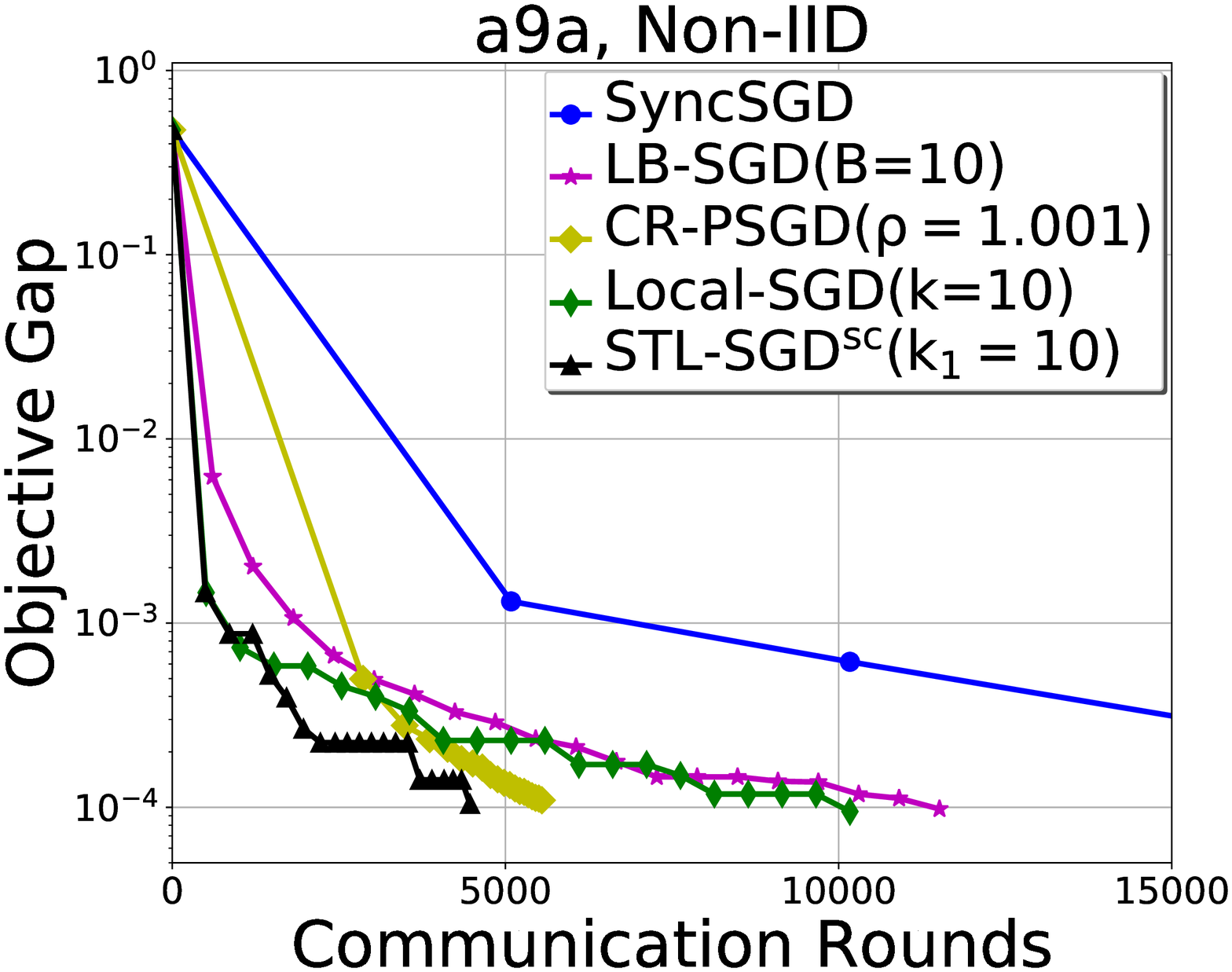}
    \end{minipage}
    }
    \subfigure{
    \begin{minipage}[t]{0.225\linewidth}
    \centering
    \includegraphics[width=1.1\textwidth] {./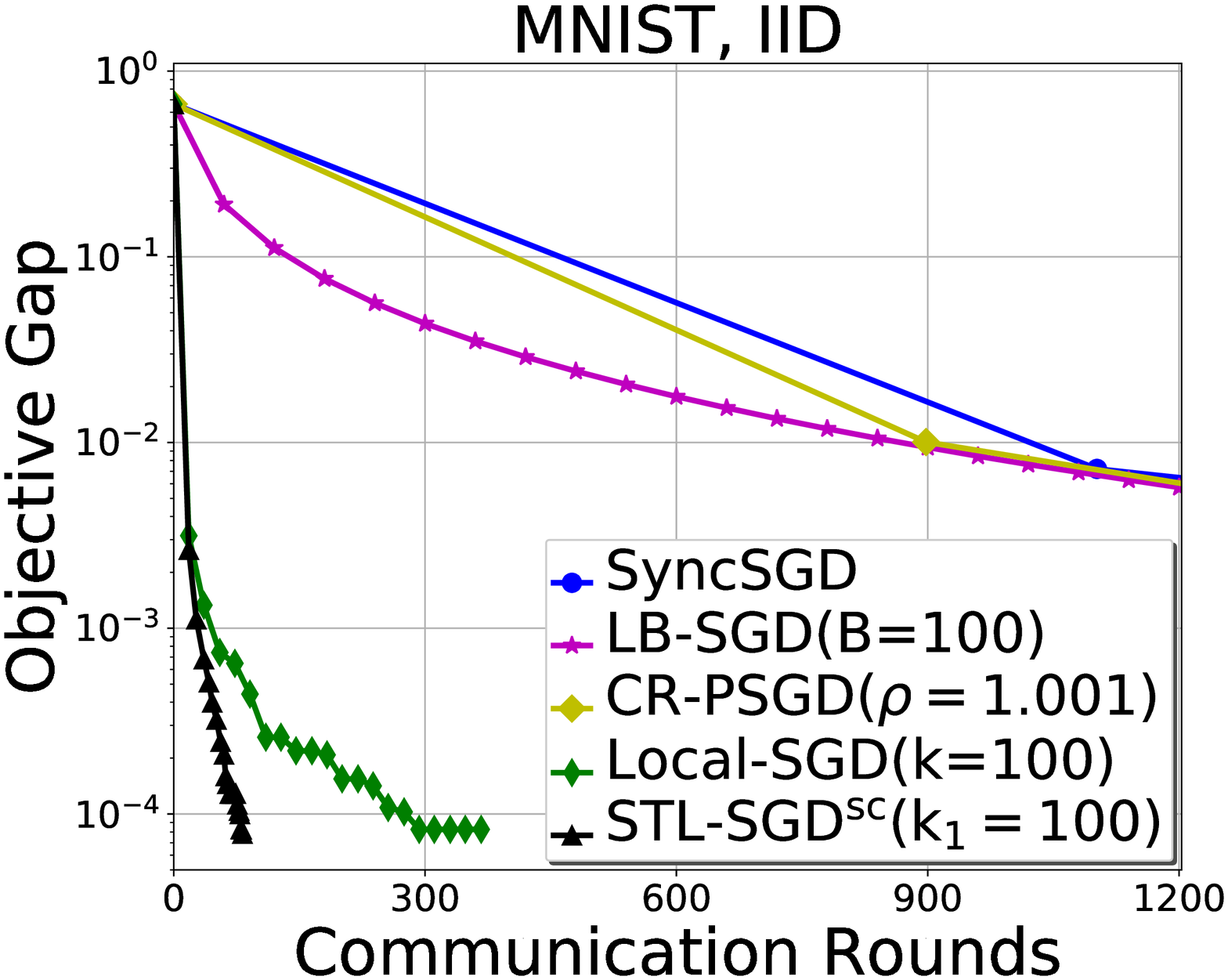}
    \end{minipage}
    }
    \subfigure{
    \begin{minipage}[t]{0.225\linewidth}
    \centering
    \includegraphics[width=1.1\textwidth] {./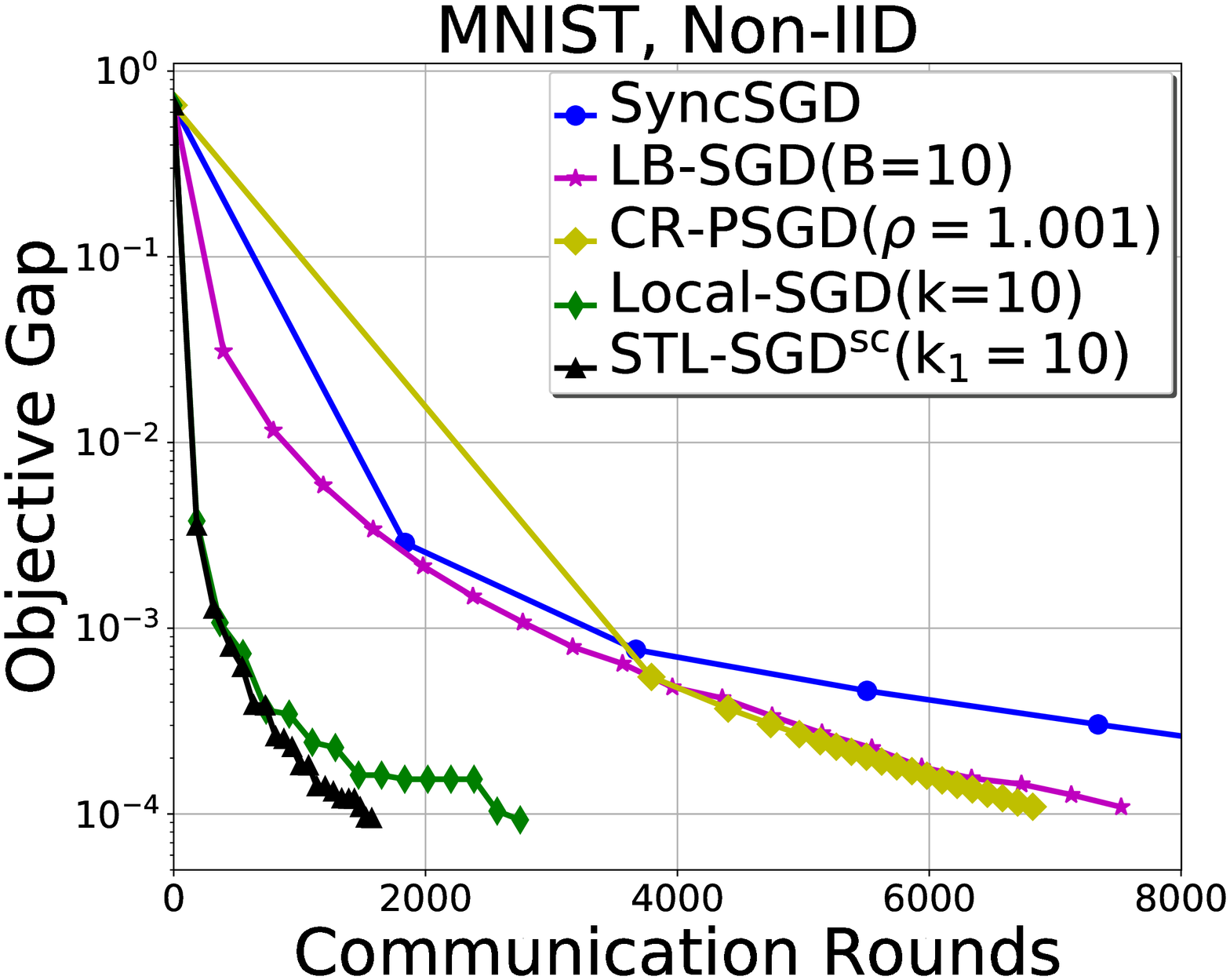}
    \end{minipage}
    }
    \caption{Training objective gap $f(x) - f(x^*)$ w.r.t the communication rounds for logistic regression on $\mathrm{a9a}$ and $\mathrm{MNIST}$.
    }
    \label{Compare_convex}
  \end{figure*}
  
  \begin{table*}[!t]
    \caption{Communication rounds to reach $10^{-4}$ objective gap in convex problems. We also show the speedup of these algorithms compared with SyncSGD.}
    \label{comparison_comm_convex}
    \begin{center}
    \begin{tabular}{l l l l l}
      \toprule
      Algorithms & a9a (IID) & a9a (Non-IID)  & MNIST (IID) & MNIST (Non-IID) \\
      \midrule
      SyncSGD & 100683 ($1\times$)    & 90513 ($1\times$)   & 32664 ($1\times$)  & 22021 ($1\times$)\\
      LB-SGD & 7620 ~~~~($13.2\times$)    & 12221 ($7.4\times$) & 7011 ~~($4.7\times$) & 7740 ~~($2.8\times$)\\
      CR-PSGD & 5434 ~~~~($18.5\times$)   & 5772 ~~($15.7\times$) & 6788 ~~($4.8\times$) & 7029 ~~($3.1\times$)\\
      Local-SGD & 184 ~~~~~~($547.2\times$) & 10068 ($9.0\times$) & 289 ~~~~($113.0\times$)& 2642 ~~($8.3\times$)\\
      $\text{STL-SGD}^{sc}$ & \textbf{{61 ~~~~~~~~(\bm{$ 1650.5\times$}) }}  & \textbf{4417 ~~(\bm{$20.5\times$}) }& \textbf{79 ~~~~~~(\bm{$413.5\times$}) }& \textbf{1518 ~~(\bm{$14.5\times$}) }\\
      \bottomrule
    \end{tabular}
    \end{center}
  \end{table*}

  \begin{remark}
    $\text{STL-SGD}^{nc}$ with \textbf{Option~2} has the following properties:
    \begin{itemize}
      \item \textbf{Linear Speedup:} To achieve $\mathbb{E} \| \nabla f(x_S) \|^2 \leq \epsilon$, the total number of iterations when $N$ clients are used is $O (\frac{1}{N\epsilon^2})$, which shows a linear speedup.
      \item \textbf{Communication Complexity for the Non-IID case:} Algorithm~\ref{STL-SGD_nc} with \textbf{Option~2} sets $k_{s} = \sqrt{s} k_1$. Thus, the communication complexity is $\frac{T_1}{k_1} + \frac{T_2}{k_2} + \cdots + \frac{T_S}{k_S} = \frac{T_1}{k_1} (1 + \sqrt{2} + \cdots + \sqrt{S}) = O (\frac{T_1}{k_1} (\frac{T}{T_1})^{\frac{3}{4}} )=O(N^{\frac{3}{4}}T^{\frac{3}{4}})$.
      \item \textbf{Communication Complexity for the IID case:} As $k_{s} = s k_1$, the communication complexity is $\frac{T_1}{k_1} + \frac{T_2}{k_2} + \cdots + \frac{T_S}{k_S} = \frac{T_1}{k_1}S = O(\frac{T_1}{k_1}(\frac{T}{T_1})^{\frac{1}{2}} ) = O \left( N^{\frac{3}{2}} T^{\frac{1}{2}} \right)$.
    \end{itemize}
  \end{remark}

  \section{Experiments}
  \label{experiments}
  We validate the performance of the proposed STL-SGD algorithm with experiments on both convex 
  and non-convex problems. For each type of problems, we conduct experiments 
  for both the IID case and the Non-IID case. Experiments are conducted on a machine with 8 Nvidia Geforce GTX 1080Ti GPUs and 2 Xeon(R) Platinum 8153 CPUs.

  To simulate the Non-IID scenarios, 
  we divide the training data and make the distributions of classes different among clients. Similar to the setting in \cite{karimireddy2019scaffold}, at first, we randomly take $s\%$ i.i.d. data from the training set and divide them equally to each client. For the remaining data, we sort them according to their classes and then assign them to the clients in order. In our experiments, we set $s=50$ for convex problems and $s=0$ for non-convex problems.

  We compare $\text{STL-SGD}$ with SyncSGD, LB-SGD, CR-PSGD~\cite{yu2019computation} and Local SGD~\cite{stich2018local}. We show the comparison of these algorithms in terms of the communication rounds. 
  The investigation regarding convergence is included in the appendix, which validates that STL-SGD can achieve similar convergence rate as SyncSGD. 

  \subsection{Convex Problems}
  We consider the binary classification problem with logistic regression, i.e.,
  \begin{equation}\label{logistic_regression}
    \min_{\theta\in R^d} \frac{1}{n} \sum_{i=1}^n \log(1 + \exp(-y_i x_i^T \theta)) + \frac{\lambda}{2} \| \theta \|^2,
  \end{equation}
  where $(x_i, y_i), i \in [n]$ constitute a set of training examples, and $\lambda$ is the regularization parameter. It is notable that (\ref{logistic_regression}) is 
  strongly convex 
  when $\lambda > 0$, and we set $\lambda = 1/n$. 
  We take two datasets $\mathrm{a9a}$ and $\mathrm{MNIST}$
  from the libsvm website\footnote[3]{https://www.csie.ntu.edu.tw/~cjlin/libsvmtools/datasets/}.
  $\mathrm{a9a}$ has $32,561$ examples and $123$ features.
  For $\mathrm{MNIST}$, we sample a subset with $11,791$ examples and $784$ features from two classes (4 and 9). 
  Experiments are implemented on 32 clients and communication is handled with MPI\footnote[4]{https://www.open-mpi.org/}.
  
  SyncSGD, LB-SGD and Local SGD are implemented with the decreasing learning rate $\eta_t = \frac{\eta_1}{1+\alpha t}$ as suggested in \cite{stich2018local,li2020on} and we tune $\alpha$ in $\{10^{-2}, 10^{-3}, 10^{-4}\}$ for the best performance.
  For $\text{STL-SGD}^{sc}$, we set $\eta_1 T_1 = \frac{1}{\lambda}$. The initial learning rate for all algorithms is tuned in $\{N, N/10, N/100\}$. The communication period $k$ and the batch size $B$ for LB-SGD are tuned in $\{100, 200, 400, 800, 1600\}$ for the IID case, and $\{10, 20, 40, 80, 160\}$ for the Non-IID case. The scaling factor of batch size $\rho$ for CR-PSGD is tuned in $\{1.001, 1.01, 1.1\}$. We report the largest $k$, $B$ and $\rho$ which do not sacrifice the convergence for all algorithms.
  
  \begin{figure*}[!t]
    \centering
    \subfigure{
      \begin{minipage}[t]{0.225\linewidth}
      \centering
      \includegraphics[width=1.1\textwidth] {./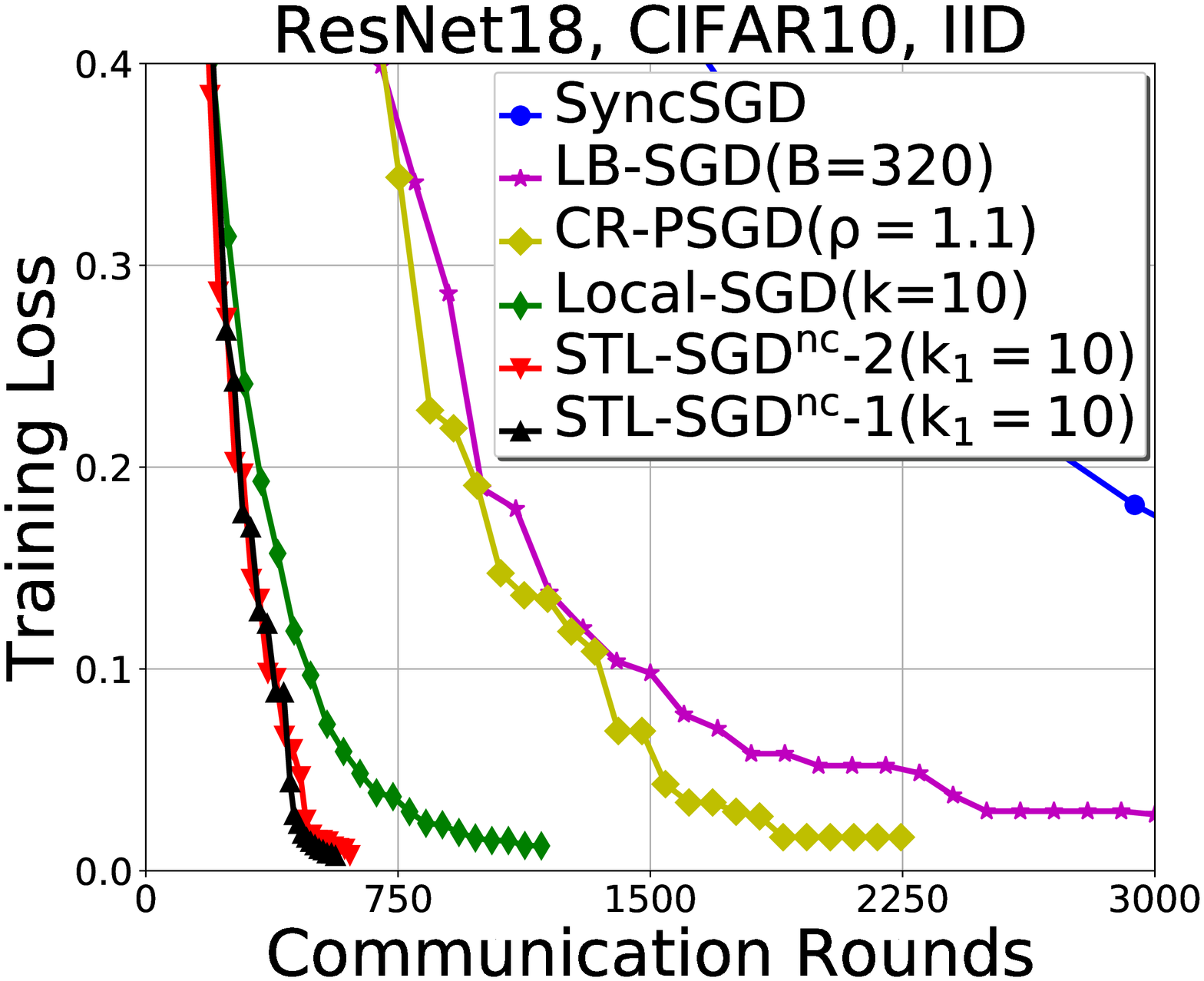}
      \end{minipage}
      }
    \subfigure{
      \begin{minipage}[t]{0.225\linewidth}
      \centering
      \includegraphics[width=1.1\textwidth] {./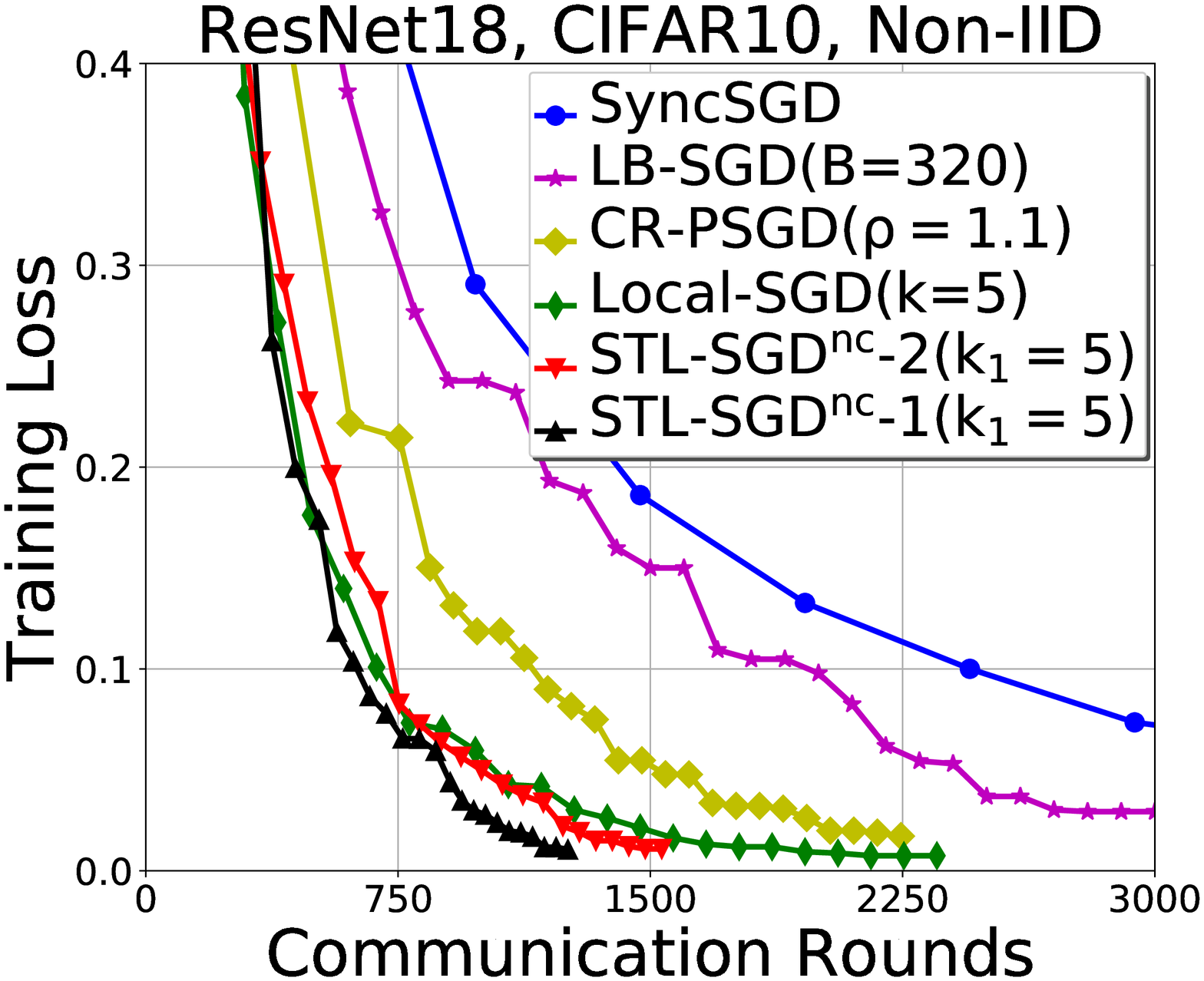}
      \end{minipage}
    }
    \subfigure{
      \begin{minipage}[t]{0.225\linewidth}
      \centering
      \includegraphics[width=1.1\textwidth] {./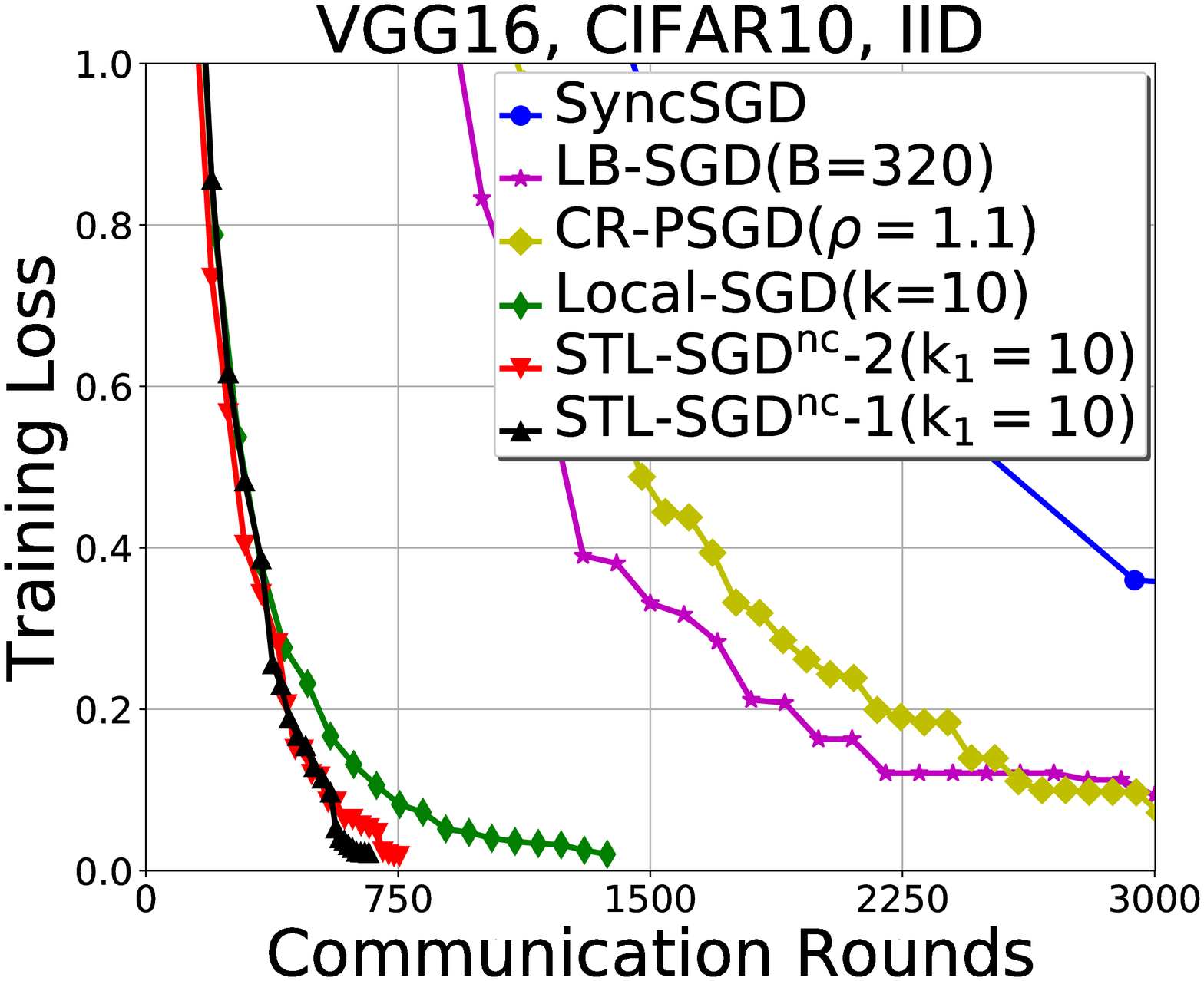}
      \end{minipage}
      }
    \subfigure{
      \begin{minipage}[t]{0.225\linewidth}
      \centering
      \includegraphics[width=1.1\textwidth] {./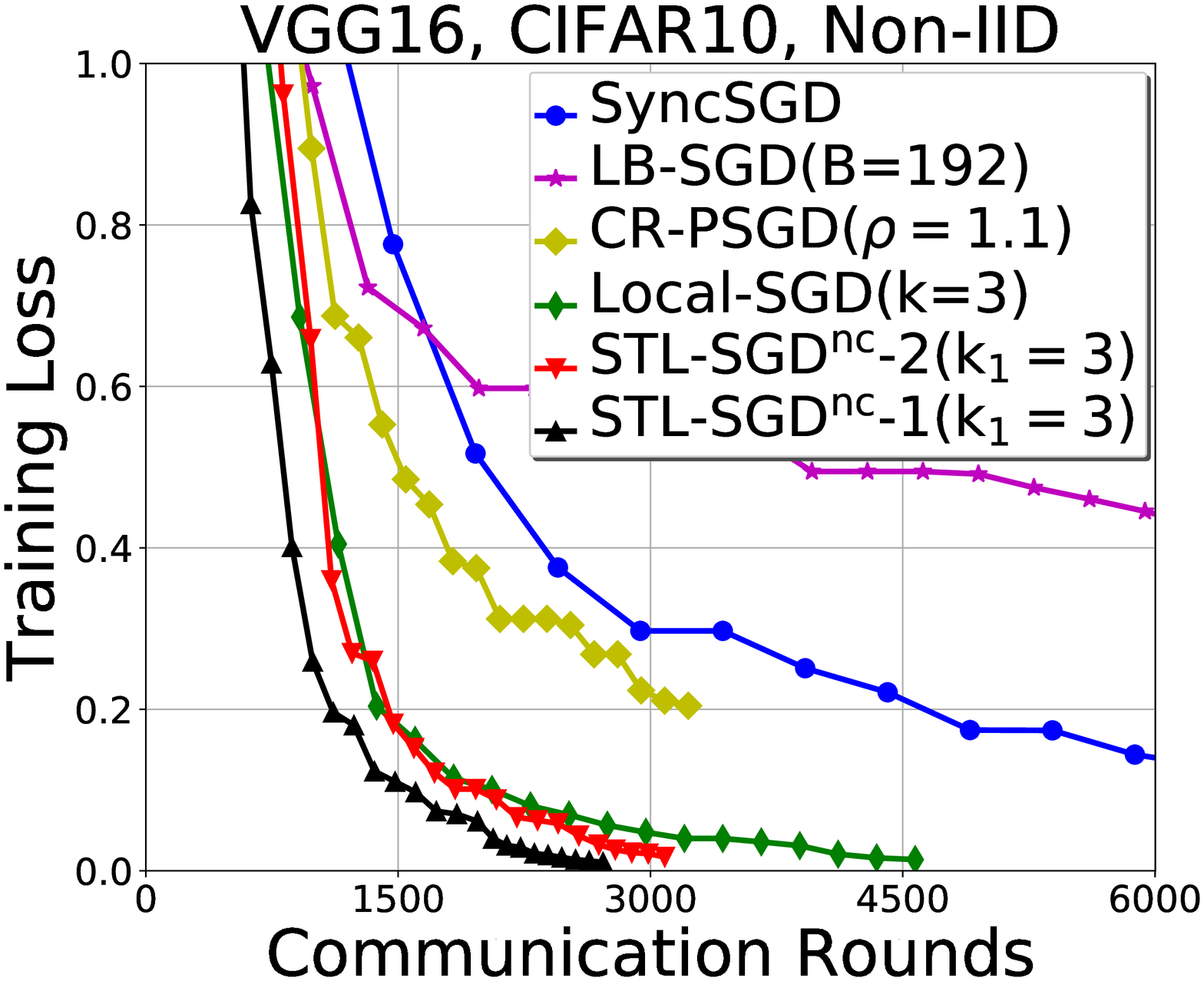}
      \end{minipage} 
    }
    \caption{Training loss w.r.t the communication rounds for ResNet18 and VGG16 on CIFAR10.
    }
    \label{Compare_nonconvex}
  \end{figure*}
  
  \begin{table*}[!t]
    \caption{Communication rounds to reach 99\% training accuracy in non-convex problems. We run all algorithms for 200 epochs, where an epoch indicates one pass of the dataset. LB-SGD and CR-PSGD can not achieve 99\% training accuracy on the VGG16 neural network until the end of training.}
    \label{comparison_comm_nonconvex}
    \begin{center}
    \begin{tabular}{l l l l l}
      \toprule
      Algorithms & ResNet18 (IID) & ResNet18 (Non-IID)  & VGG16 (IID) & VGG16 (Non-IID) \\
      \midrule
      SyncSGD & 7644 ($1\times$)    & 5390 ($1\times$)   & 13622 ($1\times$)  & 15092 ($1\times$)\\
      LB-SGD & 3000 ($2.5\times$)    & 3180 ($1.7\times$) & $-$ ~~~~~~~($-$) & $-$ ~~~~~~~($-$)\\
      CR-PSGD & 1797 ($4.3\times$)   & 1937 ($2.8\times$) & $-$ ~~~~~~~($-$) & $-$ ~~~~~~~($-$)\\
      Local-SGD & 755 ~~($10.1\times$) & 1235 ($4.4\times$) & 1245 ~~($10.9\times$)& 3986 ~~($3.8\times$)\\
      $\text{STL-SGD}^{nc}$-2 & \textbf{{470 ~~(\bm{$ 16.3\times$}) }}  & \textbf{1158 (\bm{$4.7\times$}) }& \textbf{696 ~~~~(\bm{$19.6\times$}) }& \textbf{2732 ~~(\bm{$5.5\times$}) }\\
      $\text{STL-SGD}^{nc}$-1 & \textbf{{434 ~~(\bm{$ 17.6\times$}) }}  & \textbf{954 ~~(\bm{$5.6\times$}) }& \textbf{602 ~~~~(\bm{$22.6\times$}) }& \textbf{2179 ~~(\bm{$6.9\times$}) }\\
      \bottomrule
    \end{tabular}
    \end{center}
  \end{table*}

  Figure~\ref{Compare_convex} shows the objective gap $f(x) - f(x^*)$ with regard to 
  the communication rounds.
  We can observe that $\text{STL-SGD}^{sc}$ converges with the fewest communication rounds for both the IID case and the Non-IID case.
  Although the initial communication period of $\text{STL-SGD}^{sc}$ may need to be set smaller than Local SGD in the the IID case, the total number of communication rounds of $\text{STL-SGD}^{sc}$ is still significantly lower, which validates that the communication complexity of $\text{STL-SGD}^{sc}$ is much lower than Local SGD.
  As shown in Table~\ref{comparison_comm_convex}, to achieve $10^{-4}$ objective gap, the communication rounds of $\text{STL-SGD}^{sc}$ is almost 1.7-3 times fewer than Local SGD.

  \subsection{Non-Convex Problems}
  We train ResNet18~\cite{he2016deep} 
  and VGG16~\cite{simonyan2014very} 
  on the $\mathrm{CIFAR10}$~\cite{krizhevsky2009learning} dataset, which includes a training set of 50,000 examples from 10 classes. 
  8 clients are used in total.

  For our proposed algorithm, we denote $\text{STL-SGD}^{nc}$ with \textbf{Option~1} and \textbf{Option~2} as $\text{STL-SGD}^{nc}$-1 and $\text{STL-SGD}^{nc}$-2 respectively. The learning rates of SyncSGD, LB-SGD, CR-PSGD and Local-SGD are all set fixed as suggested in their convergence theory~\cite{ghadimi2013stochastic,yu2019computation,yu2019parallel}.
  The initial learning rate for all algorithms is tuned in $\{N/10, N/100, N/1000\}$. 
  The basic batch size at each client is 64.
  The first stage length of $\text{STL-SGD}^{nc}$ is tuned in $\{20, 40, 60\}$ epochs. 
  The parameter $\gamma$ in $\text{STL-SGD}^{nc}$ is tuned in $\{ 10^0, 10^2, 10^4\}$.
  We tune the communication period $k$ in $\{3, 5, 10, 20\}$ and the batch size $B$ for LB-SGD in $\{192, 320, 640, 1280\}$. For ease of implementation, we increase the batch size in CR-PSGD with $B = \rho B$ once an epoch is finished, and $\rho$ is tuned in $\{1.1, 1.2, 1.3\}$. $B$ stops growing when it exceeds $512$ as suggested in \cite{yu2019computation}. We show the largest $k$ and $B$ which can maintain the same convergence rate as SyncSGD for all algorithms.
  
  The experimental results of training loss regarding communication rounds are presented in Figure~\ref{Compare_nonconvex} and the communication rounds to achieve 99\% training accuracy for all algorithms are shown in Table~\ref{comparison_comm_nonconvex}. 
  As can be seen, $\text{STL-SGD}^{nc}$-1 and $\text{STL-SGD}^{nc}$-2 converge with much fewer communications than other algorithms.
  In spite of the same order of communication complexity as Local SGD, the performance of $\text{STL-SGD}^{nc}$-2 is better as the benefit of the negative relevance between the learning rate and the communication period.
  $\text{STL-SGD}^{nc}$-1 converges with the fewest number of communications, as it uses a geometrically increasing communication period.

  \section{Conclusion}
  We propose STL-SGD, which adopts a stagewisely increasing communication period to reduce the communication complexity. 
  Two variants of STL-SGD ($\text{STL-SGD}^{sc}$ and $\text{STL-SGD}^{nc}$) are provided for strongly convex objectives and non-convex objectives respectively. Theoretically, we prove that: (i) STL-SGD maintains the convergence rate and linear speedup as SyncSGD; (ii) when the objective is strongly convex or satisfies the PL condition, 
  while attaining the optimal convergence rate $O(\frac{1}{NT})$,
  STL-SGD achieves the state-of-the-art communication complexity; (iii) when the objective is general non-convex, STL-SGD has the same communication complexity as Local SGD, while being more consistent with practical tricks. Experiments on both convex and non-convex problems demonstrate the effectiveness of the proposed algorithm.
  

\section*{Aknowledgement}
This research was supported by the National Natural Science
Foundation of China (61673364) and Anhui Provincial
Natural Science Foundation (2008085J31). We would like to
thank the Information Science Laboratory Center of USTC for
the hardware and software services. We also gratefully acknowledge Xianfeng Liang from USTC for his valuable discussion.

\bibliography{stlsgd}

\begin{thebibliography}{45}
\providecommand{\natexlab}[1]{#1}
\providecommand{\url}[1]{\texttt{#1}}
\providecommand{\urlprefix}{URL }
\expandafter\ifx\csname urlstyle\endcsname\relax
  \providecommand{\doi}[1]{doi:\discretionary{}{}{}#1}\else
  \providecommand{\doi}{doi:\discretionary{}{}{}\begingroup
  \urlstyle{rm}\Url}\fi

\bibitem[{Agarwal and Duchi(2011)}]{agarwal2011distributed}
Agarwal, A.; and Duchi, J.~C. 2011.
\newblock Distributed delayed stochastic optimization.
\newblock In \emph{Advances in Neural Information Processing Systems},
  873--881.

\bibitem[{Alistarh et~al.(2017)Alistarh, Grubic, Li, Tomioka, and
  Vojnovic}]{alistarh2017qsgd}
Alistarh, D.; Grubic, D.; Li, J.; Tomioka, R.; and Vojnovic, M. 2017.
\newblock QSGD: Communication-efficient SGD via gradient quantization and
  encoding.
\newblock In \emph{Advances in Neural Information Processing Systems},
  1709--1720.

\bibitem[{Allen-Zhu(2018)}]{allen2018make}
Allen-Zhu, Z. 2018.
\newblock How to make the gradients small stochastically: Even faster convex
  and nonconvex sgd.
\newblock In \emph{Advances in Neural Information Processing Systems},
  1157--1167.

\bibitem[{Bayoumi, Mishchenko, and Richtarik(2020)}]{bayoumi2020tighter}
Bayoumi, A. K.~R.; Mishchenko, K.; and Richtarik, P. 2020.
\newblock Tighter Theory for Local SGD on Identical and Heterogeneous Data.
\newblock In \emph{International Conference on Artificial Intelligence and
  Statistics}, 4519--4529.

\bibitem[{Chen et~al.(2019)Chen, Yuan, Yi, Zhou, Chen, and
  Yang}]{chen2018universal}
Chen, Z.; Yuan, Z.; Yi, J.; Zhou, B.; Chen, E.; and Yang, T. 2019.
\newblock Universal Stagewise Learning for Non-Convex Problems with Convergence
  on Averaged Solutions.
\newblock In \emph{International Conference on Learning Representations}.
\newblock \urlprefix\url{https://openreview.net/forum?id=Syx5V2CcFm}.

\bibitem[{Davis and Grimmer(2019)}]{davis2019proximally}
Davis, D.; and Grimmer, B. 2019.
\newblock Proximally guided stochastic subgradient method for nonsmooth,
  nonconvex problems.
\newblock \emph{SIAM Journal on Optimization} 29(3): 1908--1930.

\bibitem[{Dekel et~al.(2012)Dekel, Gilad-Bachrach, Shamir, and
  Xiao}]{dekel2012optimal}
Dekel, O.; Gilad-Bachrach, R.; Shamir, O.; and Xiao, L. 2012.
\newblock Optimal distributed online prediction using mini-batches.
\newblock \emph{Journal of Machine Learning Research} 13(Jan): 165--202.

\bibitem[{Ghadimi and Lan(2013)}]{ghadimi2013stochastic}
Ghadimi, S.; and Lan, G. 2013.
\newblock Stochastic first-and zeroth-order methods for nonconvex stochastic
  programming.
\newblock \emph{SIAM Journal on Optimization} 23(4): 2341--2368.

\bibitem[{Golmant et~al.(2018)Golmant, Vemuri, Yao, Feinberg, Gholami,
  Rothauge, Mahoney, and Gonzalez}]{golmant2018computational}
Golmant, N.; Vemuri, N.; Yao, Z.; Feinberg, V.; Gholami, A.; Rothauge, K.;
  Mahoney, M.~W.; and Gonzalez, J. 2018.
\newblock On the Computational Inefficiency of Large Batch Sizes for Stochastic
  Gradient Descent.
\newblock \emph{arXiv preprint arXiv:1811.12941} .

\bibitem[{Haddadpour et~al.(2019{\natexlab{a}})Haddadpour, Kamani, Mahdavi, and
  Cadambe}]{haddadpour2019local}
Haddadpour, F.; Kamani, M.~M.; Mahdavi, M.; and Cadambe, V. 2019{\natexlab{a}}.
\newblock Local SGD with periodic averaging: Tighter analysis and adaptive
  synchronization.
\newblock In \emph{Advances in Neural Information Processing Systems},
  11080--11092.

\bibitem[{Haddadpour et~al.(2019{\natexlab{b}})Haddadpour, Kamani, Mahdavi, and
  Cadambe}]{haddadpour2019trading}
Haddadpour, F.; Kamani, M.~M.; Mahdavi, M.; and Cadambe, V. 2019{\natexlab{b}}.
\newblock Trading Redundancy for Communication: Speeding up Distributed SGD for
  Non-convex Optimization.
\newblock In \emph{International Conference on Machine Learning}, 2545--2554.

\bibitem[{Haddadpour and Mahdavi(2019)}]{haddadpour2019convergence}
Haddadpour, F.; and Mahdavi, M. 2019.
\newblock On the Convergence of Local Descent Methods in Federated Learning.
\newblock \emph{arXiv preprint arXiv:1910.14425} .

\bibitem[{Hazan and Kale(2014)}]{hazan2014beyond}
Hazan, E.; and Kale, S. 2014.
\newblock Beyond the regret minimization barrier: optimal algorithms for
  stochastic strongly-convex optimization.
\newblock \emph{The Journal of Machine Learning Research} 15(1): 2489--2512.

\bibitem[{He et~al.(2016)He, Zhang, Ren, and Sun}]{he2016deep}
He, K.; Zhang, X.; Ren, S.; and Sun, J. 2016.
\newblock Deep residual learning for image recognition.
\newblock In \emph{Proceedings of the IEEE conference on computer vision and
  pattern recognition}, 770--778.

\bibitem[{Jain et~al.(2016)Jain, Kakade, Kidambi, Netrapalli, and
  Sidford}]{jain2016parallelizing}
Jain, P.; Kakade, S.~M.; Kidambi, R.; Netrapalli, P.; and Sidford, A. 2016.
\newblock Parallelizing Stochastic Gradient Descent for Least Squares
  Regression: mini-batching, averaging, and model misspecification.
\newblock \emph{arXiv preprint arXiv:1610.03774} .

\bibitem[{Kairouz et~al.(2019)Kairouz, McMahan, Avent, Bellet, Bennis, Bhagoji,
  Bonawitz, Charles, Cormode, Cummings et~al.}]{kairouz2019advances}
Kairouz, P.; McMahan, H.~B.; Avent, B.; Bellet, A.; Bennis, M.; Bhagoji, A.~N.;
  Bonawitz, K.; Charles, Z.; Cormode, G.; Cummings, R.; et~al. 2019.
\newblock Advances and Open Problems in Federated Learning.
\newblock \emph{arXiv preprint arXiv:1912.04977} .

\bibitem[{Karimi, Nutini, and Schmidt(2016)}]{karimi2016linear}
Karimi, H.; Nutini, J.; and Schmidt, M. 2016.
\newblock Linear convergence of gradient and proximal-gradient methods under
  the polyak-{\l}ojasiewicz condition.
\newblock In \emph{Joint European Conference on Machine Learning and Knowledge
  Discovery in Databases}, 795--811. Springer.

\bibitem[{Karimireddy et~al.(2019)Karimireddy, Kale, Mohri, Reddi, Stich, and
  Suresh}]{karimireddy2019scaffold}
Karimireddy, S.~P.; Kale, S.; Mohri, M.; Reddi, S.~J.; Stich, S.~U.; and
  Suresh, A.~T. 2019.
\newblock SCAFFOLD: Stochastic controlled averaging for on-device federated
  learning.
\newblock \emph{arXiv preprint arXiv:1910.06378} .

\bibitem[{Keskar et~al.(2016)Keskar, Mudigere, Nocedal, Smelyanskiy, and
  Tang}]{keskar2016largebatch}
Keskar, N.~S.; Mudigere, D.; Nocedal, J.; Smelyanskiy, M.; and Tang, P. T.~P.
  2016.
\newblock On Large-Batch Training for Deep Learning: Generalization Gap and
  Sharp Minima.
\newblock \emph{arXiv preprint arXiv:1609.04836} .

\bibitem[{Khaled, Mishchenko, and Richt{\'a}rik(2019)}]{khaled2019first}
Khaled, A.; Mishchenko, K.; and Richt{\'a}rik, P. 2019.
\newblock First analysis of local gd on heterogeneous data.
\newblock \emph{arXiv preprint arXiv:1909.04715} .

\bibitem[{Krizhevsky, Hinton et~al.(2009)}]{krizhevsky2009learning}
Krizhevsky, A.; Hinton, G.; et~al. 2009.
\newblock Learning multiple layers of features from tiny images .

\bibitem[{Krizhevsky, Sutskever, and Hinton(2012)}]{krizhevsky2012imagenet}
Krizhevsky, A.; Sutskever, I.; and Hinton, G.~E. 2012.
\newblock Imagenet classification with deep convolutional neural networks.
\newblock In \emph{Advances in neural information processing systems},
  1097--1105.

\bibitem[{Li et~al.(2020)Li, Huang, Yang, Wang, and Zhang}]{li2020on}
Li, X.; Huang, K.; Yang, W.; Wang, S.; and Zhang, Z. 2020.
\newblock On the Convergence of FedAvg on Non-{\{}IID{\}} Data.
\newblock In \emph{International Conference on Learning Representations}.
\newblock \urlprefix\url{https://openreview.net/forum?id=HJxNAnVtDS}.

\bibitem[{Lian et~al.(2015)Lian, Huang, Li, and Liu}]{lian2015asynchronous}
Lian, X.; Huang, Y.; Li, Y.; and Liu, J. 2015.
\newblock Asynchronous parallel stochastic gradient for nonconvex optimization.
\newblock In \emph{Advances in Neural Information Processing Systems},
  2737--2745.

\bibitem[{Liang et~al.(2019)Liang, Shen, Liu, Pan, Chen, and
  Cheng}]{liang2019variance}
Liang, X.; Shen, S.; Liu, J.; Pan, Z.; Chen, E.; and Cheng, Y. 2019.
\newblock Variance Reduced Local SGD with Lower Communication Complexity.
\newblock \emph{arXiv preprint arXiv:1912.12844} .

\bibitem[{Lin et~al.(2018)Lin, Stich, Patel, and Jaggi}]{lin2018don}
Lin, T.; Stich, S.~U.; Patel, K.~K.; and Jaggi, M. 2018.
\newblock Don't Use Large Mini-Batches, Use Local SGD.
\newblock \emph{arXiv preprint arXiv:1808.07217} .

\bibitem[{Lyu, Yu, and Yang(2020)}]{lyu2020threats}
Lyu, L.; Yu, H.; and Yang, Q. 2020.
\newblock Threats to federated learning: A survey.
\newblock \emph{arXiv preprint arXiv:2003.02133} .

\bibitem[{McMahan et~al.(2017)McMahan, Moore, Ramage, Hampson, and
  y~Arcas}]{mcmahan2017communication}
McMahan, B.; Moore, E.; Ramage, D.; Hampson, S.; and y~Arcas, B.~A. 2017.
\newblock Communication-Efficient Learning of Deep Networks from Decentralized
  Data.
\newblock In \emph{Artificial Intelligence and Statistics}, 1273--1282.

\bibitem[{Nesterov(2018)}]{nesterov2018lectures}
Nesterov, Y. 2018.
\newblock \emph{Lectures on convex optimization}, volume 137.
\newblock Springer.

\bibitem[{Reddi et~al.(2020)Reddi, Charles, Zaheer, Garrett, Rush,
  Kone{\v{c}}n{\`y}, Kumar, and McMahan}]{reddi2020adaptive}
Reddi, S.; Charles, Z.; Zaheer, M.; Garrett, Z.; Rush, K.; Kone{\v{c}}n{\`y},
  J.; Kumar, S.; and McMahan, H.~B. 2020.
\newblock Adaptive Federated Optimization.
\newblock \emph{arXiv preprint arXiv:2003.00295} .

\bibitem[{Shen et~al.(2019)Shen, Xu, Liu, Liang, and Cheng}]{shen2019faster}
Shen, S.; Xu, L.; Liu, J.; Liang, X.; and Cheng, Y. 2019.
\newblock Faster distributed deep net training: computation and communication
  decoupled stochastic gradient descent.
\newblock In \emph{Proceedings of the 28th International Joint Conference on
  Artificial Intelligence}, 4582--4589. AAAI Press.

\bibitem[{Simonyan and Zisserman(2014)}]{simonyan2014very}
Simonyan, K.; and Zisserman, A. 2014.
\newblock Very deep convolutional networks for large-scale image recognition.
\newblock \emph{arXiv preprint arXiv:1409.1556} .

\bibitem[{Stich(2019)}]{stich2018local}
Stich, S.~U. 2019.
\newblock Local {SGD} Converges Fast and Communicates Little.
\newblock In \emph{International Conference on Learning Representations}.
\newblock \urlprefix\url{https://openreview.net/forum?id=S1g2JnRcFX}.

\bibitem[{Stich, Cordonnier, and Jaggi(2018)}]{stich2018sparsified}
Stich, S.~U.; Cordonnier, J.-B.; and Jaggi, M. 2018.
\newblock Sparsified SGD with memory.
\newblock In \emph{Advances in Neural Information Processing Systems},
  4447--4458.

\bibitem[{Stich and Karimireddy(2019)}]{stich2019errorfeedback}
Stich, S.~U.; and Karimireddy, S.~P. 2019.
\newblock The Error-Feedback Framework: Better Rates for SGD with Delayed
  Gradients and Compressed Communication.
\newblock \emph{arXiv preprint arXiv:1909.05350} .

\bibitem[{Tang et~al.(2019)Tang, Yu, Lian, Zhang, and
  Liu}]{tang2019doublesqueeze}
Tang, H.; Yu, C.; Lian, X.; Zhang, T.; and Liu, J. 2019.
\newblock DoubleSqueeze: Parallel Stochastic Gradient Descent with Double-pass
  Error-Compensated Compression.
\newblock In \emph{International Conference on Machine Learning}, 6155--6165.

\bibitem[{Wang and Joshi(2018{\natexlab{a}})}]{wang2018adaptive}
Wang, J.; and Joshi, G. 2018{\natexlab{a}}.
\newblock Adaptive communication strategies to achieve the best error-runtime
  trade-off in local-update SGD.
\newblock \emph{arXiv preprint arXiv:1810.08313} .

\bibitem[{Wang and Joshi(2018{\natexlab{b}})}]{wang2018cooperative}
Wang, J.; and Joshi, G. 2018{\natexlab{b}}.
\newblock Cooperative SGD: A unified framework for the design and analysis of
  communication-efficient SGD algorithms.
\newblock \emph{arXiv preprint arXiv:1808.07576} .

\bibitem[{Xu, Lin, and Yang(2017)}]{xu2017stochastic}
Xu, Y.; Lin, Q.; and Yang, T. 2017.
\newblock Stochastic convex optimization: Faster local growth implies faster
  global convergence.
\newblock In \emph{Proceedings of the 34th International Conference on Machine
  Learning-Volume 70}, 3821--3830. JMLR. org.

\bibitem[{Yin et~al.(2017)Yin, Pananjady, Lam, Papailiopoulos, Ramchandran, and
  Bartlett}]{yin2017gradient}
Yin, D.; Pananjady, A.; Lam, M.; Papailiopoulos, D.; Ramchandran, K.; and
  Bartlett, P. 2017.
\newblock Gradient Diversity: a Key Ingredient for Scalable Distributed
  Learning.
\newblock \emph{arXiv preprint arXiv:1706.05699} .

\bibitem[{Yu and Jin(2019)}]{yu2019computation}
Yu, H.; and Jin, R. 2019.
\newblock On the Computation and Communication Complexity of Parallel SGD with
  Dynamic Batch Sizes for Stochastic Non-Convex Optimization.
\newblock \emph{arXiv preprint arXiv:1905.04346} .

\bibitem[{Yu, Jin, and Yang(2019)}]{yu2019linear1}
Yu, H.; Jin, R.; and Yang, S. 2019.
\newblock On the Linear Speedup Analysis of Communication Efficient Momentum
  SGD for Distributed Non-Convex Optimization.
\newblock In \emph{International Conference on Machine Learning}, 7184--7193.

\bibitem[{Yu, Yang, and Zhu(2019)}]{yu2019parallel}
Yu, H.; Yang, S.; and Zhu, S. 2019.
\newblock Parallel restarted SGD with faster convergence and less
  communication: Demystifying why model averaging works for deep learning.
\newblock In \emph{Proceedings of the AAAI Conference on Artificial
  Intelligence}, volume~33, 5693--5700.

\bibitem[{Yuan et~al.(2019)Yuan, Yan, Jin, and Yang}]{yuan2019stagewise}
Yuan, Z.; Yan, Y.; Jin, R.; and Yang, T. 2019.
\newblock Stagewise training accelerates convergence of testing error over SGD.
\newblock In \emph{Advances in Neural Information Processing Systems},
  2604--2614.

\bibitem[{Zhang et~al.(2016)Zhang, De~Sa, Mitliagkas, and
  R{\'e}}]{zhang2016parallel}
Zhang, J.; De~Sa, C.; Mitliagkas, I.; and R{\'e}, C. 2016.
\newblock Parallel SGD: When does averaging help?
\newblock \emph{arXiv preprint arXiv:1606.07365} .

\end{thebibliography}

\onecolumn
\newpage
\appendix

\section{More About Experiments}

\subsection{Experimental Results for Validating the Convergence Rate}

\begin{figure*}[h]
  \centering
  \subfigure{
  \begin{minipage}[t]{0.23\linewidth}
  \centering
  \includegraphics[width=1.1\textwidth] {./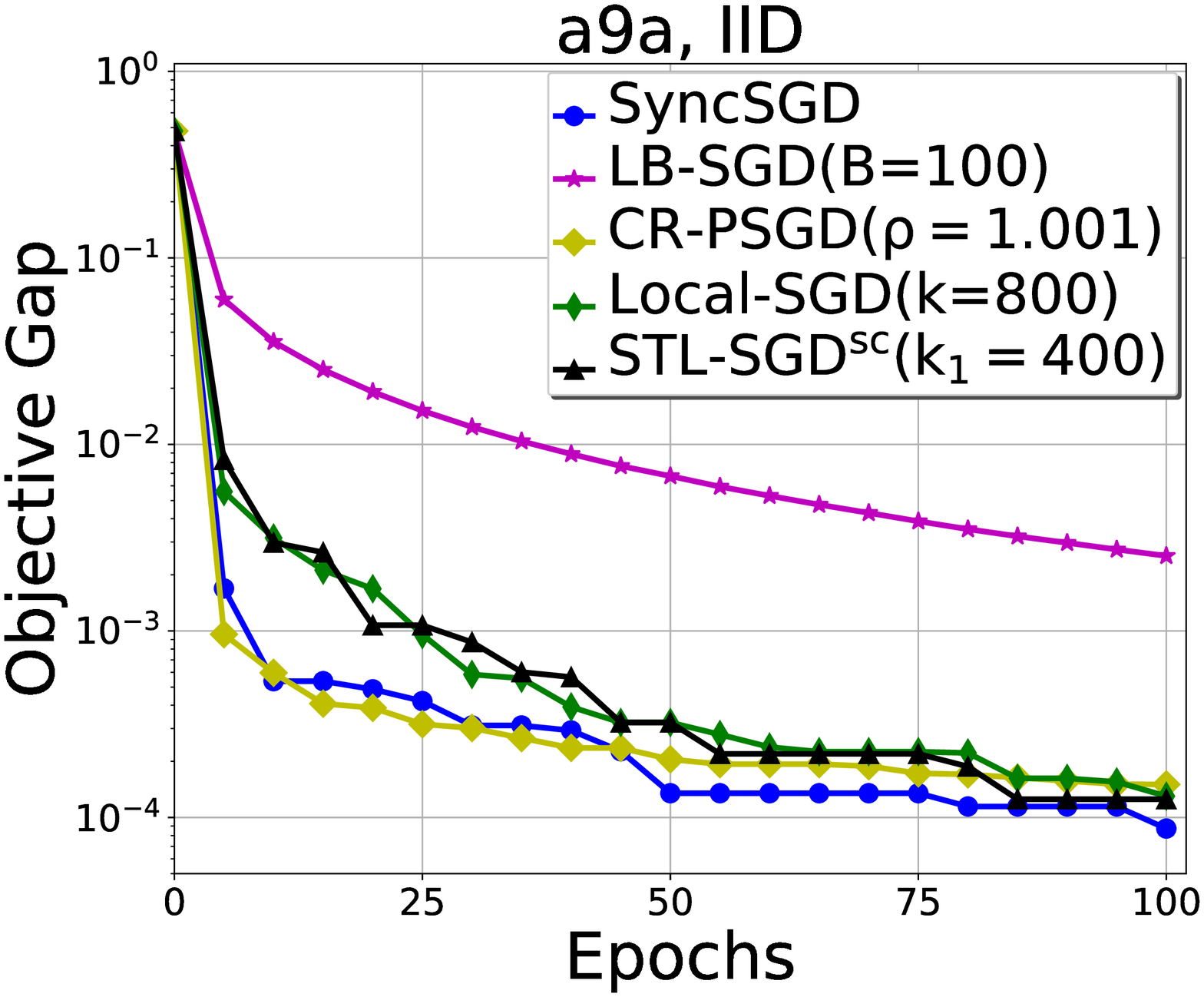}
  \end{minipage}
  }
  \subfigure{
  \begin{minipage}[t]{0.23\linewidth}
  \centering
  \includegraphics[width=1.1\textwidth] {./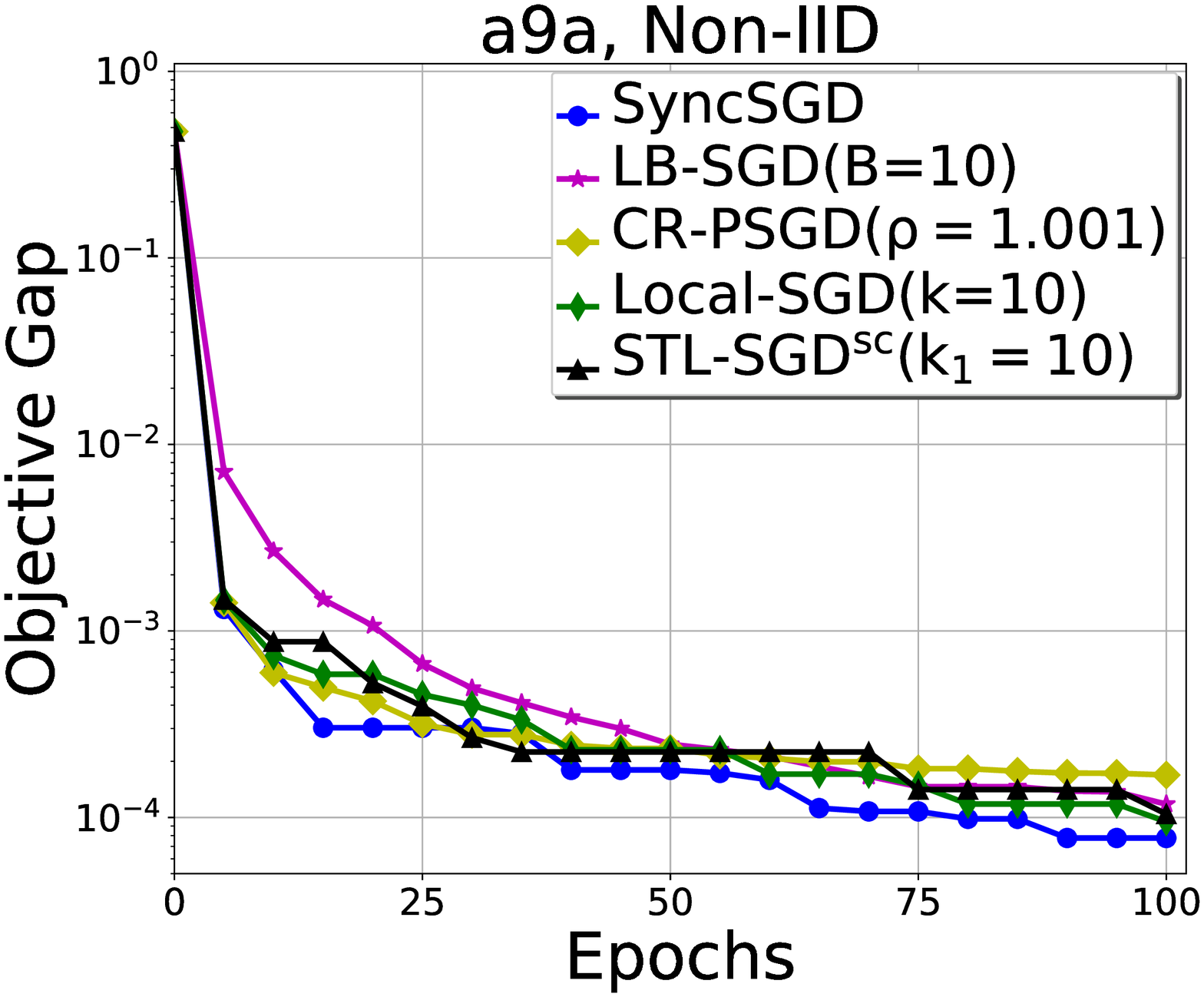}
  \end{minipage}
  }
  \subfigure{
    \begin{minipage}[t]{0.23\linewidth}
    \centering
    \includegraphics[width=1.1\textwidth] {./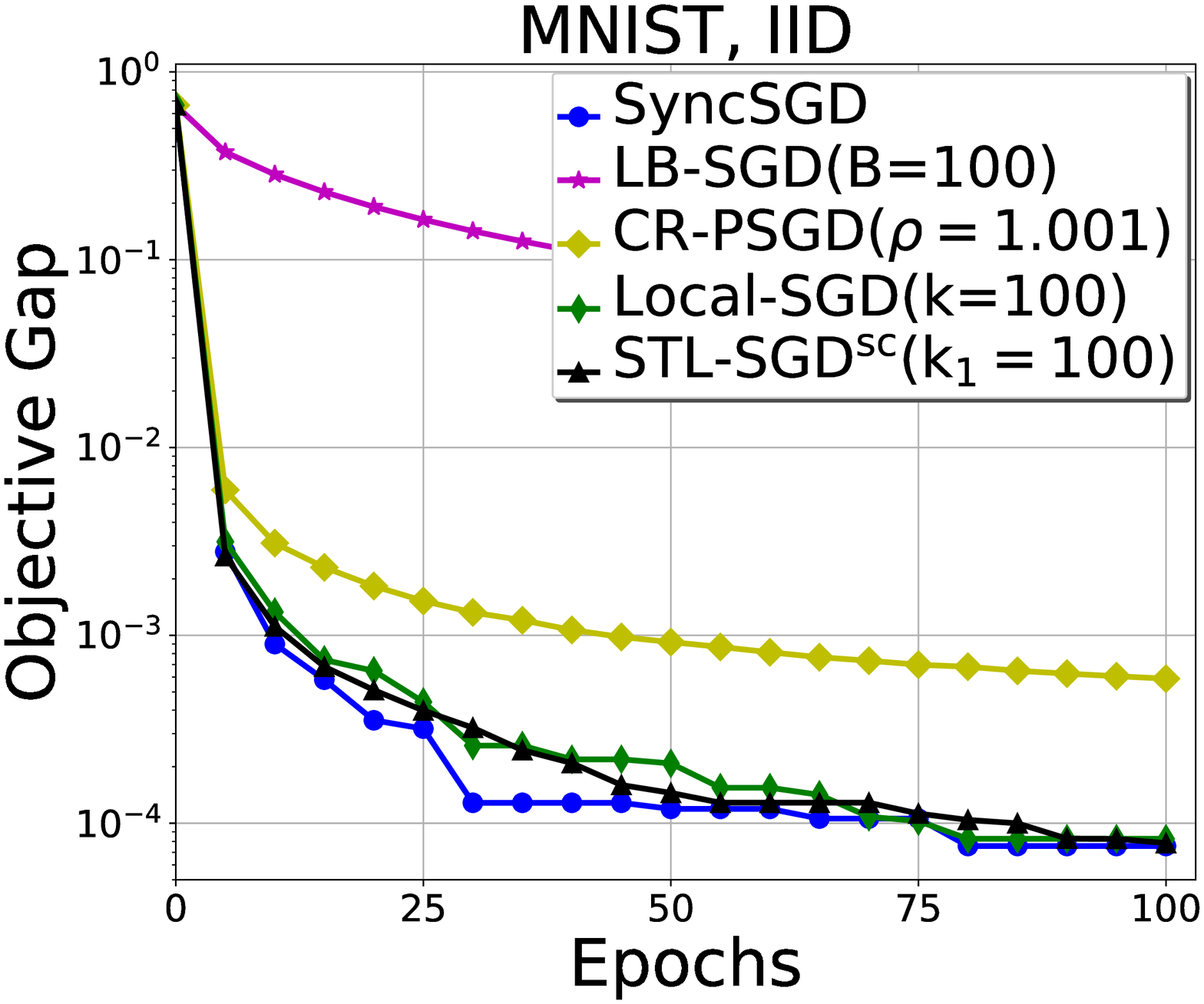}
    \end{minipage}
    }
    \subfigure{
    \begin{minipage}[t]{0.23\linewidth}
    \centering
    \includegraphics[width=1.1\textwidth] {./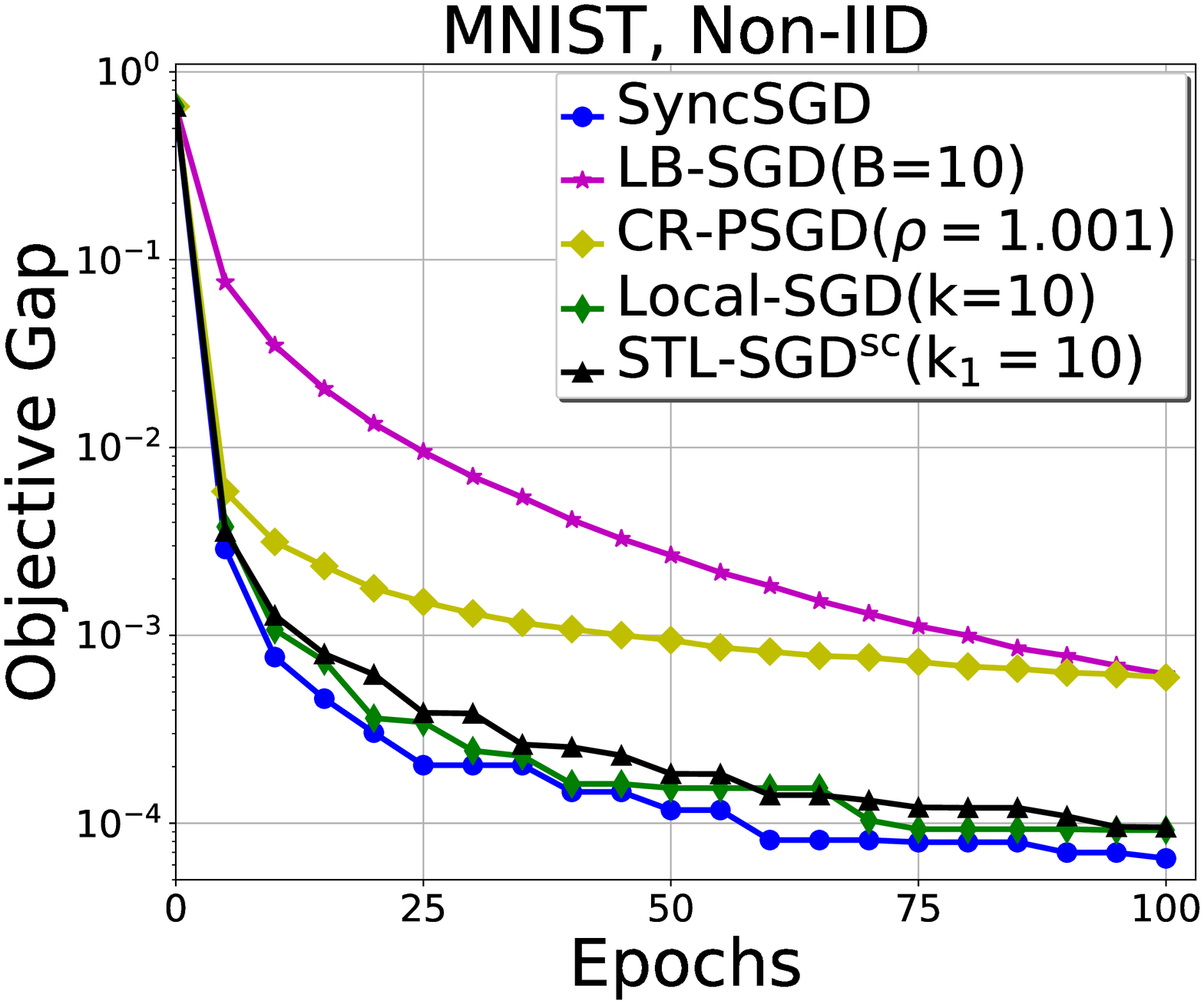}
    \end{minipage}
    }
  \caption{Training objective gap $f(x) - f(x^*)$ w.r.t epochs for logistic regression on $\mathrm{a9a}$ and $\textrm{MNIST}$ datasets. 
  }
  \label{Compare_convex_conv}
\end{figure*}

\begin{figure*}[h]
  \centering
  \subfigure{
    \begin{minipage}[t]{0.23\linewidth}
    \centering
    \includegraphics[width=1.1\textwidth] {./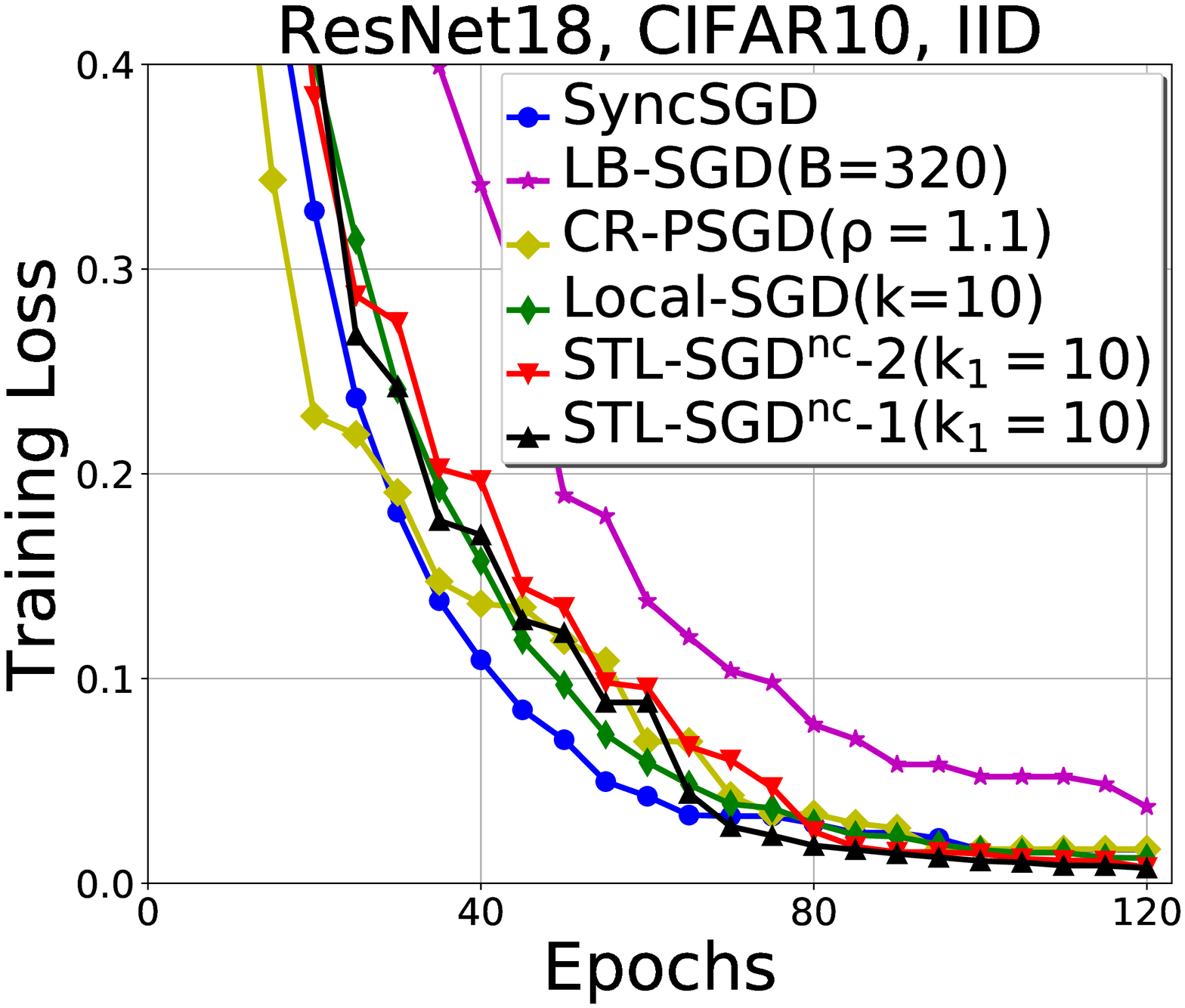}
    \end{minipage}
  }
  \subfigure{
    \begin{minipage}[t]{0.23\linewidth}
    \centering
    \includegraphics[width=1.1\textwidth] {./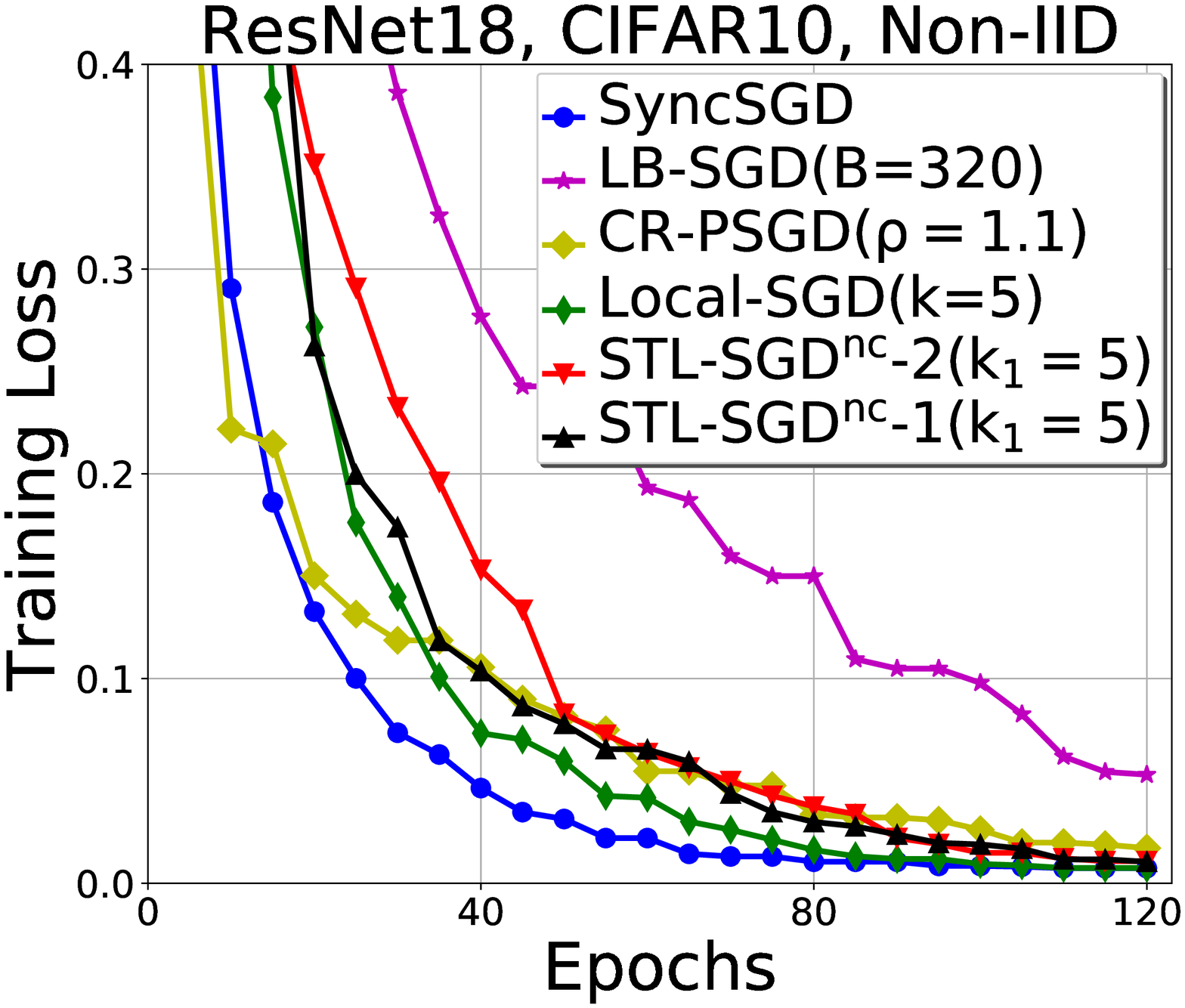}
    \end{minipage}
  }
  \subfigure{
    \begin{minipage}[t]{0.23\linewidth}
    \centering
    \includegraphics[width=1.1\textwidth] {./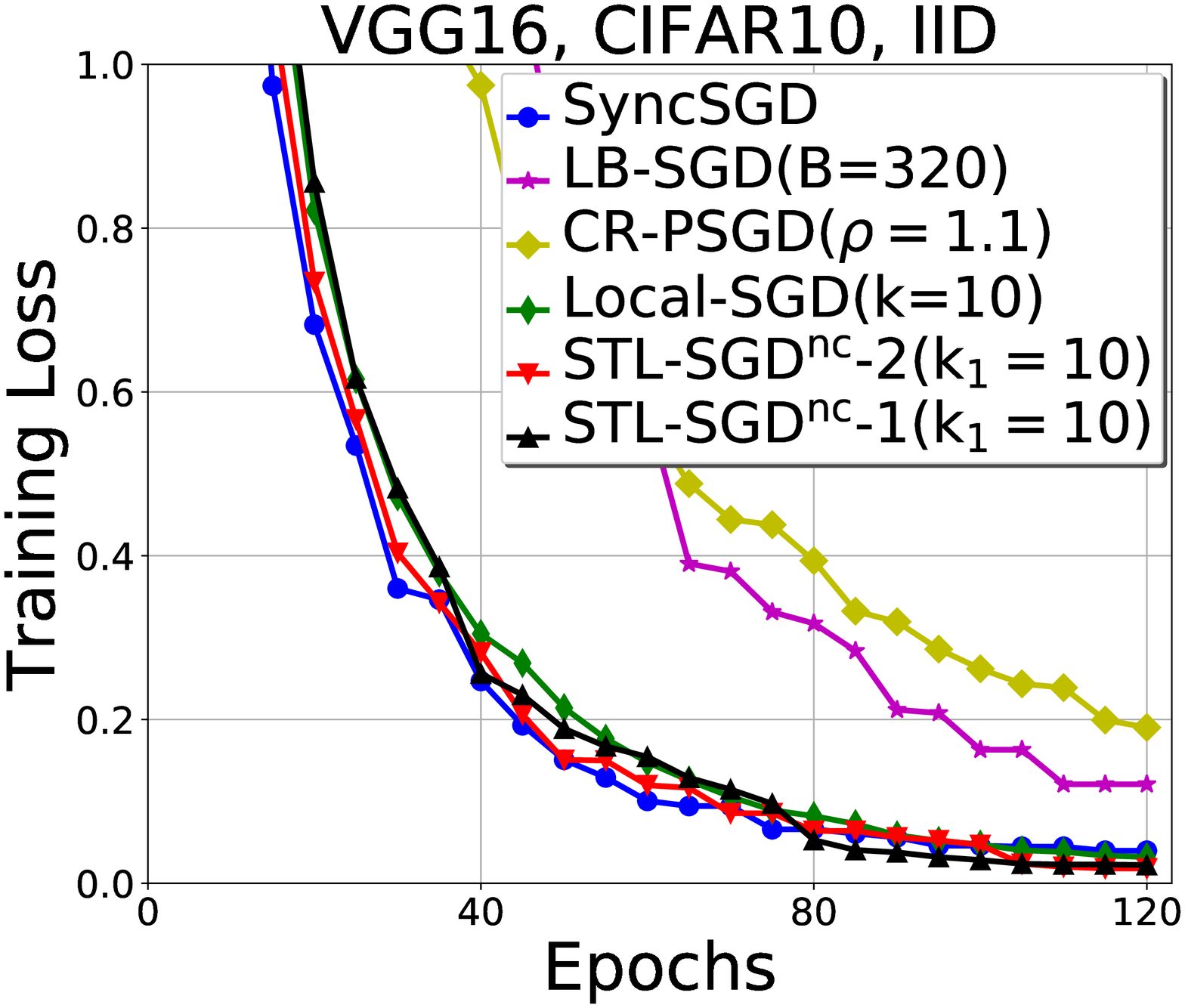}
    \end{minipage}
  }
  \subfigure{
    \begin{minipage}[t]{0.23\linewidth}
    \centering
    \includegraphics[width=1.1\textwidth] {./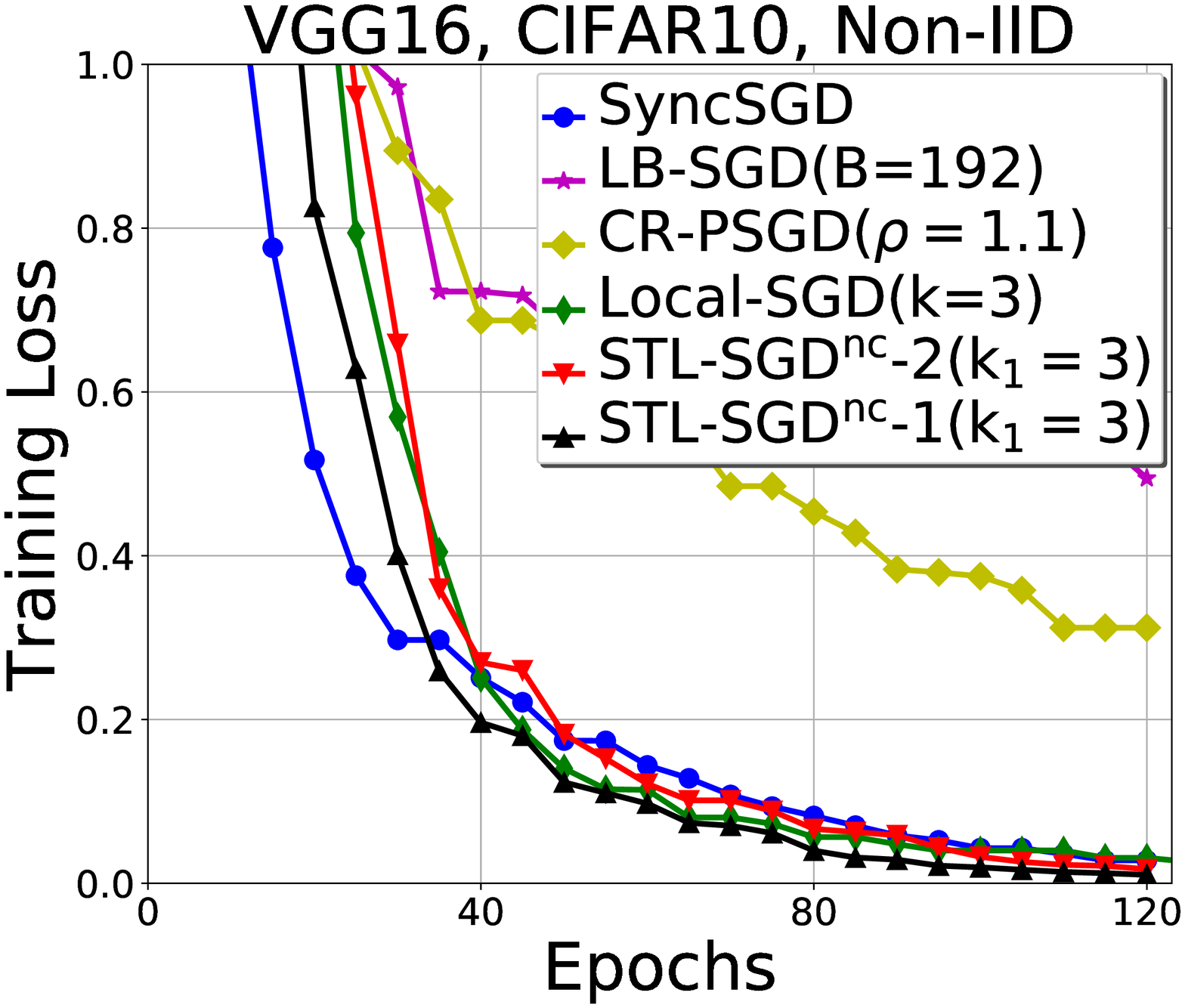}
    \end{minipage}
  }

  \caption{Training loss w.r.t epochs for ResNet18 and VGG16 on CIFAR10 dataset. 
  }
  \label{Compare_nonconvex_conv}
\end{figure*}

In this subsection, we supplement the experimental results not included in the main paper. The rules for turning the hyper-parameters are presented in Subsection Experiments and we turn all hyper-parameters to make all algorithms to achieve the best convergence speed. 
We present the experimental results of the training loss with regard to the epochs in this subsection.
The results for strongly convex 
objectives 
and non-convex objectives are shown in Figure~\ref{Compare_convex_conv} and Figure~\ref{Compare_nonconvex_conv} respectively.

From the theoretical perspective, STL-SGD, CR-PSGD and Local SGD can maintain the same convergence rate with SyncSGD: $O(\frac{1}{NT})$ for strongly convex objectives and $O(\frac{1}{\sqrt{NT}})$ for non-convex objectives. 
As shown in Figure~\ref{Compare_convex_conv} and Figure~\ref{Compare_nonconvex_conv}, when the hyper-parameters are set properly, the convergence speed of the above algorithms is  similar. 
STL-SGD and Local SGD may converge slowly in the beginning, but they match SyncSGD when the number of iterations is relatively large, which is consistent with our theory in Theorem~2 and Theorem~3 that the number of stages can not be too small.
Although LB-SGD is theoretically justified to achieve a linear speedup with respect to the batch size, it can not maintain the convergence of mini-batch SGD (or SyncSGD) when the batch size $B$ gets large. The reason could be that the bias dominates the variance as discussed in (Jain et al. 2016).

\section[C]{Proofs for Results in Section~Preliminaries}
In this section, we first present some lemmas, then give the proof for Theorem~1.
\subsection{Some Basic Lemmas}
We bound the norm of the difference between gradients with the Bregman divergence $\mathcal{D}_f(x, y) := f(x) - f(y) - \langle \nabla f(y), x-y \rangle$ for a smooth and convex function.
\begin{lemma}\label{smooth_basic}
  Suppose $f(x)$ is $L$-smooth and convex. The following inequality holds:
  \begin{equation}
    \| \nabla f(x) - \nabla f(y) \|^2 \leq 2L \mathcal{D}_f(x, y).\nonumber
  \end{equation}
\end{lemma}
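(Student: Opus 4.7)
The plan is to prove this standard co-coercivity-type inequality by auxiliary-function minimization. I would introduce the shifted function $g(z) := f(z) - \langle \nabla f(y), z \rangle$, observe that it inherits $L$-smoothness and convexity from $f$, and note crucially that $\nabla g(y) = \nabla f(y) - \nabla f(y) = 0$, so $y$ is a global minimizer of $g$.

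Next I would apply the descent lemma (a direct consequence of $L$-smoothness) to $g$ at the point $x$, taking a gradient step of size $1/L$. Because $y$ minimizes $g$, we have $g(y) \leq g(x - \tfrac{1}{L} \nabla g(x))$, and the descent lemma gives
\begin{equation*}
g\left(x - \tfrac{1}{L}\nabla g(x)\right) \leq g(x) - \tfrac{1}{2L}\|\nabla g(x)\|^2.
\end{equation*}
Chaining these yields $g(x) - g(y) \geq \tfrac{1}{2L}\|\nabla g(x)\|^2$.

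Finally I would translate back to $f$: by construction $\nabla g(x) = \nabla f(x) - \nabla f(y)$, and a direct computation shows $g(x) - g(y) = f(x) - f(y) - \langle \nabla f(y), x - y \rangle = \mathcal{D}_f(x,y)$. Substituting gives exactly $\mathcal{D}_f(x,y) \geq \tfrac{1}{2L}\|\nabla f(x) - \nabla f(y)\|^2$, which rearranges to the claimed bound.

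There is no real obstacle here; the only subtlety is recognizing that the shift by a linear term preserves both convexity and $L$-smoothness and moves the minimizer to $y$, which is what turns the one-sided descent lemma into the two-sided Bregman bound. The result is textbook (see, e.g., Nesterov's lectures, cited earlier in the excerpt for the smoothness inequality), so I would keep the proof to the few lines above.
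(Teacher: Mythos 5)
Your proof is correct: the shifted function $g(z) = f(z) - \langle \nabla f(y), z\rangle$ inherits convexity and $L$-smoothness, has its minimum at $y$, and the descent-lemma step of size $1/L$ yields $g(x)-g(y) \ge \tfrac{1}{2L}\|\nabla g(x)\|^2$, which translates exactly into the claimed Bregman bound. The paper does not prove this lemma at all --- it simply cites Theorem 2.1.5 in Nesterov's lectures --- and your argument is precisely the standard proof of that cited result, so there is nothing to reconcile.
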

\begin{proof}
This Lemma is identical to Theorem 2.1.5 (2.1.10) in (Nesterov 2018), which is a basic property of smooth and convex functions.
\end{proof}

For ease of analysis, we define $\hat{x}_t$ as the average of the local models, i.e., $\hat{x}_t = \frac{1}{N} \sum_{i=1}^N x_t^i$.  According to the update rule in Algorithm~1, we have
\begin{equation}
  \hat{x}_{t+1} = \frac{1}{N} \sum_{i=1}^N x_{t+1}^i = \frac{1}{N} \sum_{i=1}^N (x_t^i - \eta \nabla f(x_t^i, \xi_t^i)) = \hat{x}_t - \eta \frac{1}{N} \sum_{i=1}^N \nabla f(x_t^i, \xi_t^i).\nonumber
\end{equation}
We use $t_p$ to denote the last time to communicate, i.e., $t_p = \lfloor t / k \rfloor \cdot k$. Then, we get
\begin{equation}\label{eq_expansion}
  \hat{x}_t = \hat{x}_{t_p} - \frac{\eta}{N} \sum_{\tau = t_p}^{t-1} \sum_{i=1}^N \nabla f(x_{\tau}^i, \xi_{\tau}^i)~~~~~~{\rm and}~~~~~x_t^i = \hat{x}_{t_p} - \eta \sum_{\tau = t_p}^{t-1} \nabla f(x_{\tau}^i, \xi_{\tau}^i).
\end{equation}
As each client updates its model locally and communicates with others periodically, it is important to make sure that the divergence of local models is not very large.
We use Lemma~\ref{lemma4} to bound the difference between $\hat{x}_t$ and $x_t^i$ to guarantee this.
\begin{lemma}\label{lemma4}
Under Assumptions~1 and 2, for any $x \in R^d$, Algorithm~1 ensures that
\begin{equation} \label{lemma4:0}
  \frac{1}{N}  \sum_{i=1}^N  \sum_{t=0}^{T-1} \mathbb{E} \| \hat{x}_t - x_t^i \|^2 \leq \frac{k-1}{1 - 2k^2 \eta^2 L^2} \left( T \eta^2 \sigma^2 +  8 k \eta^2 L \sum_{t=0}^{T-1} 
  \mathbb{E} \mathcal{D}_{f}(\hat{x}_{\tau}, x)
   + 4T k \eta^2 \zeta_f^x \right).
\end{equation}
\end{lemma}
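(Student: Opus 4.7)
The plan is to start from the closed-form expression for $\hat{x}_t - x_t^i$ provided in \eqref{eq_expansion} and turn $\|\hat{x}_t - x_t^i\|^2$ into a sum of single-step gradient residuals to which the usual noise/deterministic decomposition can be applied. Specifically, subtracting the two identities in \eqref{eq_expansion} gives $\hat{x}_t - x_t^i = \eta\sum_{\tau=t_p}^{t-1}\big(\nabla f(x_\tau^i,\xi_\tau^i) - \tfrac{1}{N}\sum_{j}\nabla f(x_\tau^j,\xi_\tau^j)\big)$. Using Jensen's inequality on the convex function $\|\cdot\|^2$ and noting $t - t_p \le k-1$, I would obtain $\|\hat{x}_t - x_t^i\|^2 \le (k-1)\eta^2\sum_{\tau=t_p}^{t-1}\|\cdots\|^2$. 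This is the origin of the leading $(k-1)$ factor in \eqref{lemma4:0}.

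Next I would split each stochastic gradient into its mean $\nabla f_i(x_\tau^i)$ and an independent zero-mean noise term bounded by Assumption~\ref{assu2}. Taking expectation, the noise pieces are mutually uncorrelated across clients, so the cross terms vanish and after summing over $t$ and $i$ they contribute an additive term of order $T\eta^2\sigma^2$, matching the first bracketed term in \eqref{lemma4:0}. For the remaining deterministic term $\big\|\tfrac{1}{N}\sum_j\nabla f_j(x_\tau^j) - \nabla f_i(x_\tau^i)\big\|^2$, I would apply Jensen once more to push the average outside the square and then insert the reference point $x$ via a triangle inequality, splitting the pairwise differences into $\|\nabla f_j(x_\tau^j) - \nabla f_j(x)\|^2$, $\|\nabla f_j(x) - \nabla f_i(x)\|^2$, and $\|\nabla f_i(x) - \nabla f_i(x_\tau^i)\|^2$. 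The first and third are controlled by Lemma~\ref{smooth_basic} as $2L\,\mathcal{D}_{f_j}(x_\tau^j,x)$ and $2L\,\mathcal{D}_{f_i}(x_\tau^i,x)$, and the middle, averaged over $i,j$, is exactly (up to a constant) $\zeta_f^x$, which after summing over $t$ yields the $4Tk\eta^2\zeta_f^x$ term.

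To pass from the per-client divergences $\mathcal{D}_{f_j}(x_\tau^j,x)$ to the target $\mathcal{D}_f(\hat{x}_\tau,x)$, I would use $L$-smoothness to write $\mathcal{D}_{f_j}(x_\tau^j,x) \le \mathcal{D}_{f_j}(\hat{x}_\tau,x) + \langle \nabla f_j(\hat{x}_\tau)-\nabla f_j(x), x_\tau^j-\hat{x}_\tau\rangle + \tfrac{L}{2}\|x_\tau^j-\hat{x}_\tau\|^2$ and average over $j$. The linear term vanishes in the average (or is absorbed by Young's inequality) and what remains is $\mathcal{D}_f(\hat{x}_\tau,x)$ plus a correction of order $L \cdot \tfrac{1}{N}\sum_j\|x_\tau^j-\hat{x}_\tau\|^2$, which is precisely the quantity we are trying to bound. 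Summing over $\tau\in[t_p,t-1]$ and then over $t\in[T]$ (each $\tau$ appearing in at most $k$ outer indices) gives the $8k\eta^2 L\sum_t \mathbb{E}\mathcal{D}_f(\hat{x}_\tau,x)$ term, while the correction feeds back a copy of the left-hand side of \eqref{lemma4:0} with coefficient $2k^2\eta^2 L^2$; moving it to the left produces the $(1-2k^2\eta^2L^2)^{-1}$ prefactor.

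The main obstacle is bookkeeping: the constants $8k\eta^2L$, $4Tk\eta^2\zeta_f^x$, and especially $2k^2\eta^2L^2$ must emerge exactly, which forces careful choices in every splitting (the $3$-way triangle inequality, the $L$-smoothness step that introduces $\|x_\tau^j-\hat{x}_\tau\|^2$, and the Jensen step that distributes the average over $j$). In particular, the coefficient $2k^2\eta^2L^2$ of the self-referential term is what ultimately constrains the admissible communication period and learning rate (through the requirement $1-2k^2\eta^2L^2 > 0$), so any looseness here would degrade the step-size conditions of Theorem~\ref{theorem1}. Everything else is a straightforward but tedious aggregation of the per-step bounds over $(i,\tau,t)$.
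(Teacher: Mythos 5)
Your overall architecture (closed form from (\ref{eq_expansion}), noise/drift split, Bregman divergence via Lemma~\ref{smooth_basic}, heterogeneity term, self-referential correction absorbed into the $(1-2k^2\eta^2L^2)^{-1}$ prefactor) matches the paper's, but there is a genuine gap in your very first step that costs a factor of $k$ on the noise term. You apply Jensen/Cauchy--Schwarz to the whole sum $\sum_{\tau=t_p}^{t-1}\big(\nabla f(x_\tau^i,\xi_\tau^i)-\tfrac{1}{N}\sum_j\nabla f(x_\tau^j,\xi_\tau^j)\big)$ \emph{before} separating the stochastic noise. After that, each of the up-to-$(k-1)$ per-step residuals contributes its own $\sigma^2$, so the noise accumulates to $(k-1)^2T\eta^2\sigma^2$ rather than $(k-1)T\eta^2\sigma^2$: the factor $(k-1)$ you pulled out of Jensen multiplies the noise contribution as well, contrary to your accounting. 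The paper avoids this by decomposing $\sum_\tau\nabla f(x_\tau^i,\xi_\tau^i)$ into the martingale part $\sum_\tau(\nabla f(x_\tau^i,\xi_\tau^i)-\nabla f_i(x_\tau^i))$ plus the drift $\sum_\tau\nabla f_i(x_\tau^i)$ first, using independence across time to get $\mathbb{E}\|\sum_\tau(\nabla f(x_\tau^i,\xi_\tau^i)-\nabla f_i(x_\tau^i))\|^2=\sum_\tau\mathbb{E}\|\nabla f(x_\tau^i,\xi_\tau^i)-\nabla f_i(x_\tau^i)\|^2\le(k-1)\sigma^2$ with no extra factor, and applying Cauchy--Schwarz only to the drift sum. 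This is not cosmetic: with your weaker bound the admissible $k$ in Theorem~\ref{theorem1} for the IID case degrades from $O(\frac{1}{\eta LN})$ to $O(\frac{1}{\sqrt{\eta LN}})$, which destroys the $O(N^{\frac{3}{2}}T^{\frac{1}{2}})$ and $O(N\log T)$ communication complexities that are the point of the paper.

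A secondary problem is your passage from $\mathcal{D}_{f_j}(x_\tau^j,x)$ to $\mathcal{D}_f(\hat{x}_\tau,x)$: the linear term $\frac{1}{N}\sum_j\langle\nabla f_j(\hat{x}_\tau)-\nabla f_j(x),\,x_\tau^j-\hat{x}_\tau\rangle$ does \emph{not} vanish in the average, because the gradient-difference vector depends on $j$; only if it were a fixed vector would $\sum_j(x_\tau^j-\hat{x}_\tau)=0$ kill it. Your Young's-inequality fallback is workable but leaks extra mass into both the Bregman term and the self-referential $\|x_\tau^j-\hat{x}_\tau\|^2$ term, so the coefficients $8k\eta^2L$ and $2k^2\eta^2L^2$ will not come out as stated. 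The paper's route is simpler and exact here: write $\nabla f_i(x_\tau^i)=(\nabla f_i(x_\tau^i)-\nabla f_i(\hat{x}_\tau))+(\nabla f_i(\hat{x}_\tau)-\nabla f_i(x))+\nabla f_i(x)$, bound the first piece by $L$-Lipschitzness of $\nabla f_i$ (the sole source of the $2k^2\eta^2L^2$ coefficient) and the second by $2L\,\mathcal{D}_{f_i}(\hat{x}_\tau,x)$ via Lemma~\ref{smooth_basic}, whose average over $i$ is exactly $2L\,\mathcal{D}_f(\hat{x}_\tau,x)$ with no base-point shift needed.
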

\begin{proof}
According to (\ref{eq_expansion}), we have
\begin{eqnarray}\label{lemma4:1}
   \| \hat{x}_t - x_t^i \|^2 &=&  \left \| \hat{x}_{t_p} - \frac{\eta}{N} \sum_{\tau = t_p}^{t-1} \sum_{j=1}^N \nabla f(x_{\tau}^j, \xi_{\tau}^j) - \left( \hat{x}_{t_p} - \eta \sum_{\tau = t_p}^{t-1} \nabla f(x_{\tau}^i, \xi_{\tau}^i) \right)   \right \|^2
   \nonumber\\
   &=& \eta^2 \left\| \sum_{\tau = t_p}^{t-1} \nabla f(x_{\tau}^i, \xi_{\tau}^i) - \frac{1}{N} \sum_{j=1}^N \sum_{\tau = t_p}^{t-1}   \nabla f(x_{\tau}^j, \xi_{\tau}^j) \right\|^2.
   \nonumber
\end{eqnarray}
Since $\frac{1}{N} \sum_{i=1}^N \left \| A_i - \frac{1}{N} \sum_{j=1}^N  A_j \right\|^2 = \frac{1}{N} \sum_{i=1}^N \left\| A_i \right\|^2 - \left\| \frac{1}{N} \sum_{i=1}^N  A_i \right\|^2$, we have
\begin{eqnarray} \label{lemma4:2}
  \frac{1}{N}  \sum_{i=1}^N  \mathbb{E} \left\| \hat{x}_t - x_t^i \right\|^2 &=& \eta^2  \left( \frac{1}{N} \sum_{i=1}^N  \mathbb{E} \left\| \sum_{\tau = t_p}^{t-1} \nabla f(x_{\tau}^i, \xi_{\tau}^i) \right\|^2 - \mathbb{E} \left\| \frac{1}{N} \sum_{i=1}^N \sum_{\tau = t_p}^{t-1} \nabla f(x_{\tau}^i, \xi_{\tau}^i) \right\|^2 \right)
  \nonumber\\
  &\leq& \frac{\eta^2}{N}  \sum_{i=1}^N \mathbb{E} \left\| \sum_{\tau = t_p}^{t-1} \nabla f(x_{\tau}^i, \xi_{\tau}^i) \right\|^2.
\end{eqnarray}
Next, we bound  
$\mathbb{E} \left\| \sum_{\tau = t_p}^{t-1} \nabla f(x_{\tau}^i, \xi_{\tau}^i) \right\|^2$:
\begin{eqnarray} \label{lemma4:3}
  \mathbb{E} \left\| \sum_{\tau = t_p}^{t-1} \nabla f(x_{\tau}^i, \xi_{\tau}^i) \right\|^2 &=& \mathbb{E} \left\| \sum_{\tau = t_p}^{t-1} \nabla f(x_{\tau}^i, \xi_{\tau}^i) - \sum_{\tau = t_p}^{t-1} \nabla f_i(x_{\tau}^i) + \sum_{\tau = t_p}^{t-1} \nabla f_i(x_{\tau}^i) \right\|^2
  \nonumber\\
  &\overset{(a)}{=}& \mathbb{E} \left\| \sum_{\tau = t_p}^{t-1} \nabla f(x_{\tau}^i, \xi_{\tau}^i) - \sum_{\tau = t_p}^{t-1} \nabla f_i(x_{\tau}^i) \right\|^2 + \mathbb{E} \left\|\sum_{\tau = t_p}^{t-1} \nabla f_i(x_{\tau}^i) \right\|^2
  \nonumber\\
  &\overset{(b)}{=}& \sum_{\tau = t_p}^{t-1} \mathbb{E} \left\| \nabla f(x_{\tau}^i, \xi_{\tau}^i) - \nabla f_i(x_{\tau}^i) \right\|^2 + \mathbb{E} \left\|\sum_{\tau = t_p}^{t-1} \nabla f_i(x_{\tau}^i) \right\|^2
  \nonumber\\
  &\overset{(c)}{\leq}& \sum_{\tau = t_p}^{t-1} \mathbb{E} \left\| \nabla f(x_{\tau}^i, \xi_{\tau}^i) - \nabla f_i(x_{\tau}^i) \right\|^2 + (t - t_p)\sum_{\tau = t_p}^{t-1} \mathbb{E} \left\| \nabla f_i(x_{\tau}^i) \right\|^2
  \nonumber\\
  &\overset{(d)}{\leq}& (t-t_p) \sigma^2 + (t - t_p)\sum_{\tau = t_p}^{t-1} \mathbb{E} \left\| \nabla f_i(x_{\tau}^i) \right\|^2,
\end{eqnarray}
where $(a)$ and $(b)$ hold because $\mathbb{E} \nabla f(x_{\tau}^i, \xi_{\tau}^i) = \nabla f_i(x_{\tau}^i)$ and $\xi_{\tau}^i$'s are independent; $(c)$ follows from Cauchy's inequality; $(d)$ is due to Assumption~2. We then bound $\mathbb{E} \left\|  \nabla f_i(x_{\tau}^i)  \right\|^2$:
\begin{eqnarray} \label{lemma4:4}
  \mathbb{E} \left\|  \nabla f_i(x_{\tau}^i)  \right\|^2 &=& \mathbb{E} \left\|  \nabla f_i(x_{\tau}^i) -  \nabla f_i(\hat{x}_{\tau}) + \nabla f_i(\hat{x}_{\tau}) \right\|^2
  \nonumber\\
  &\overset{(a)}{\leq}& 2 \mathbb{E} \| \nabla f_i(x_{\tau}^i) -  \nabla f_i(\hat{x}_{\tau}) \|^2 + 2 \mathbb{E} \| \nabla f_i(\hat{x}_{\tau}) \|^2
  \nonumber\\
  &\overset{(b)}{\leq}& 2 L^2 \mathbb{E} \| x_{\tau}^i -  \hat{x}_{\tau} \|^2 + 2 \mathbb{E} \| \nabla f_i(\hat{x}_{\tau}) - \nabla f_i(x) + \nabla f_i(x) \|^2
  \nonumber\\
  &\overset{(c)}{\leq}& 2 L^2 \mathbb{E} \| x_{\tau}^i -  \hat{x}_{\tau} \|^2 + 4 \mathbb{E} \| \nabla f_i(\hat{x}_{\tau}) - \nabla f_i(x) \|^2 + 4 \mathbb{E} \| \nabla f_i(x) \|^2
  \nonumber\\
  &\overset{(d)}{\leq}& 2 L^2 \mathbb{E} \| x_{\tau}^i -  \hat{x}_{\tau} \|^2 + 8L \mathbb{E} 
  \mathcal{D}_{f_i}(\hat{x}_{\tau}, x)
  + 4 \mathbb{E} \| \nabla f_i(x) \|^2,
\end{eqnarray}
where $(a)$ and $(c)$ come from $\| a+b \|^2 \leq 2\|a\|^2 + 2\| b\|^2$, $(b)$ holds because of Assumption~1, $(d)$ 
follows from Lemma~\ref{smooth_basic}.
Substituting (\ref{lemma4:4}) into (\ref{lemma4:3}) and based on $t - t_p \leq k - 1$, we have
\begin{eqnarray} \label{lemma4:5}
  &&\mathbb{E} \left\| \sum_{\tau = t_p}^{t-1} \nabla f(x_{\tau}^i, \xi_{\tau}^i) \right\|^2 
  \nonumber\\
  &\leq& (k-1) \sigma^2 + (k-1)\sum_{\tau = t_p}^{t-1} \left(2 L^2 \mathbb{E} \| x_{\tau}^i -  \hat{x}_{\tau} \|^2 + 8L \mathbb{E} 
  \mathcal{D}_{f_i}(\hat{x}_{\tau}, x) 
  + 4 \mathbb{E} \| \nabla f_i(x) \|^2 \right).
\end{eqnarray}
Substituting (\ref{lemma4:5}) into (\ref{lemma4:2}) and according to the definition of $\zeta_f^x$, we get
\begin{eqnarray} \label{lemma4:6}
  \frac{1}{N} \sum_{i=1}^N  \mathbb{E} \left\| \hat{x}_t - x_t^i \right\|^2
  &\leq& \eta^2 (k-1) \sigma^2 +  \frac{2 (k-1) \eta^2 L^2}{N} \sum_{i=1}^N  \sum_{\tau = t_p}^{t-1} \mathbb{E} \| x_\tau^i - \hat{x}_\tau \|^2 
  \nonumber\\
  &&+ 8 (k-1) \eta^2 L \sum_{\tau = t_p}^{t-1} \mathbb{E} 
  \mathcal{D}_{f}(\hat{x}_{\tau}, x)
  + 4 (k-1) \eta^2 \sum_{\tau = t_p}^{t-1} \zeta_f^x. 
  \nonumber
\end{eqnarray}
Summing up this inequality from $t=0$ to $T - 1$, we have
\begin{eqnarray} \label{lemma4:7}
  &&\frac{1}{N}  \sum_{i=1}^N  \sum_{t=0}^{T-1} \mathbb{E} \left\| \hat{x}_t - x_t^i \right\|^2 
  \nonumber\\
  &\leq& (k-1) \Bigg(T \eta^2\sigma^2 + \frac{2 \eta^2 L^2}{N} \sum_{i=1}^N  \sum_{t=0}^{T-1} \sum_{\tau = t_p}^{t-1}  \| x_\tau^i - \hat{x}_\tau \|^2 
  \nonumber\\
  &&+ 8 \eta^2 L \sum_{t=0}^{T-1} \sum_{\tau = t_p}^{t-1} 
  \mathbb{E} \mathcal{D}_{f}(\hat{x}_{\tau}, x)
  + 4 \eta^2  \sum_{t=0}^{T-1} \sum_{\tau = t_p}^{t-1} \zeta_f^x \Bigg)
  \nonumber\\
  &\leq& (k-1)\left(T \eta^2 \sigma^2 +  \frac{2 k \eta^2 L^2}{N}  \sum_{i=1}^N  \sum_{t=0}^{T-1} \| x_\tau^i - \hat{x}_\tau \|^2 
  + 8 k \eta^2 L \sum_{t=0}^{T-1} 
  \mathbb{E} \mathcal{D}_{f}(\hat{x}_{\tau}, x)
  + 4 T k \eta^2 \zeta_f^x \right),~~~~~~~
\end{eqnarray}
where the second inequality comes from a simple counting argument: $\sum_{t=0}^T \sum_{\tau=t_p}^{t-1} A_\tau \leq \sum_{t=0}^T \sum_{\tau=t-k}^{t-1} A_\tau \leq k \sum_{t=0}^T A_t, A_t \geq 0$. Rearranging (\ref{lemma4:7}), we get
\begin{equation} \label{lemma4:8}
   \frac{1}{N}  \sum_{i=1}^N  \sum_{t=0}^{T-1} \mathbb{E} \| \hat{x}_t - x_t^i \|^2 \leq \frac{k-1}{1 - 2k^2 \eta^2 L^2} \left( T \eta^2 \sigma^2 +  8 k \eta^2 L \sum_{t=0}^{T-1} 
  \mathbb{E} \mathcal{D}_{f}(\hat{x}_{\tau}, x)
   + 4T k \eta^2 \zeta_f^x \right).
   \nonumber
\end{equation}
\end{proof}

Below, we use Lemma~\ref{lemma5} to bound the average of stochastic gradients.
\begin{lemma}\label{lemma5}
  Under Assumptions~1 and 
  2,
  we have
  \begin{equation} \label{lemma5:0}
    \mathbb{E} \left \| \sum_{i=1}^N \frac{1}{N} \nabla f (x_t^i, \xi_t^i) \right\|^2 \leq \frac{\sigma^2}{N} + \frac{3L^2}{N} \sum_{i=1}^N \mathbb{E} \left \| x_t^i - \hat{x}_t \right\|^2 + 
    \frac{3}{2} \mathbb{E} \left \| \nabla f(\hat{x}_t)\right\|^2.
  \end{equation}
\end{lemma}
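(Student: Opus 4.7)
The plan is to decompose the stochastic gradient average into a zero-mean noise part and a ``true local gradient'' part, then control each piece using independence, smoothness, and a standard Young-style splitting inequality.

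First I would write $\nabla f(x_t^i,\xi_t^i) = \bigl(\nabla f(x_t^i,\xi_t^i) - \nabla f_i(x_t^i)\bigr) + \nabla f_i(x_t^i)$ for every client $i$. Averaging and taking expectations, the cross term vanishes because the stochastic noise at client $i$ has conditional mean zero given the history and the $\xi_t^i$'s are independent across $i$. Using this independence again on the pure-noise term, together with Assumption~\ref{assu2}, gives
\begin{equation*}
\mathbb{E}\left\|\frac{1}{N}\sum_{i=1}^N\bigl(\nabla f(x_t^i,\xi_t^i)-\nabla f_i(x_t^i)\bigr)\right\|^2 = \frac{1}{N^2}\sum_{i=1}^N \mathbb{E}\|\nabla f(x_t^i,\xi_t^i)-\nabla f_i(x_t^i)\|^2 \leq \frac{\sigma^2}{N},
\end{equation*}
which produces the first term on the right-hand side.

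Next I would handle the deterministic piece $\frac{1}{N}\sum_i \nabla f_i(x_t^i)$ by adding and subtracting $\nabla f_i(\hat{x}_t)$ and recognizing $\frac{1}{N}\sum_i \nabla f_i(\hat{x}_t) = \nabla f(\hat{x}_t)$. Then I would apply the inequality $\|a+b\|^2 \leq (1+c)\|a\|^2 + (1+1/c)\|b\|^2$ with $c=2$ to obtain the factors $3$ and $\tfrac{3}{2}$ that appear in the statement. Jensen's inequality (or Cauchy--Schwarz) converts the squared norm of the average into an average of squared norms, and Assumption~\ref{assu1} ($L$-smoothness of each $f_i$) yields
\begin{equation*}
\left\|\frac{1}{N}\sum_{i=1}^N\bigl(\nabla f_i(x_t^i)-\nabla f_i(\hat{x}_t)\bigr)\right\|^2 \leq \frac{1}{N}\sum_{i=1}^N \|\nabla f_i(x_t^i)-\nabla f_i(\hat{x}_t)\|^2 \leq \frac{L^2}{N}\sum_{i=1}^N \|x_t^i-\hat{x}_t\|^2.
\end{equation*}

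Adding the two bounds gives exactly the inequality in the statement. There is no genuinely hard step here; the only place to be careful is justifying that the noise/signal cross term vanishes in expectation, which requires conditioning on the filtration generated by $\{x_\tau^i\}_{\tau \leq t}$ before invoking the independence of the $\xi_t^i$'s across $i$ and the unbiasedness $\mathbb{E}[\nabla f(x_t^i,\xi_t^i)\mid x_t^i]=\nabla f_i(x_t^i)$. Once that is spelled out, the rest is a mechanical application of $L$-smoothness and the $(1+c)/(1+1/c)$ splitting.
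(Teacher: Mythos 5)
Your proposal is correct and follows essentially the same route as the paper's proof: the same noise/signal decomposition with the cross term vanishing by independence and unbiasedness, the same Young-type splitting $\|a+b\|^2 \leq 3\|a\|^2 + \tfrac{3}{2}\|b\|^2$ (i.e.\ $c=2$), and the same use of Jensen/Cauchy--Schwarz followed by $L$-smoothness. No gaps; your remark about conditioning on the filtration before invoking independence is exactly the care the paper's argument implicitly relies on.
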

\begin{proof}
Since $\xi_t^i$'s are independent, we have
\begin{eqnarray} \label{lemma5:1}
  \mathbb{E} \left \|  \frac{1}{N} \sum_{i=1}^N  \nabla f (x_t^i, \xi_t^i) \right\|^2  &=& \mathbb{E} \left \| \frac{1}{N} \sum_{i=1}^N  \nabla f_i (x_t^i, \xi_t^i) -  \frac{1}{N} \sum_{i=1}^N  \nabla f_i (x_t^i) + \frac{1}{N} \sum_{i=1}^N \nabla f_i (x_t^i)  \right\|^2
  \nonumber\\
  &=&  \frac{1}{N^2} \mathbb{E} \left \| \sum_{i=1}^N  \nabla f (x_t^i, \xi_t^i) -   \sum_{i=1}^N  \nabla f_i (x_t^i) \right\|^2 +   \mathbb{E} \left \| \frac{1}{N} \sum_{i=1}^N  \nabla f_i (x_t^i)  \right\|^2
  \nonumber\\
  &=&  \frac{1}{N^2} \sum_{i=1}^N \mathbb{E} \left \| \nabla f (x_t^i, \xi_t^i) -  \nabla f_i (x_t^i) \right\|^2 +  \mathbb{E} \left \|  \frac{1}{N}\sum_{i=1}^N \nabla f_i (x_t^i)  \right\|^2
  \nonumber\\
  &\leq& \frac{\sigma^2}{N} +  \mathbb{E} \left \|\frac{1}{N} \sum_{i=1}^N \nabla f_i (x_t^i)  \right\|^2,
\end{eqnarray}
where the last inequality comes from Assumption~2. According to Young's Inequality and Cauchy's Inequality, we have
\begin{eqnarray} \label{lemma5:2}
  \mathbb{E} \left \| \frac{1}{N} \sum_{i=1}^N \nabla f_i (x_t^i)  \right\|^2
  &=&  \mathbb{E} \left \| \frac{1}{N} \sum_{i=1}^N \nabla f_i (x_t^i) - \frac{1}{N} \sum_{i=1}^N \nabla f_i (\hat{x}_t) + \frac{1}{N} \sum_{i=1}^N \nabla f_i (\hat{x}_t) \right\|^2
  \nonumber\\
  &\leq& 3 \mathbb{E} \left \| \frac{1}{N}  \sum_{i=1}^N \left( \nabla f_i (x_t^i) - \nabla f_i (\hat{x}_t) \right) \right\|^2 + \frac{3}{2} \mathbb{E} \left \| \frac{1}{N}  \sum_{i=1}^N \nabla f_i (\hat{x}_t) \right\|^2
  \nonumber\\
  &=&  \frac{3}{N^2} \mathbb{E} \left \| \sum_{i=1}^N \left( \nabla f_i (x_t^i) - \nabla f_i (\hat{x}_t) \right) \right\|^2 + \frac{3}{2} \mathbb{E} \left \| \nabla f(\hat{x}_t) \right\|^2
  \nonumber\\
  &\leq& \frac{3}{N} \sum_{i=1}^N \mathbb{E} \left \| \nabla f_i (x_t^i) - \nabla f_i (\hat{x}_t) \right\|^2 + \frac{3}{2} \mathbb{E} \left \| \nabla f(\hat{x}_t)\right\|^2
  \nonumber\\
  &\leq& \frac{3L^2}{N} \sum_{i=1}^N \mathbb{E} \left \| x_t^i - \hat{x}_t \right\|^2 + 
  \frac{3}{2} \mathbb{E} \left \| \nabla f(\hat{x}_t)\right\|^2,
\end{eqnarray}
where the last inequality holds since $f_i(x)$ is $L$-smooth. Substituting (\ref{lemma5:2}) into (\ref{lemma5:1}), we complete the proof.
\end{proof}

Next, we bounded $f(\hat{x}_t) - f(x)$ for any $x \in R^d$ with Lemma~\ref{lemma1}.
\begin{lemma}\label{lemma1}
  Suppose Assumptions~1 and 2 hold and $f(x)$ is convex. When Algorithm~1 runs with a fixed learning rate $\eta$, for any $x \in R^d$, we have
  \begin{eqnarray} \label{lemma1:0}
    && 2 \eta \sum_{t=0}^{T-1} \mathbb{E} \left( f(\hat{x}_t) - f(x)  \right) - \frac{3 \eta^2}{2} \sum_{t=0}^{T-1} \mathbb{E} \left \| \nabla f(\hat{x}_t)\right\|^2 
    \nonumber\\
    &&- \frac{(\eta L + 3 \eta^2 L^2)(k-1)}{1 - 2k^2 \eta^2 L^2} 8 k\eta^2 L \sum_{t=0}^{T-1} 
    \mathbb{E} \mathcal{D}_f(\hat{x}_t, x)
    \nonumber\\
    &\leq& \| \hat{x}_{0} - x^* \|^2 + \frac{T \eta^2 \sigma^2}{N}
    + \frac{(\eta L + 3 \eta^2 L^2)(k-1) }{1 - 2k^2 \eta^2 L^2} ( T \eta^2 \sigma^2 + 4T k \eta^2 \zeta_f^x).
  \end{eqnarray}
\end{lemma}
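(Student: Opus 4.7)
The plan is to start from the standard one-step descent identity on the averaged iterates $\hat{x}_t$ and build up a summed inequality whose right-hand side will match the claim after one application of Lemma~\ref{lemma4}. Writing $g_t := \frac{1}{N}\sum_i \nabla f(x_t^i,\xi_t^i)$, the update rule gives
\[
\|\hat{x}_{t+1}-x\|^2 = \|\hat{x}_t-x\|^2 - 2\eta \langle g_t,\hat{x}_t-x\rangle + \eta^2 \|g_t\|^2.
\]
Taking expectation and using $\mathbb{E}[g_t\mid \mathcal{F}_t]=\frac{1}{N}\sum_i \nabla f_i(x_t^i)$ removes the randomness in the inner product. The proof then consists of (i) lower-bounding the inner product in terms of $f(\hat{x}_t)-f(x)$ up to controllable error, and (ii) upper-bounding $\mathbb{E}\|g_t\|^2$ by Lemma~\ref{lemma5}.

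For step (i), I would decompose
\[
\Big\langle \tfrac{1}{N}\sum_i \nabla f_i(x_t^i),\hat{x}_t-x\Big\rangle = \underbrace{\langle \nabla f(\hat{x}_t),\hat{x}_t-x\rangle}_{\ge f(\hat{x}_t)-f(x)} + \Big\langle \tfrac{1}{N}\sum_i\big(\nabla f_i(x_t^i)-\nabla f_i(\hat{x}_t)\big),\hat{x}_t-x\Big\rangle.
\]
The first summand is handled by convexity of $f$. The second is bounded via Young's inequality together with $L$-smoothness: $\|\nabla f_i(x_t^i)-\nabla f_i(\hat{x}_t)\|^2\le L^2\|x_t^i-\hat{x}_t\|^2$, and the factor $\|\hat{x}_t-x\|^2$ on the other side is converted to $f(\hat{x}_t)-f(x)$ (and hence absorbed into the leading progress term) by recalling that $\langle \nabla f(\hat{x}_t),\hat{x}_t-x\rangle$ also upper-bounds $2L\mathcal{D}_f(\hat{x}_t,x)$-style quantities through Lemma~\ref{smooth_basic}; equivalently, one splits the Young factor so that a $\tfrac{1}{2}\|\nabla f(\hat{x}_t)\|^2$ term appears, which is precisely the $-\tfrac{3\eta^2}{2}\|\nabla f(\hat{x}_t)\|^2$ slot on the left-hand side. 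This step yields a coefficient $\eta L$ in front of $\|x_t^i-\hat{x}_t\|^2$.

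For step (ii), Lemma~\ref{lemma5} gives $\mathbb{E}\|g_t\|^2\le \sigma^2/N + (3L^2/N)\sum_i\|x_t^i-\hat{x}_t\|^2 + \tfrac{3}{2}\|\nabla f(\hat{x}_t)\|^2$. Plugging this into the expanded identity, multiplying by $\eta^2$, rearranging, and summing over $t=0,\dots,T-1$ produces a telescoping $\|\hat{x}_0-x\|^2$ term, the $T\eta^2\sigma^2/N$ term, the $-\tfrac{3\eta^2}{2}\sum_t\|\nabla f(\hat{x}_t)\|^2$ term, and a consensus-error term $(\eta L + 3\eta^2 L^2)\cdot \tfrac{1}{N}\sum_t\sum_i\|\hat{x}_t-x_t^i\|^2$. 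I then invoke Lemma~\ref{lemma4} (with $x$ as the reference point so that $\zeta_f^x$ is the same quantity) to bound this consensus sum, which produces exactly the factor $(k-1)/(1-2k^2\eta^2 L^2)$ times the three pieces $T\eta^2\sigma^2$, $8k\eta^2 L\sum_t \mathbb{E}\mathcal{D}_f(\hat{x}_t,x)$, and $4Tk\eta^2\zeta_f^x$.

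The main obstacle is arithmetic bookkeeping rather than conceptual: one must choose the Young's-inequality constants in step (i) so that the quadratic $\|\nabla f(\hat{x}_t)\|^2$ that appears there, together with the $\tfrac{3}{2}\|\nabla f(\hat{x}_t)\|^2$ coming from Lemma~\ref{lemma5}, combine cleanly into the single $-\tfrac{3\eta^2}{2}\sum_t\|\nabla f(\hat{x}_t)\|^2$ term on the left, while simultaneously producing a coefficient on $\|x_t^i-\hat{x}_t\|^2$ that matches $\eta L + 3\eta^2 L^2$ before Lemma~\ref{lemma4} is applied. After these coefficients line up, the $\mathcal{D}_f(\hat{x}_t,x)$ contribution inherited from Lemma~\ref{lemma4} is moved to the left-hand side to give the stated inequality.
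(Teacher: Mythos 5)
Your overall skeleton matches the paper's: expand $\|\hat{x}_{t+1}-x\|^2$, use unbiasedness in the inner product, bound $\mathbb{E}\|g_t\|^2$ by Lemma~\ref{lemma5} (which is the sole source of the $\tfrac{3\eta^2}{2}\sum_t\|\nabla f(\hat{x}_t)\|^2$ term and of the $3\eta^2L^2$ part of the consensus coefficient), telescope, apply Lemma~\ref{lemma4}, and move the $\mathcal{D}_f$ contribution to the left. Step (ii) and the final bookkeeping are fine.

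The gap is in your step (i). Your decomposition forces you to control the cross term $\langle \tfrac{1}{N}\sum_i(\nabla f_i(x_t^i)-\nabla f_i(\hat{x}_t)),\,\hat{x}_t-x\rangle$, and any application of Young's inequality here leaves a residual $\tfrac{\alpha}{2}\|\hat{x}_t-x\|^2$ per iteration. That quantity appears nowhere in the claimed bound (only $\|\hat{x}_0-x\|^2$ does), and your proposed fix --- converting $\|\hat{x}_t-x\|^2$ into $f(\hat{x}_t)-f(x)$ or into a $\|\nabla f(\hat{x}_t)\|^2$ slot --- is not valid for a merely convex $f$ and an arbitrary reference point $x$ (take $f\equiv 0$: the left side vanishes while $\|\hat{x}_t-x\|^2$ is unbounded); Lemma~\ref{smooth_basic} bounds gradient differences by Bregman divergences and gives no such conversion. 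Moreover, to produce the target coefficient $\eta L$ on $\|x_t^i-\hat{x}_t\|^2$ you would need $\alpha\sim L$, leaving an uncontrolled $O(\eta L)\sum_t\|\hat{x}_t-x\|^2$. The paper sidesteps this entirely with a per-client three-point split
\begin{equation*}
\Big\langle x-\hat{x}_t,\tfrac{1}{N}\sum_i\nabla f_i(x_t^i)\Big\rangle
=\tfrac{1}{N}\sum_i\Big(\langle x-x_t^i,\nabla f_i(x_t^i)\rangle+\langle x_t^i-\hat{x}_t,\nabla f_i(x_t^i)\rangle\Big),
\end{equation*}
bounding the first summand by convexity of $f_i$ and the second by the descent-lemma form of smoothness, $\langle x_t^i-\hat{x}_t,\nabla f_i(x_t^i)\rangle\le f_i(x_t^i)-f_i(\hat{x}_t)+\tfrac{L}{2}\|x_t^i-\hat{x}_t\|^2$; the function values telescope inside the average over $i$, leaving exactly $f(x)-f(\hat{x}_t)+\tfrac{L}{2N}\sum_i\|x_t^i-\hat{x}_t\|^2$ with no $\|\hat{x}_t-x\|^2$ residual and no Young constant to tune. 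Replacing your step (i) with this decomposition makes the rest of your argument go through as written.
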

\begin{proof}
Based on the update rule of Algorithm~1, we obtain
\begin{eqnarray}\label{lemma1:1}
  \mathbb{E} \| \hat{x}_{t+1} - x \|^2 &=& \mathbb{E} \| \hat{x}_{t} - x \|^2 - 2 \eta \mathbb{E} \langle \hat{x}_t - x, \frac{1}{N} \sum_{i=1}^N \nabla f(x_t^i, \xi_t^i) \rangle + \eta^2 \mathbb{E} \left\| \frac{1}{N} \sum_{i=1}^N \nabla f(x_t^i, \xi_t^i) \right\|^2.
  \nonumber\\
  &=& \mathbb{E} \| \hat{x}_{t} - x \|^2 - 2 \eta  \mathbb{E} \langle \hat{x}_t - x, \frac{1}{N} \sum_{i=1}^N \nabla f_i(x_t^i) \rangle + \eta^2 \mathbb{E} \left \| \frac{1}{N} \sum_{i=1}^N \nabla f(x_t^i, \xi_t^i) \right\|^2.
\end{eqnarray}
Since $f_{i}(x)$ is convex and $L$-smooth, we have
\begin{eqnarray}\label{lemma1:2}
  &&- \langle \hat{x}_t - x, \frac{1}{N} \sum_{i=1}^N \nabla f_i(x_t^i) \rangle 
  \nonumber\\
  &=& \langle x - \hat{x}_t, \frac{1}{N} \sum_{i=1}^N \nabla f_i(x_t^i) \rangle
  \nonumber\\
  &=& \frac{1}{N} \sum_{i=1}^N \left( \langle x - x_t^i, \nabla f_i(x_t^i) \rangle + \langle x_t^i - \hat{x}_t, \nabla f_i(x_t^i) \rangle \right)
  \nonumber\\
  &\leq& \frac{1}{N} \sum_{i=1}^N \left( \left(f_i(x) - f_i(x_t^i) \right) + \left( f_i(x_t^i) - f_i(\hat{x}_t) + \frac{L}{2} \| x_t^i - \hat{x}_t \|^2 \right) \right)
  \nonumber\\
  &=& \frac{1}{N} \sum_{i=1}^N \left( f_i(x) - f_i(\hat{x}_t) + \frac{L}{2} \| x_t^i - \hat{x}_t \|^2  \right).
  \nonumber\\
  &=& f(x) - f(\hat{x}_t) + \frac{L}{2N} \sum_{i=1}^N \| x_t^i - \hat{x}_t \|^2
\end{eqnarray}
Substituting (\ref{lemma1:2}) into (\ref{lemma1:1}) yields
\begin{equation} \label{lemma1:3}
  \mathbb{E} \| \hat{x}_{t+1} - x \|^2 \leq \mathbb{E} \| \hat{x}_{t} - x \|^2 + 
  2 \eta \left( f(x) - f(\hat{x}_t) + \frac{L}{2N} \sum_{i=1}^N \| x_t^i - \hat{x}_t \|^2 \right)
  + \eta^2 \mathbb{E} \left \| \frac{1}{N} \sum_{i=1}^N \nabla f(x_t^i, \xi_t^i) \right \|^2.
\end{equation}
According to (\ref{lemma5:0}) in Lemma~\ref{lemma5}, we have
\begin{equation} \label{lemma1:4}
  \mathbb{E} \left \| \frac{1}{N} \sum_{i=1}^N  \nabla f (x_t^i, \xi_t^i) \right\|^2 \leq \frac{\sigma^2}{N} + \frac{3L^2}{N} \sum_{i=1}^N \mathbb{E} \left \| x_t^i - \hat{x}_t \right\|^2 + 
    \frac{3}{2} \mathbb{E} \left \| \nabla f(\hat{x}_t)\right\|^2
\end{equation}
Combining (\ref{lemma1:3}) and (\ref{lemma1:4}), we get
\begin{eqnarray}
  \mathbb{E} \| \hat{x}_{t+1} - x \|^2 &\leq& \mathbb{E} \| \hat{x}_{t} - x \|^2 + 2 \eta \left( f(x) - f(\hat{x}_t) + \frac{L}{2N} \sum_{i=1}^N \| x_t^i - \hat{x}_t \|^2 \right)
  \nonumber\\
  && + \eta^2 \left( \frac{\sigma^2}{N} + \frac{3L^2}{N} \sum_{i=1}^N \mathbb{E} \left \| x_t^i - \hat{x}_t \right\|^2 + \frac{3}{2} \mathbb{E} \left \| \nabla f(\hat{x}_t)\right\|^2 \right).
  \nonumber\\
  &=& \mathbb{E} \| \hat{x}_{t} - x \|^2 - 2 \eta \mathbb{E} \left( f(\hat{x}_t) - f(x)  \right) + \frac{3 \eta^2}{2} \mathbb{E} \left \| \nabla f(\hat{x}_t)\right\|^2
  \nonumber\\
  && + \frac{\eta L + 3 \eta^2 L^2}{N} \sum_{i=1}^N \| \hat{x}_t - x_t^i \|^2 + \frac{\eta^2 \sigma^2}{N}.
\end{eqnarray}
Summing up this inequality from $t=0$ to $T - 1$, we have
\begin{eqnarray} \label{lemma1:5}
\mathbb{E} \| \hat{x}_{T} - x \|^2 &\leq& \| \hat{x}_{0} - x \|^2 - 2 \eta \sum_{t=0}^{T-1} \mathbb{E} \left( f(\hat{x}_t) - f(x)  \right) + \frac{3 \eta^2}{2} \sum_{t=0}^{T-1} \mathbb{E} \left \| \nabla f(\hat{x}_t)\right\|^2
\nonumber\\
&& + \frac{\eta L + 3 \eta^2 L^2}{N} \sum_{i=1}^N \sum_{t=0}^{T-1} \| \hat{x}_t - x_t^i \|^2 + \frac{T \eta^2 \sigma^2}{N}.
\end{eqnarray}
Substituting (\ref{lemma4:0}) in Lemma~\ref{lemma4} into (\ref{lemma1:5}), it holds that
\begin{eqnarray} \label{lemma1:6}
  && 2 \eta \sum_{t=0}^{T-1} \mathbb{E} \left( f(\hat{x}_t) - f(x)  \right) 
  \nonumber\\
  &\leq& \| \hat{x}_{0} - x \|^2 - \mathbb{E} \| \hat{x}_{T} - x \|^2 + \frac{3 \eta^2}{2} \sum_{t=0}^{T-1} \mathbb{E} \left \| \nabla f(\hat{x}_t)\right\|^2
  \nonumber\\
  && + \frac{(\eta L + 3 \eta^2 L^2)(k-1)}{1 - 2k^2 \eta^2 L^2}\left( T \eta^2 \sigma^2 +  8 k\eta^2 L \sum_{t=0}^{T-1} 
  \mathbb{E} \mathcal{D}_f(\hat{x}_t, x)
  + 4T k \eta^2 \zeta_f^x \right) + \frac{T \eta^2 \sigma^2}{N}.
  \nonumber\\
  &\leq& \| \hat{x}_{0} - x \|^2 + \frac{3 \eta^2}{2} \sum_{t=0}^{T-1} \mathbb{E} \left \| \nabla f(\hat{x}_t)\right\|^2
  \nonumber\\
  && + \frac{(\eta L + 3 \eta^2 L^2)(k-1)}{1 - 2k^2 \eta^2 L^2}\left( T \eta^2 \sigma^2 +  8 k\eta^2 L \sum_{t=0}^{T-1} 
  \mathbb{E} \mathcal{D}_f(\hat{x}_t, x)
  + 4T k \eta^2 \zeta_f^x \right) + \frac{T \eta^2 \sigma^2}{N}.
\end{eqnarray}
Rearranging (\ref{lemma1:6}), we get
\begin{eqnarray} \label{lemma1:7}
  && 2 \eta \sum_{t=0}^{T-1} \mathbb{E} \left( f(\hat{x}_t) - f(x)  \right) - \frac{3 \eta^2}{2} \sum_{t=0}^{T-1} \mathbb{E} \left \| \nabla f(\hat{x}_t)\right\|^2 
  \nonumber\\
  &&- \frac{(\eta L + 3 \eta^2 L^2)(k-1)}{1 - 2k^2 \eta^2 L^2} 8 k\eta^2 L \sum_{t=0}^{T-1} 
  \mathbb{E} \mathcal{D}_f(\hat{x}_t, x)
  \nonumber\\
  &\leq& \| \hat{x}_{0} - x \|^2 
  + \frac{(\eta L + 3 \eta^2 L^2)(k-1) }{1 - 2k^2 \eta^2 L^2} ( T \eta^2 \sigma^2 + 4T k \eta^2 \zeta_f^x) + \frac{T \eta^2 \sigma^2}{N}.
\end{eqnarray}
\end{proof}

\subsection{Proof of Theorem~1}
\begin{proof}
Applying (\ref{lemma1:0}) in Lemma~\ref{lemma1} with $x = x^*$, it holds that
\begin{eqnarray}\label{theo1:1}
  && 2 \eta \sum_{t=0}^{T-1} \mathbb{E} \left( f(\hat{x}_t) - f(x^*)  \right) - \frac{3 \eta^2}{2} \sum_{t=0}^{T-1} \mathbb{E} \left \| \nabla f(\hat{x}_t)\right\|^2 
  \nonumber\\
  &&- \frac{(\eta L + 3 \eta^2 L^2)(k-1)}{1 - 2k^2 \eta^2 L^2} 8 k\eta^2 L \sum_{t=0}^{T-1} \mathbb{E} \mathcal{D}_f(\hat{x}_t, x^*)
  \nonumber\\
  &\leq& \| \hat{x}_{0} - x^* \|^2 + \frac{(\eta L + 3 \eta^2 L^2)(k-1) }{1 - 2k^2 \eta^2 L^2} ( T \eta^2 \sigma^2 + 4T k \eta^2 \zeta_f^*) + \frac{T \eta^2 \sigma^2}{N}.
\end{eqnarray}
As $f_i(x), i \in [N]$ are $L$-smooth, it is easy to verify that $f(x)$ is $L$-smooth. According to Lemma~\ref{smooth_basic}, we have
\begin{eqnarray}\label{theo1:2}
  \| \nabla f(\hat{x}_t) \|^2 &=& \| \nabla f(\hat{x}_t) - \nabla f(x^*) \|^2
  \nonumber\\
  &\leq& 2L \mathcal{D}_f (\hat{x}_t, x^*)
  \nonumber\\
  &=& 2L \left( f(\hat{x}_t) - f(x^*) \right).
\end{eqnarray}
Substituting (\ref{theo1:2}) into the left hand side of (\ref{theo1:1}) yields
\begin{eqnarray}\label{theo1:3}
  && \left(2 \eta - 3 \eta^2 L - \frac{(\eta L + 3 \eta^2 L^2)(k-1)}{1 - 2k^2 \eta^2 L^2} 8 k\eta^2 L \right) \sum_{t=0}^{T-1} \mathbb{E} \left( f(\hat{x}_t) - f(x^*)  \right)
  \nonumber\\
  &\leq& \| \hat{x}_{0} - x^* \|^2 + \frac{(\eta L + 3 \eta^2 L^2)(k-1) }{1 - 2k^2 \eta^2 L^2} ( T \eta^2 \sigma^2 + 4T k \eta^2 \zeta_f^*) + \frac{T \eta^2 \sigma^2}{N}.
\end{eqnarray}
Setting the learning rate $\eta$ so that $\eta \leq \frac{1}{6L}$ and $\eta k \leq \frac{1}{9L}$, we have
\begin{equation}\label{theo1:4}
  \frac{\eta L + 3 \eta^2 L^2}{1 - 2k^2 \eta^2 L^2} \leq \frac{ \eta L + \frac{\eta L}{2} }{ 1 - \frac{2}{81} } \leq \frac{7 \eta L}{4},
\end{equation}
and
\begin{eqnarray}\label{theo1:5}
  2 \eta - 3 \eta^2 L - \frac{(\eta L + 3 \eta^2 L^2) 8 (k-1)k \eta^2 L }{1 - 2k^2 \eta^2 L^2} &\geq& 2 \eta - 3 \eta^2 L - \frac{(\eta L + 3 \eta^2 L^2) 8 k^2 \eta^2 L }{1 - 2k^2 \eta^2 L^2}
  \nonumber\\
  &\geq& 2 \eta - \frac{\eta}{2} - \frac{(\eta + \frac{\eta}{2} ) 8 k^2 \eta^2 L^2}{ 1 - \frac{2}{81} } 
  \nonumber\\
  &\geq&  2 \eta - \frac{\eta}{2} - \frac{81}{79} \times \frac{3}{2} \times \frac{8}{81} \eta
  \nonumber\\
  &\geq& \frac{4}{3} \eta.
\end{eqnarray}
Substituting (\ref{theo1:4}) and (\ref{theo1:5}) into (\ref{theo1:3}), we get
\begin{equation}\label{theo1:6}
  \frac{4 \eta}{3} \sum_{t=0}^{T-1} \mathbb{E} \left( f(\hat{x}_t) - f(x^*)  \right) \leq \| \hat{x}_{0} - x^* \|^2 + \frac{7}{4}T \eta^3 L (k-1) (\sigma^2 + 4 k \zeta_f^*) + \frac{T \eta^2 \sigma^2}{N}.
  \nonumber
\end{equation}
Dividing by $\frac{4 \eta T}{3}$ on both sides of the above inequality yields
\begin{eqnarray}\label{theo1:7}
  \frac{1}{T} \sum_{t=0}^{T-1} \mathbb{E} \left( f(\hat{x}_t) - f(x^*)  \right) &\leq& \frac{3 \|\hat{x}_{0} - x^* \|^2}{4 \eta T} + \frac{21}{16} \eta^2 L (k-1) (\sigma^2 + 4 k \zeta_f^*) + \frac{3  \eta \sigma^2}{4N}.
  \nonumber\\
  &\leq&  \frac{3 \|\hat{x}_{0} - x^* \|^2}{4 \eta T} + \frac{3}{2} \eta^2 L (k-1) (\sigma^2 + 4 k \zeta_f^*) + \frac{3  \eta \sigma^2}{4N}.
  \nonumber
\end{eqnarray}
Recall that we let $\tilde{x} =  \hat{x}_t$ for randomly chosen $t$ from $\{ 0, 1, \cdots, T-1\}$. Taking the expectation with regard to $t$, 
we get
\begin{equation}\label{theo1:9}
  \mathbb{E} f(\tilde{x}) - f(x^*) \leq \frac{3 \|\hat{x}_{0} - x^* \|^2}{4 \eta T} + \frac{3}{2} \eta^2 L (k-1) (\sigma^2 + 4 k \zeta_f^*) + \frac{3  \eta \sigma^2}{4N}.
\end{equation}
Under the result of (\ref{theo1:9}), we set $k$ as
\begin{equation}\label{theo1:10}
  k = 
  \begin{cases}
    \min\{ \frac{1}{6 \eta L N}, \frac{1}{9 \eta L}\}~~~~~~~~~~~~~~~~~~~~~\zeta_f^* = 0,\\
    \min\{ \frac{\sigma}{\sqrt{6 \eta L N(\sigma^2 + 4\zeta_f)}}, \frac{1}{9 \eta L} \}~~~~~~else.
  \end{cases}
\end{equation}
For the IID case, i.e., $\zeta_f^* = 0$, based on the setting of $k$ in (\ref{theo1:10}), we have 
\begin{eqnarray}\label{theo1:11}
  \frac{3}{2} \eta^2 L (k-1) (\sigma^2 + 4 k \zeta_f^*) &\leq& \frac{3}{2} \eta^2 L k \sigma^2 
  \nonumber\\
  &\leq& \frac{3}{2} \eta^2 L \frac{1}{6 \eta L N} \sigma^2
  \nonumber\\
  &=&  \frac{\eta \sigma^2}{4 N}.
\end{eqnarray}
For the Non-IID case, we get
\begin{eqnarray}\label{theo1:12}
  \frac{3}{2} \eta^2 L (k-1) (\sigma^2 + 4 k \zeta_f^*) &\leq& \frac{3}{2} \eta^2 L k^2 (\sigma^2 + 4 \zeta_f^*)
  \nonumber\\
  &\leq& \frac{3}{2} \eta^2 L \frac{\sigma^2}{6 \eta L N(\sigma^2 + 4\zeta_f^*)} (\sigma^2 + 4 \zeta_f^*)
  \nonumber\\
  &=&  \frac{\eta \sigma^2}{4 N}.
\end{eqnarray}
Substituting (\ref{theo1:11}) and (\ref{theo1:12}) into (\ref{theo1:9}) yields
\begin{eqnarray}
  \mathbb{E} f(\tilde{x}) - f(x^*) \leq  \frac{3 \|\hat{x}_{0} - x^* \|^2}{4 \eta T} +  \frac{ \eta \sigma^2}{N},
\end{eqnarray}
which completes the proof.
\end{proof}

\section[D]{Proofs of Results for strongly convex problems}

\subsection*{Proof of Theorem~2}
\begin{proof}
Based on the parameter settings in Algorithm~2, we have
\begin{eqnarray}\label{theo2:1}
  \eta_s T_s = \frac{\eta_1}{2^{s-1}} \cdot 2^{s-1}T_1 = \eta_1 T_1 = \frac{6}{\mu} \label{setting_eta_T}
\end{eqnarray}
and
\begin{eqnarray}
  k_s &=& \begin{cases}
    (\sqrt{2})^{s-1} k_1\\
    2^{s-1} k_1
   \end{cases}
   \leq \begin{cases}
   (\sqrt{2})^{s-1}\min\{  \frac{\sigma}{\sqrt{6 \eta_1 L N(\sigma^2 + 4\zeta_f)}}, \frac{1}{9 \eta_1 L} \}\\
   2^{s-1}\min\{  \frac{1}{6 \eta_1 L N}, \frac{1}{9 \eta_1 L} \}
   \end{cases}
   \nonumber\\
   &=& \begin{cases}
   \min\{  \frac{\sigma}{\sqrt{6 \eta_s L N(\sigma^2 + 4\zeta_f)}}, \frac{1}{9 (\sqrt{2})^{s-1}\eta_s L} \}\\
   \min\{  \frac{1}{6 \eta_s L N}, \frac{1}{9 \eta_s L} \}
   \end{cases}
   \nonumber\\
   &\leq& \begin{cases}
    \min\{  \frac{\sigma}{\sqrt{6 \eta_s L N(\sigma^2 + 4\zeta_f)}}, \frac{1}{9 \eta_s L} \}, ~{\rm Non\text{-}IID~case},\\
    \min\{  \frac{1}{6 \eta_s L N}, \frac{1}{9 \eta_s L} \},~~~~~~~~~~~~~~~~~~{\rm IID~case}.
   \end{cases}\label{setting_k}
\end{eqnarray}
Thus, according to (\ref{setting_eta_T}), (\ref{setting_k}) and Theorem~1, we get
\begin{equation}\label{theo2:2}
  \mathbb{E} f(x_{s+1}) - f(x^*) \leq  \frac{3 \mathbb{E} \| x_{s} - x^* \|^2}{4 \eta_s T_s} +  \frac{ \eta_s \sigma^2}{N} = \frac{\mu \mathbb{E} \|x_{s} - x^* \|^2}{8} +  \frac{ \eta_1 \sigma^2}{2^{s-1} N}.
\end{equation}
Since the objective $f(x)$ is $\mu$-strongly convex, we have
\begin{equation}\label{theo2:3}
  \frac{\mu \mathbb{E} \|x_{s} - x^* \|^2}{8} \leq \frac{ \mathbb{E} f(x_s) - f(x^*)}{4}.
\end{equation}
Substituting (\ref{theo2:3}) into (\ref{theo2:2}) yields
\begin{equation}\label{theo2:4}
  \mathbb{E} f(x_{s+1}) - f(x^*) \leq \frac{ \mathbb{E} f(x_s) - f(x^*)}{4} + \frac{ \eta_1 \sigma^2}{2^{s-1} N}.
\end{equation} 
Subtracting $\frac{\eta_1 \sigma^2}{2^{s-2}}$ on both sides of (\ref{theo2:4}), we get
\begin{equation}\label{theo2:5}
  \mathbb{E} f(x_{s+1}) - f(x^*) - \frac{8 \eta_1 \sigma^2}{2^{s+1} N} \leq \frac{1}{4}(\mathbb{E} f(x_s) - f(x^*) - \frac{8 \eta_1 \sigma^2}{2^s N}).
  \nonumber\\
\end{equation} 
Based on the property of geometric progression, we have
\begin{equation}\label{theo2:6}
  \mathbb{E} f(x_S) - f(x^*) - \frac{8 \eta_1 \sigma^2}{2^S N} \leq \frac{1}{4^{S-1}}(\mathbb{E} f(x_1) - f(x^*) - \frac{4 \eta_1 \sigma^2}{N}).
\end{equation}
Setting $S \geq \log(\frac{N(f(x_1) - f(x^*))}{\eta_1 \sigma^2}) + 2$ gives
\begin{equation}\label{theo2:7}
  f(x_1) - f(x^*) \leq \frac{2^{S-2} \eta_1 \sigma^2}{N}.
\end{equation}
By substituting (\ref{theo2:7}) into (\ref{theo2:6}) and rearranging the result further, we obtain
\begin{eqnarray}\label{theo2:8}
  \mathbb{E} f(x_S) - f(x^*) &\leq& \frac{8 \eta_1 \sigma^2}{2^S N} + \frac{1}{4^{S-1}}(\mathbb{E} f(x_1) - f(x^*) - \frac{4 \eta_1 \sigma^2}{N}) 
  \nonumber\\
  &\leq& \frac{8 \eta_1 \sigma^2}{2^S N} + \frac{\mathbb{E} f(x_1) - f(x^*)}{4^{S-1}}
  \nonumber\\
  &\leq& \frac{8 \eta_1 \sigma^2}{2^S N} + \frac{\eta_1 \sigma^2}{2^{S} N} 
  \nonumber\\
  &=& \frac{9 \eta_1 \sigma^2}{2^{S} N}.
\end{eqnarray}
Since $T_s = 2^{s-1} T_1$, we have
\begin{eqnarray}\label{theo2:9}
  T &=& T_1 + T_2 + \cdots + T_S 
  \nonumber\\
  &=& T_1 (1 + 2 + \cdots + 2^{S-1}) 
  \nonumber\\
  &=& T_1 (2^S - 1).\nonumber
\end{eqnarray}
Thus, it holds that
\begin{equation}\label{theo2:10}
  S = \log{(\frac{T}{T_1}+1)}.\nonumber
\end{equation}
Replacing $S$ with $\log(\frac{T}{T_1} + 1)$ in (\ref{theo2:8}) and combining $\eta_1 T_1 = \frac{6}{\mu}$, we have
\begin{eqnarray}\label{theo2:11}
  \mathbb{E} f(x_S) - f(x^*) &\leq& \frac{9 \eta_1 \sigma^2}{(\frac{T}{T_1} + 1)N} 
  \nonumber\\
  &=& \frac{9 \eta_1 T_1 \sigma^2}{(T+ T_1)N} 
  \nonumber\\
  &=& \frac{54 \sigma^2}{\mu (T+ T_1)N}
  \nonumber\\
  &=& O\left(\frac{1}{NT}\right).
  \nonumber
\end{eqnarray} 
\end{proof}

\section[E]{Proofs of Results for Non-Convex Problems}
\subsection{Proof of result for $\text{STL-SGD}^{nc}$ with \textbf{Option~1}}
We will first analyse the convergence of Local-SGD for a single stage in Lemma~\ref{lemma_section4_2}. Then we extend the result to $S$ stages in Theorem~3.
\begin{lemma}\label{lemma_section4_2} 
  Suppose Assumptions~1, 2 and 3 hold. Let $\gamma^{-1} = 2\rho$, $\eta_s \leq \frac{1}{12L_{\gamma}}$ and $k_s \eta_s \leq \frac{1}{9L_{\gamma}}$, where $L_{\gamma} = L + \gamma^{-1}$. We have the following result for stage $s$ of Algorithm~3 with \textbf{Option 1}:
  \begin{eqnarray}
  &&\mathbb{E} f(x_{s+1}) - f(x^*) 
  \nonumber\\ 
  &\leq& \left( \frac{3}{4 \eta_s T_s} +  \frac{1127 \rho}{632} \right)  \|x_s - x^* \|^2 + \frac{3\eta_s \sigma^2}{4N} +
  \frac{3}{2} \eta_s^2 L_{\gamma} (k_s-1) (\sigma^2 + 4k_s \zeta_f^*).
  \end{eqnarray}
\end{lemma}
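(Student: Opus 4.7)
The plan is to reduce this single-stage lemma to the convex convergence analysis of Theorem~\ref{theorem1} (and the underlying Lemma~\ref{lemma1}) applied to the regularized objective $f_{x_s}^{\gamma}$, and then translate the resulting bound back to $f$ using the explicit form of the regularizer. First I observe that with $\gamma^{-1} = 2\rho$ and $f$ being $\rho$-weakly convex, $f_{x_s}^{\gamma}$ is $\rho$-strongly convex (in particular convex), and by Assumption~\ref{assu1} it is $L_{\gamma}$-smooth with $L_{\gamma}=L+\gamma^{-1}$. Stage $s$ of Algorithm~\ref{STL-SGD_nc} is literally Local-SGD run on $f_{x_s}^{\gamma}$ with parameters $(\eta_s,T_s,k_s)$, so Lemmas~\ref{lemma4} and \ref{lemma1} apply verbatim after the replacement $f\mapsto f_{x_s}^{\gamma}$ and $L\mapsto L_{\gamma}$.

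Next I translate $f$-suboptimality into $f_{x_s}^{\gamma}$-suboptimality via the identity
\begin{equation*}
f(x_{s+1}) - f(x^*) = \bigl(f_{x_s}^{\gamma}(x_{s+1}) - f_{x_s}^{\gamma}(x^*)\bigr) + \frac{1}{2\gamma}\|x^* - x_s\|^2 - \frac{1}{2\gamma}\|x_{s+1}-x_s\|^2,
\end{equation*}
dropping the last nonpositive term; this already contributes one $\rho\|x_s-x^*\|^2$ summand. I then apply Lemma~\ref{lemma1} to $f_{x_s}^{\gamma}$ with reference point $x=x^*$. A key simplification is that the across-clients variance at the reference coincides with $\zeta_f^*$: since $\nabla(f_{x_s}^{\gamma})_i(x^*) - \nabla f_{x_s}^{\gamma}(x^*) = \nabla f_i(x^*) - \nabla f(x^*) = \nabla f_i(x^*)$, no new constant is needed, matching the statement of the lemma.

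The subtlety is that $x^*$ is \emph{not} a stationary point of $f_{x_s}^{\gamma}$, so the bound $\|\nabla f(\hat{x}_t)\|^2 \le 2L(f(\hat{x}_t)-f(x^*))$ used in the proof of Theorem~\ref{theorem1} is unavailable. I would instead use $\|\nabla f_{x_s}^{\gamma}(\hat{x}_t)\|^2 \le 2L_{\gamma}\bigl(f_{x_s}^{\gamma}(\hat{x}_t) - f_{x_s}^{\gamma}(y_s^*)\bigr)$ with $y_s^* := \arg\min f_{x_s}^{\gamma}$, split $f_{x_s}^{\gamma}(\hat{x}_t)-f_{x_s}^{\gamma}(y_s^*) = \bigl(f_{x_s}^{\gamma}(\hat{x}_t)-f_{x_s}^{\gamma}(x^*)\bigr) + \bigl(f_{x_s}^{\gamma}(x^*)-f_{x_s}^{\gamma}(y_s^*)\bigr)$, and bound the second piece by $\frac{1}{2\rho}\|\nabla f_{x_s}^{\gamma}(x^*)\|^2 = 2\rho\|x^*-x_s\|^2$ via $\rho$-strong convexity together with $\nabla f_{x_s}^{\gamma}(x^*)=\gamma^{-1}(x^*-x_s)$. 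The Bregman terms $\mathcal{D}_{f_{x_s}^{\gamma}}(\hat{x}_\tau,x^*)$ arising from Lemma~\ref{lemma4} require the analogous decomposition, each yielding a further $O(\rho\|x_s-x^*\|^2)$ additive constant. The tighter step-size requirement $\eta_s\le\frac{1}{12L_{\gamma}}$ (vs.\ $\frac{1}{6L}$ in Theorem~\ref{theorem1}) is precisely what absorbs the doubled coefficient in front of the gradient-norm term after this decomposition, keeping the left-hand side of the analogue of inequality (\ref{theo1:3}) with a positive $\Theta(\eta_s)$ coefficient.

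Finally I average over $t\in\{0,\dots,T_s-1\}$ via the $\tilde x$-randomization of Local-SGD, add back the $\rho\|x^*-x_s\|^2$ from the initial translation, and collect all $\rho\|x_s-x^*\|^2$ contributions into a single coefficient and all noise/local-drift contributions into $\frac{3\eta_s\sigma^2}{4N}+\frac{3}{2}\eta_s^2 L_{\gamma}(k_s-1)(\sigma^2+4k_s\zeta_f^*)$, as in the end of the proof of Theorem~\ref{theorem1}. The hard part is the bookkeeping: tracking how the extra $4\rho^2\|x^*-x_s\|^2$ pieces from $\|\nabla f_{x_s}^{\gamma}(x^*)\|^2$ and the cross terms in the Bregman divergences propagate through the Lemma~\ref{lemma4} plus Theorem~\ref{theorem1} algebra and sum to exactly the rational coefficient $\frac{1127\rho}{632}$ claimed. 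The step-size constraints $\eta_s\le\frac{1}{12L_{\gamma}}$ and $k_s\eta_s\le\frac{1}{9L_{\gamma}}$ guarantee the constant is finite; matching the precise value requires threading the Lemma~\ref{lemma4} constants through without any slack.
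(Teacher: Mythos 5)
Your overall strategy is the same as the paper's: run the convex Local-SGD analysis (Lemma~\ref{lemma4} and Lemma~\ref{lemma1} feeding into Theorem~\ref{theorem1}) on the regularized objective $f_{x_s}^{\gamma}$ with reference point $x^*$, account for the fact that $\nabla f_{x_s}^{\gamma}(x^*)=\gamma^{-1}(x^*-x_s)\neq 0$, and sweep the resulting $\|x_s-x^*\|^2$ contributions into a single coefficient. Two of your intermediate claims are, however, wrong or incomplete. First, the ``key simplification'' about the across-client variance miscomputes the relevant quantity: what enters the bound of Lemma~\ref{lemma4} is the \emph{uncentered} second moment $\zeta_f^x=\frac{1}{N}\sum_i\|\nabla f_i(x)\|^2$ at the reference point (it originates from the $4\,\mathbb{E}\|\nabla f_i(x)\|^2$ term in that proof), not the centered variance. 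For $f_{x_s}^{\gamma}$ at $x^*$ this equals $\zeta_f^*+\gamma^{-2}\|x^*-x_s\|^2=\zeta_f^*+4\rho^2\|x^*-x_s\|^2$ (the cross term vanishes because $\sum_i\nabla f_i(x^*)=0$); the extra piece does not disappear but must be carried into the $\rho\|x_s-x^*\|^2$ coefficient, exactly as the paper does when it computes $\zeta_{f_{x_s}^{\gamma}}^*=\zeta_f^*+\frac{1}{\gamma^2}\|x^*-x_s\|^2$. As written, your argument silently drops a positive term, so the derived inequality would not be valid. Second, you propose to apply Lemma~\ref{lemma1} ``verbatim,'' but that lemma exploits only convexity, and the positive $\|\hat{x}_t-x^*\|^2$ cross terms generated by your decompositions --- from $-\langle\nabla f_{x_s}^{\gamma}(x^*),\hat{x}_t-x^*\rangle$ inside each Bregman divergence and from the gradient-norm split through $y_s^*$ --- then have nothing to cancel against. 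The paper instead re-derives the descent step using the $(\gamma^{-1}-\rho)$-strong convexity of $f_{x_s}^{\gamma}$, which supplies the negative term $-\eta_s(\gamma^{-1}-\rho)\sum_t\|\hat{x}_t-x^*\|^2$ that absorbs precisely those cross terms; this cancellation mechanism is the missing idea in your sketch. Both issues are repairable within your framework (and the order of the final bound would survive), but without them you cannot close the inequality, let alone land on the stated constant $\frac{1127\rho}{632}$.
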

\begin{proof}
We let the objectives in all stages be convex by setting $\gamma^{-1} > \rho$, where $\rho$ is the weakly convex parameter in Assumption~3. Recall that $f(x)$ is $L$-smooth. Denoting $L_{\gamma} = L + \frac{1}{\gamma}$, we have
\begin{eqnarray}\label{lemma2:1}
  \| \nabla f_{x_s}^{\gamma}(x) - \nabla f_{x_s}^{\gamma}(y) \| &=& \left \| \nabla f(x) - \nabla f(y) + \frac{1}{\gamma} ( x - y ) \right\|
  \nonumber\\
  &\leq& \| \nabla f(x) - \nabla f(y) \| + \frac{1}{\gamma} \left\|  ( x - y ) \right \|
  \nonumber\\
  &\leq& \left( L + \frac{1}{\gamma} \right) \| x - y \|
  \nonumber\\
  &=& L_\gamma \| x - y \|,
\end{eqnarray}
where the first inequality comes from the Triangle Inequality. Thus, $f_{x_s}^\gamma(x)$ is $L_{\gamma}$-smooth.
Based on Assumption~2, we further have
\begin{eqnarray}\label{lemma2:2}
  \mathbb{E}_{\xi \sim \mathcal{D}_i} \| \nabla f_{x_s}^\gamma(x, \xi) - \nabla f_{x_s,i}^\gamma(x) \|^2 = \mathbb{E}_{\xi \sim \mathcal{D}_i} \| \nabla f(x, \xi) - \nabla f_i(x) \|^2 \leq \sigma^2.
\end{eqnarray}
As we set $\gamma^{-1} > \rho$, $f_{x_s}^{\gamma}$ is ($\gamma^{-1} - \rho$)-strongly convex, thus we have
\begin{eqnarray}\label{lemma2:3}
  &&- \langle \hat{x}_t - x, \frac{1}{N} \sum_{i=1}^N \nabla f_{x_s,i}^{\gamma}(x_t^i) \rangle 
  \nonumber\\
  &=& \langle x - \hat{x}_t, \frac{1}{N} \sum_{i=1}^N \nabla f_{x_s,i}^{\gamma}(x_t^i) \rangle
  \nonumber\\
  &=& \frac{1}{N} \sum_{i=1}^N \left( \langle x - x_t^i, \nabla f_{x_s,i}^{\gamma}(x_t^i) \rangle + \langle x_t^i - \hat{x}_t, \nabla f_{x_s,i}^{\gamma}(x_t^i) \rangle \right)
  \nonumber\\
  &\leq& \frac{1}{N} \sum_{i=1}^N \Bigg( \left(f_{x_s,i}^{\gamma}(x) - f_{x_s,i}^{\gamma}(x_t^i) - \frac{\gamma^{-1} - \rho}{2} \| x_t^i - x \|^2 \right) 
  \nonumber\\
  &&+ \left( f_{x_s,i}^{\gamma}(x_t^i) - f_{x_s,i}^{\gamma}(\hat{x}_t) + \frac{L}{2} \| x_t^i - \hat{x}_t \|^2 \right) \Bigg)
  \nonumber\\
  &=& \frac{1}{N} \sum_{i=1}^N \left( f_{x_s,i}^{\gamma}(x) - f_{x_s,i}^{\gamma}(\hat{x}_t) + \frac{L}{2} \| x_t^i - \hat{x}_t \|^2 - \frac{\gamma^{-1} - \rho}{2} \| x_t^i - x \|^2 \right).
  \nonumber\\
  &\leq& f_{x_s}^{\gamma}(x) - f_{x_s}^{\gamma}(\hat{x}_t) + \frac{L}{2N} \sum_{i=1}^N \| x_t^i - \hat{x}_t \|^2 - \frac{\gamma^{-1} - \rho}{2} \| \hat{x}_t - x \|^2,
\end{eqnarray}
where the last inequality holds because the function $g(x)=\| x \|^2$ is convex. Respectively replacing (\ref{lemma1:2}) with (\ref{lemma2:3}), $L$ with $L_\gamma$ and $x$ with $x^*$, going through the proof process in Lemma~\ref{lemma1} again, we get 
\begin{eqnarray} \label{lemma2:4}
  && 2 \eta_s \sum_{t=0}^{T_s-1} \mathbb{E} \left( f_{x_s}^\gamma(\hat{x}_t) - f_{x_s}^\gamma(x^*)  \right) - \frac{3 \eta_s^2}{2} \sum_{t=0}^{T_s-1} \mathbb{E} \left \| \nabla f_{x_s}^\gamma(\hat{x}_t)\right\|^2 
  \nonumber\\
  &&- \frac{(\eta_s L_{\gamma} + 3 \eta_s^2 L_{\gamma}^2)(k_s-1)}{1 - 2k_s^2 \eta_s^2 L_{\gamma}^2} 8 k_s\eta_s^2 L_{\gamma} \sum_{t=0}^{T_s-1}
  \mathbb{E} \mathcal{D}_{f_{x_s}^\gamma}(\hat{x}_t, x^*)
  \nonumber\\
  &\leq& \| \hat{x}_{0} - x^* \|^2 - \eta_s(\gamma^{-1} - \rho) \sum_{t=0}^{T_s-1} \mathbb{e} \| \hat{x}_t - x^* \|^2
  \nonumber\\
  &&+ \frac{(\eta_s L_{\gamma} + 3 \eta_s^2 L_{\gamma}^2)(k_s-1) }{1 - 2k_s^2 \eta_s^2 L_{\gamma}^2} ( T_s \eta_s^2 \sigma^2 + 4T_s k_s \eta_s^2 \zeta_{f_{x_s}^\gamma}^*) + \frac{T_s \eta_s^2 \sigma^2}{N},
\end{eqnarray}
where $\mathcal{D}_{f_{x_s}^\gamma}(\hat{x}_t, x^*) = f_{x_s}^\gamma(\hat{x}_t) - f_{x_s}^\gamma(x^*) - \langle \nabla f_{x_s}^\gamma(x^*), \hat{x}_t - x^* \rangle$ and $\zeta_{f_{x_s}^\gamma}^* = \frac{1}{N} \sum_{i=1}^N \| \nabla f_i(x^*) + \frac{x^* - x_s}{\gamma} \|^2$. 
We bound $\| \nabla f_{x_s}^\gamma(\hat{x}_t)\|^2$ as
\begin{eqnarray}\label{lemma2:5}
  \| \nabla f_{x_s}^\gamma(\hat{x}_t)\|^2 &=& \| \nabla f_{x_s}^\gamma(\hat{x}_t) - \nabla f_{x_s}^\gamma(x^*) + \nabla f_{x_s}^\gamma(x^*)\|^2
  \nonumber\\
  &\leq& 2 \| \nabla f_{x_s}^\gamma(\hat{x}_t) - \nabla f_{x_s}^\gamma(x^*) \|^2 + 2\| \nabla f_{x_s}^\gamma(x^*) \|^2
  \nonumber\\
  &\leq& 4L_{\gamma} \mathcal{D}_{f_{x_s}^\gamma}(\hat{x}_t, x^*) + \frac{2}{\gamma^2} \| x^* - x_s \|^2,
\end{eqnarray}
where the last inequality comes from Lemma~\ref{smooth_basic}.
As $\frac{1}{N} \sum_{i=1}^N \nabla f_{i}(x^*) = \nabla f(x^*) = 0$, we have
\begin{eqnarray}\label{lemma2:6}
  \zeta_{f_{x_s}^\gamma}^* &=& \frac{1}{N} \sum_{i=1}^N \| \nabla f_i(x^*) + \frac{x^* - x_s}{\gamma} \|^2
  \nonumber\\
  &=& \frac{1}{N} \sum_{i=1}^N \| \nabla f_i(x^*)\|^2 + \|\frac{x^* - x_s}{\gamma} \|^2
  \nonumber\\
  &=& \zeta_f^* + \frac{1}{\gamma^2} \|x^* - x_s\|^2
\end{eqnarray}
and
\begin{eqnarray}\label{lemma2:7}
  \mathcal{D}_{f_{x_s}^\gamma}(\hat{x}_t, x^*) &=& f_{x_s}^{\gamma}(\hat{x}_t) - f_{x_s}^{\gamma}(x^*) + \frac{1}{\gamma} \langle x^* - x_s, x^* - \hat{x}_t \rangle
  \nonumber\\
  &\overset{(a)}{=}& f_{x_s}^{\gamma}(\hat{x}_t) - f_{x_s}^{\gamma}(x^*) + \frac{1}{2 \gamma} \left( \|x^* - x_s \|^2 - \| x_s - \hat{x}_t \|^2 + \| x^* - \hat{x}_t \|^2 \right),
\end{eqnarray}
where ($a$) is based on the fact that $\langle x-y, x-z \rangle = \frac{1}{2} \| x-y \|^2 - \frac{1}{2} \| y-z \|^2 + \frac{1}{2} \| x - z \|^2$. 
Substituting (\ref{lemma2:5}), (\ref{lemma2:6}), (\ref{lemma2:7}) into (\ref{lemma2:4}) and taking the expectation regarding $t$, we get
\begin{eqnarray}\label{lemma2:8}
  && T_s \left( 2\eta_s - 6 \eta_s^2 L_{\gamma} - 8A_{\gamma} k_s\eta_s^2 L_{\gamma} \right)  \left( f_{x_s}^\gamma(x_{s+1}) - f_{x_s}^\gamma(x^*) \right) 
  \nonumber\\
  &&- \left(\frac{3 \eta_s^2 T_s}{\gamma^2} + \frac{3\eta_s^2 L_{\gamma} T_s}{\gamma} + \frac{4A_{\gamma} k_s \eta_s^2 L_{\gamma} T_s}{\gamma} \right) \| x^* - x_s \|^2
  \nonumber\\
  &\leq&  (1 + \frac{4A_{\gamma} k_s \eta_s^2 T_s}{\gamma^2} )\| x_s - x^* \|^2 + \left(\frac{4A_{\gamma} k_s \eta_s^2 L_{\gamma} }{\gamma} + \frac{3 \eta_s^2 L_{\gamma}}{\gamma} - \eta_s(\gamma^{-1} - \rho) \right) \sum_{t=0}^{T_s-1} \| \hat{x}_t - x^* \|^2
  \nonumber\\
  &&+ A_{\gamma} T_s \eta_s^2( \sigma^2 + 4 k_s \zeta_f^*) + \frac{T_s \eta_s^2 \sigma^2}{N},
\end{eqnarray}
where $A_{\gamma} = \frac{(\eta_s L_{\gamma} + 3 \eta_s^2 L_{\gamma}^2)(k_s-1)}{1 - 2k_s^2 \eta_s^2 L_{\gamma}^2}$. 
Setting $\gamma = \frac{1}{2\rho}$, $\eta_s \leq \frac{1}{12L_{\gamma}}$ and $\eta_s k_s \leq \frac{1}{9L_{\gamma}}$, we have
\begin{eqnarray}\label{lemma2:9}
  A_\gamma k_s \eta_s^2 = \frac{(\eta_s L_{\gamma} + 3 \eta_s^2 L_{\gamma}^2)(k_s-1)}{1 - 2k_s^2 \eta_s^2 L_{\gamma}^2} k_s \eta_s^2 \leq \frac{(\eta_s + \frac{\eta_s}{4})k_s^2\eta_s^2L_{\gamma}^2 }{(1 - \frac{2}{81}) L_{\gamma}} \leq \frac{\frac{5\eta_s}{4}}{\frac{79}{81}L_{\gamma}} \frac{1}{81} = \frac{5\eta_s}{316L_{\gamma}},
\end{eqnarray}
\begin{eqnarray}\label{lemma2:10}
  2 \eta_s - 6 \eta_s^2 L_\gamma - 8 A_\gamma k_s \eta_s^2 L_\gamma \geq 2 \eta_s - \frac{\eta_s}{2} - \frac{10}{79} \eta_s \geq \frac{4}{3}\eta_s
\end{eqnarray}
and
\begin{eqnarray}\label{lemma2:11}
  \frac{4A_\gamma k_s \eta_s^2 L_\gamma}{\gamma} + \frac{3 \eta_s^2 L_\gamma}{\gamma} - \eta_s(\gamma^{-1} - \rho) \leq \frac{10\eta_s \rho}{79} + \frac{\eta_s \rho}{2} - \eta_s \rho \leq 0.
\end{eqnarray}
Substituting (\ref{lemma2:9}), (\ref{lemma2:10}) and (\ref{lemma2:11}) into (\ref{lemma2:8}) yields
\begin{eqnarray}\label{lemma2:12}
  &&\frac{4\eta_s T_s}{3} \left( f_{x_s}^\gamma(x_{s+1}) - f_{x_s}^\gamma(x^*)  \right) 
  \nonumber\\
  &\leq& \left( 1 + \frac{20T_s\eta_s \rho^2}{79 L_{\gamma}} + 12 \eta_s^2 T_s \rho^2 + 6 \eta_s^2 L_\gamma T_s \rho + \frac{10 T_s \eta_s \rho}{79} \right)  \|x_s - x^* \|^2 
  \nonumber\\
  &&+ \frac{3}{2} T_s \eta_s^3 L_{\gamma} (k_s-1) (\sigma^2 + 4k_s \zeta_f^*) + \frac{T_s \eta_s^2 \sigma^2}{N}.
\end{eqnarray}
By the definition of $f_{x_s}^\gamma(x)$ and $\gamma^{-1} = 2\rho$, we have
\begin{eqnarray}\label{lemma2:13}
  f_{x_s}^\gamma(x_{s+1}) - f_{x_s}^\gamma(x^*) &=& f(x_{s+1}) - f(x^*) + \rho \| x_{s+1} - x_{s} \|^2 - \rho \| x^* - x_s \|^2
  \nonumber\\
  &\geq& f(x_{s+1}) - f(x^*)  - \rho \| x^* - x_s \|^2.
\end{eqnarray}
Substituting (\ref{lemma2:13}) into (\ref{lemma2:12}) and rearranging the result further, we get
\begin{eqnarray}\label{lemma2:14}
  &&\frac{4\eta_s T_s}{3} \left( f(x_{s+1}) - f(x^*)  \right) 
  \nonumber\\
  &\leq& \left( 1 + \frac{20T_s\eta_s \rho^2}{79 L_{\gamma}} + 12 \eta_s^2 T_s \rho^2 + 6 \eta_s^2 L_\gamma T_s \rho + \frac{10 T_s \eta_s \rho}{79} + \frac{4 \eta_s T_s \rho}{3}\right)  \|x_s - x^* \|^2 
  \nonumber\\
  &&+ \frac{3}{2} T_s \eta_s^3 L_{\gamma} (k_s-1) (\sigma^2 + 4k_s \zeta_f^*) + \frac{T_s \eta_s^2 \sigma^2}{N}.
  \nonumber
\end{eqnarray}
Dividing by $\frac{4\eta_s T_s}{3}$ on both sides of the above inequality yields
\begin{eqnarray}\label{lemma2:15}
  f(x_{s+1}) - f(x^*)  &\leq& \left( \frac{3}{4 \eta_s T_s} + \frac{15 \rho^2}{79 L_{\gamma}} + 9 \eta_s \rho^2 + \frac{9 \eta_s L_\gamma \rho}{2} + \frac{15 \rho}{158} + \rho \right)  \|x_s - x^* \|^2 
  \nonumber\\
  &&+ \frac{3}{2} \eta_s^2 L_{\gamma} (k_s-1) (\sigma^2 + 4k_s \zeta_f^*) + \frac{3\eta_s \sigma^2}{4N}.
  \nonumber
\end{eqnarray}
As $L \geq \rho$, we have $L_{\gamma} = L + \frac{1}{\gamma} \geq 3 \rho$, $\eta_s \leq \frac{1}{12L_\gamma} \leq \frac{1}{36\rho} $ and
\begin{eqnarray}\label{lemma2:16}
  f(x_{s+1}) - f(x^*)  \leq \left( \frac{3}{4 \eta_s T_s} +  \frac{1127 \rho}{632} \right)  \|x_s - x^* \|^2 + \frac{3}{2} \eta_s^2 L_{\gamma} (k_s-1) (\sigma^2 + 4k_s \zeta_f^*) + \frac{3\eta_s \sigma^2}{4N}.
  \nonumber
\end{eqnarray}
\end{proof}

\paragraph{Proof of Theorem~3}
\begin{proof}
Since $f(x)$ satisfies the PL condition with parameter $\mu$, we have
\begin{eqnarray}\label{theo4:1}
  \frac{\mu}{2}\| x - x^* \|^2 \leq f(x) - f(x^*).
\end{eqnarray}
Combining (\ref{theo4:1}) with the result of Lemma~\ref{lemma_section4_2}, we have
\begin{eqnarray}\label{theo4:2}
  &&f(x_{s+1}) - f(x^*)  
  \nonumber\\
  &\leq& \left( \frac{3}{4 \eta_s T_s} +  \frac{1127 \rho}{632} \right)  \|x_s - x^* \|^2 + \frac{3}{2} \eta_s^2 L_{\gamma} (k_s-1) (\sigma^2 + 4k_s \zeta_f^*) + \frac{3\eta_s \sigma^2}{4N}
  \nonumber\\
  &\leq& \left( \frac{3}{4 \eta_s T_s} +  \frac{1127 \rho}{632} \right) \frac{2}{\mu} \left( f(x_s) - f(x^*) \right) + \frac{3}{2} \eta_s^2 L_{\gamma} (k_s-1) (\sigma^2 + 4k_s \zeta_f^*) + \frac{3\eta_s \sigma^2}{4N}.
\end{eqnarray}
According to the parameter settings in \textbf{Option~1} of Algorithm~3, we have
\begin{eqnarray}\label{theo4:3}
  \eta_s T_s = \frac{\eta_1}{2^{s-1}} \cdot 2^{s-1}T_1 = \eta_1 T_1 = \frac{6}{\rho} \label{setting_eta_T_4}
\end{eqnarray}  
and
\begin{eqnarray}
  k_s &=& \begin{cases}
    (\sqrt{2})^{s-1} k_1\\
    2^{s-1} k_1
   \end{cases}
   \leq \begin{cases}
   (\sqrt{2})^{s-1}\min\{  \frac{\sigma}{\sqrt{6 \eta_1 L_{\gamma} N(\sigma^2 + 4\zeta_f)}}, \frac{1}{9 \eta_1 L_{\gamma}} \}\\
   2^{s-1}\min\{  \frac{1}{6 \eta_1 L_{\gamma} N}, \frac{1}{9 \eta_1 L_{\gamma}} \}
   \end{cases}
   \nonumber\\
   &=& \begin{cases}
   \min\{  \frac{\sigma}{\sqrt{6 \eta_s L_{\gamma} N(\sigma^2 + 4\zeta_f)}}, \frac{1}{9 (\sqrt{2})^{s-1}\eta_s L_{\gamma}} \}\\
   \min\{  \frac{1}{6 \eta_s L_{\gamma} N}, \frac{1}{9 \eta_s L_{\gamma}} \}
   \end{cases}
   \nonumber\\
   &\leq& \begin{cases}
    \min\{  \frac{\sigma}{\sqrt{6 \eta_s L_{\gamma} N(\sigma^2 + 4\zeta_f)}}, \frac{1}{9 \eta_s L_{\gamma}} \}, ~{\rm Non\text{-}IID~case},\\
    \min\{  \frac{1}{6 \eta_s L_{\gamma} N}, \frac{1}{9 \eta_s L_{\gamma}} \},~~~~~~~~~~~~~~~~~~{\rm IID~case}.
   \end{cases}\label{setting_k_4}
\end{eqnarray}
Similar to the proof of (\ref{theo1:11}) and (\ref{theo1:12}), we have
\begin{eqnarray}\label{theo4:4}
  \frac{3}{2} \eta_s^2 L_{\gamma} (k_s-1) (\sigma^2 + 4k_s \zeta_f^*) \leq \frac{\eta_s \sigma^2}{4N}.
\end{eqnarray}
Substituting (\ref{setting_eta_T_4}) and (\ref{theo4:4}) into (\ref{theo4:2}), according to $\mu \geq 16 \rho$, we have
\begin{eqnarray}\label{theo4:5}
  f(x_{s+1}) - f(x^*) &\leq& \left( \frac{3}{4 \eta_s T_s} +  \frac{1127 \rho}{632} \right) \frac{2}{\mu} \left( f(x_s) - f(x^*) \right) + \frac{\eta_s \sigma^2}{N}.
  \nonumber\\
  &=& \left( \frac{\rho}{8} + \frac{1127\rho}{632} \right) \frac{2}{\mu} \left( f(x_s) - f(x^*) \right) + \frac{\eta_1 \sigma^2}{2^{s-1} N}
  \nonumber\\
  &\leq& \frac{1}{4} \left( f(x_s) - f(x^*) \right) + \frac{\eta_1 \sigma^2}{2^{s-1} N}.
\end{eqnarray}
Note that the formula of (\ref{theo4:5}) is the same as (\ref{theo2:4}). Thus, the rest of the proof is a duplicate to that of Theorem~2.
\end{proof}

\subsection{Proof for result of $\text{STL-SGD}^{nc}$ with \textbf{Option~2}}
\paragraph{Proof of Theorem~4}
\begin{proof}
  For convenience of analysis, we let $x_{s}^*$ denote the optimal solution of the objective used in the $s$-th stage $f_{x_s}^{\gamma}(x)$. According to (\ref{lemma2:1}) and (\ref{lemma2:2}), we have that $f_{x_s}^{\gamma}$ is $L_\gamma$-smooth and the variance of its stochastic gradients is bounded by $\sigma^2$. We set $\eta_1 \leq \frac{1}{6L_\gamma}$, $k_1 =  \min\{ \frac{1}{6 \eta_1 L_{\gamma} N}, \frac{1}{9 \eta_1 L_{\gamma}}\}$ when $\zeta_f^* = 0$ and $k_1 = \min\{ \frac{\sigma}{\sqrt{6 \eta_1 L_{\gamma} N(\sigma^2 + 4\zeta_f^*)}}, \frac{1}{9 \eta_1 L_{\gamma}} \} $ when $\zeta_f^* \neq 0$. 
  As $\eta_s = \eta_1 / s$ and $k_s = \begin{cases}
    s k_1,~~~~~~\textrm{IID case}\\
    \sqrt{s} k_1,~~~\textrm{else}
  \end{cases}$
  , we have 
  \begin{eqnarray}
    \eta_s \leq \frac{1}{6L_\gamma}
  \end{eqnarray}
  and
  \begin{eqnarray}\label{theo5:1}
    k_s \leq \begin{cases}
      \min\{ \frac{1}{6 \eta_s L_{\gamma} N}, \frac{1}{9 \eta_s L_{\gamma}}\},~~~~~~~~~~~~~~~~~~~~\textrm{IID case},\\
      \min\{ \frac{\sigma}{\sqrt{6 \eta_s L_{\gamma} N(\sigma^2 + 4\zeta_f^*)}}, \frac{1}{9 \eta_s L_{\gamma}} \},~~~\textrm{else}.
    \end{cases}
  \end{eqnarray}
  By setting $\gamma^{-1} > \rho$, we can ensure that $f_{x_s}^{\gamma}$ is strongly convex. Based on these settings, we apply Theorem~1 in each call of Local-SGD in $\text{STL-SGD}^{nc}$:
  \begin{eqnarray}\label{theo5:2}
    f_{x_s}^{\gamma}(x_{s+1}) - f_{x_s}^{\gamma}(x_s^*) \leq \frac{3\| x_s - x_s^* \|^2}{4\eta_s T_s} + \frac{\eta_s \sigma^2}{N}.
  \end{eqnarray}
  Under the definition $f_{x_s}^{\gamma}(x_{s+1}) = f(x_{s+1}) + \frac{1}{2\gamma} \| x_{s+1} - x_s \|^2$, and the strong convexity $f_{x_s}^{\gamma}(x_s) - f_{x_s}^{\gamma}(x_s^*) \geq \frac{\gamma^{-1}-\rho}{2} \| x_s - x_s^* \|^2$, we have
  \begin{eqnarray}\label{theo5:3}
    f(x_{s+1}) + \frac{1}{2\gamma} \| x_{s+1} - x_s \|^2 + \frac{\gamma^{-1}-\rho}{2} \| x_s - x_s^* \|^2 - f(x_s) \leq \frac{3\| x_s - x_s^* \|^2}{4\eta_s T_s} + \frac{\eta_s \sigma^2}{N}.
  \end{eqnarray}
  Setting $\gamma^{-1} = 2 \rho$ and rearranging (\ref{theo5:3}) yields
  \begin{eqnarray}\label{theo5:4}
    \rho \| x_{s+1} - x_s \|^2 + \frac{\rho}{2} \| x_s - x_s^* \|^2 \leq f(x_s) - f(x_{s+1}) + \frac{3\| x_s - x_s^* \|^2}{4\eta_s T_s} + \frac{\eta_s \sigma^2}{N}.
  \end{eqnarray}
  As $\eta_s = \eta_1 / s$, $T_s = s T_1$ and $\eta_1 T_1 = \frac{3}{\rho}$, we have
  \begin{eqnarray}\label{theo5:5}
    \rho \| x_{s+1} - x_s \|^2 + \frac{\rho}{4} \| x_s - x_s^* \|^2 \leq f(x_s) - f(x_{s+1})  + \frac{\eta_1 \sigma^2}{s N}.
  \end{eqnarray}
According to the $L_\gamma$-smoothness of $f_{x_s}^{\gamma}(x)$, we have
\begin{eqnarray}\label{theo5:6}
  \| \nabla f(x_s) \|^2 = \| \nabla f_{x_s}^{\gamma}(x_s) \|^2
  = \| \nabla f_{x_s}^{\gamma}(x_s) -  \nabla f_{x_s}^{\gamma}(x_s^*)\|^2
  \leq L_{\gamma}^2 \| x_s - x_s^* \|^2.
\end{eqnarray}
Combining (\ref{theo5:5}) and (\ref{theo5:6}) yields
\begin{eqnarray}\label{theo5:7}
  \frac{\rho}{4 L_{\gamma}^2} \| \nabla f(x_s) \|^2 \leq \frac{\rho}{4} \| x_s - x_s^* \|^2 \leq f(x_s) - f(x_{s+1})  + \frac{\eta_1 \sigma^2}{s N}.
\end{eqnarray}
Define $w_s = s$ and $\Delta_s = f(x_s) - f(x_{s+1})$. Multiplying both sides by $w_s$, we have
\begin{eqnarray}\label{theo5:8}
  \frac{\rho w_s}{4 L_{\gamma}^2} \| \nabla f(x_s) \|^2  \leq w_s \Delta_s  + \frac{ w_s \eta_1 \sigma^2}{s N}.
\end{eqnarray}
After telescoping (\ref{theo5:7}) for $s = 1, 2, \cdots, S$, we get
\begin{eqnarray}\label{theo5:9}
  \sum_{s=1}^S w_s \| \nabla f(x_s) \|^2 \leq \frac{4 L_{\gamma}^2}{\rho} \left( \sum_{s=1}^S w_s \Delta_s  + \sum_{s=1}^S \frac{ w_s \eta_1 \sigma^2}{s N} \right).
\end{eqnarray}
Taking the expectation w.r.t $s \in \{1, 2, \cdots, S\}$ with probability $p_s = \frac{s}{1+2+\cdots+S}$, we have
\begin{eqnarray}\label{theo5:10}
  \mathbb{E} \| \nabla f(x_s) \|^2 \leq \frac{4 L_{\gamma}^2}{\rho} \left( \frac{\sum_{s=1}^S w_s \Delta_s}{\sum_{s=1}^S w_s}  + \frac{\sum_{s=1}^S \frac{ w_s \eta_1 \sigma^2}{s N}}{\sum_{s=1}^S w_s} \right).
\end{eqnarray}
Based on the definition of $w_s$ and $\Delta_s$, setting $w_0 = 0$, we have
\begin{eqnarray}\label{theo5:11}
  \sum_{s=1}^S w_s \Delta_s &=& \sum_{s=1}^S w_s \left( f(x_s) - f(x_{s+1}) \right) 
  =\sum_{s=1}^S f(x_s) - S f(x_{S+1})  
  \nonumber\\
  &\leq& S (f(\bar{x}) - f(x_{S+1})) \leq w_S (f(\bar{x}) - f(x^*)),
\end{eqnarray}
where $\bar{x} = arg \max_{x_i, i \in [S]} f(x_i)$. Substituting (\ref{theo5:11}) into (\ref{theo5:10}), we get
\begin{eqnarray}\label{theo5:12}
  \mathbb{E} \| \nabla f(x_s) \|^2 &\leq& \frac{4 L_{\gamma}^2}{\rho} \left( \frac{w_S (f(\bar{x}) - f(x^*))}{\sum_{s=1}^S w_s}  + \frac{\sum_{s=1}^S \frac{ w_s \eta_1 \sigma^2}{s N}}{\sum_{s=1}^S w_s} \right)
  \nonumber\\
  &=& \frac{8 L_{\gamma}^2}{\rho} \left( \frac{f(\bar{x}) - f(x^*)}{S+1}  + \frac{ \eta_1 \sigma^2}{(S+1) N} \right).
\end{eqnarray}
As $T_s = s T_1$, we have
\begin{eqnarray}\label{theo5:15}
  T = T_1 + T_2 + \cdots + T_S = T_1 (1 + 2 + \cdots + S) = T_1 \frac{S(S+1)}{2} \leq T_1 \frac{(S+ 1)^2}{2}.
\end{eqnarray}
Substituting $S+1 \geq \sqrt{\frac{2T}{T_1}}$ into (\ref{theo5:12}), we get
\begin{eqnarray}\label{theo5:16}
  \mathbb{E} \| \nabla f(x_s) \|^2 &\leq& \frac{8 L_{\gamma}^2}{\rho} \left( \frac{(f(\bar{x}) - f(x^*))}{\sqrt{\frac{2T}{T_1}} }  + \frac{ \eta_1 \sigma^2}{\sqrt{\frac{2T}{T_1}} N} \right)
  \nonumber\\
  &=& O \left( \frac{\left( f(\bar{x}) - f(x^*) \right) \sqrt{T_1}}{\sqrt{T}} + \frac{\sqrt{T_1} \eta_1 \sigma^2}{N \sqrt{T}} \right)
  \nonumber\\
  &=& O \left( \frac{f(\bar{x}) - f(x^*) }{\sqrt{T \eta_1}} + \frac{\sqrt{\eta_1}\sigma^2}{N \sqrt{T}} \right),
\end{eqnarray}
where the last equality holds since $\eta_1 T_1 = 3 / \rho$. We use $\eta_1^N$ to denote the learning rate when using $N$ clients. Setting $\eta_1^N = N \eta_1^1$ yields
\begin{eqnarray}\label{theo5:17}
  \mathbb{E} \|\nabla f(x_s) \|^2 \leq O \left( \frac{1}{\sqrt{N T}} \right),
\end{eqnarray}
which completes the proof.

\end{proof}

\end{document}